%% file: main.tex
\documentclass{article}

\newif\ifpreprint
\preprinttrue

\usepackage{iclr2027_conference,times}

\usepackage{latexsym}
\usepackage[T1]{fontenc}
\usepackage[utf8]{inputenc}
\usepackage{microtype}
\usepackage{inconsolata}
\usepackage{graphicx}
\usepackage{amsmath}
\usepackage{amssymb}
\usepackage{amsthm}
\usepackage{algorithm}
\usepackage{algpseudocode}
\usepackage{booktabs}
\usepackage{enumitem}
\usepackage{float}
\usepackage[framemethod=TikZ]{mdframed}
\usepackage{xcolor}
\usepackage{colortbl}
\usepackage{hyperref}
\usepackage{url}

\newtheorem{theorem}{Theorem}
\newtheorem{corollary}{Corollary}
\newtheorem{proposition}{Proposition}

\mdfsetup{skipbelow=-4pt}
\mdfdefinestyle{theorembox}{%
  linewidth=1pt,
  linecolor=blue!50!black,
  backgroundcolor=blue!5,
  roundcorner=4pt,
  skipabove=3pt,
  innertopmargin=4pt,
  innerbottommargin=2pt,
  splittopskip=4pt,
  splitbottomskip=4pt,
  innerleftmargin=8pt,
  innerrightmargin=8pt,
  nobreak=false,
}
\mdfdefinestyle{promptbox}{%
  linewidth=0.8pt,
  linecolor=blue!45!black,
  backgroundcolor=blue!3,
  roundcorner=3pt,
  skipabove=6pt,
  innertopmargin=6pt,
  innerbottommargin=6pt,
  splittopskip=4pt,
  splitbottomskip=4pt,
  innerleftmargin=10pt,
  innerrightmargin=10pt,
  nobreak=false,
}
\surroundwithmdframed[style=theorembox]{theorem}
\surroundwithmdframed[style=theorembox]{corollary}
\surroundwithmdframed[style=theorembox]{proposition}

\newcommand{\policychange}[2]{\textcolor{#1}{\tiny\,(#2)}}
\newcommand{\policygain}[1]{\policychange{green!45!black}{#1}}
\newcommand{\policyloss}[1]{\policychange{red!70!black}{#1}}
\newcommand{\policyeven}[1]{\policychange{black!55}{#1}}

\newcommand{\initialpolicy}[1]{\textcolor{black!55}{#1}}

\newcommand{\papertitle}{Rubric Rewards from Item Response Theory}
\title{\papertitle}

\author{
  \setcounter{footnote}{2}%
  Milad Yazdani\textsuperscript{1,}\thanks{Work done during an internship at Microsoft.}
  \And
  \setcounter{footnote}{1}%
  Yaser Souri\textsuperscript{2,}\thanks{Corresponding author: \href{mailto:yasersouri@microsoft.com}{\texttt{yasersouri@microsoft.com}}.}
  \And
  Xiren Zhou\textsuperscript{2}
  \And
  \setcounter{footnote}{0}%
  Pranit Chawla\textsuperscript{2}
  \AND
  Dena Shahriari\textsuperscript{3}
  \And
  Subhojit Som\textsuperscript{2}
  \And
  Xia Song\textsuperscript{2}
  \AND
  \normalfont\textsuperscript{1}\,Department of Electrical and Computer Engineering, University of British Columbia \\
  \normalfont\textsuperscript{2}\, Microsoft,
  \normalfont\textsuperscript{3}\,School of Biomedical Engineering, University of British Columbia
}

\ifpreprint
  \iclrfinalcopy
  \newcommand{\codeurl}{https://github.com/milad1378yz/rrt}
\else
  \newcommand{\codeurl}{https://anonymous.4open.science/r/rrt-9DC4}
\fi
\begin{document}
\maketitle
\ifpreprint\lhead{\papertitle}\fi

\begin{abstract}
Many language tasks have no single answer that can be checked automatically. Rubrics provide criteria for judging responses to these tasks. For reinforcement learning, the resulting verdicts must be combined into a scalar reward. A common approach sums the points assigned to satisfied criteria. Distinct verdict patterns can thus receive the same reward, and the fixed points encode how much each criterion should count, not how strongly its verdict distinguishes the current rollouts. Beyond this aggregation problem, judging the full rubric needs more judge requests as the criterion count grows. To address these limitations, Rubric Response Theory (RRT) measures quality and selects criteria when rubric criteria are monotone indicators of a shared target. Rather than adding assigned points, RRT uses a two parameter item response model that treats the verdict pattern as evidence about scalar quality specific to the rubric. Under this model, its likelihood score maximizes the local signal-to-noise ratio for quality. Its Response Parameter Network (RPN) reads the prompt and criterion text to predict criterion difficulty and discrimination. As the policy distribution changes during training, RRT uses online expectation maximization to update the RPN from current rollout verdicts. With Qwen3.5-4B as the policy, RRT's macro criterion score across Medical, Science, Rubrics as Rewards Science, and RubricBench is 1.7 points above that of group relative policy optimization (GRPO). On hard and very hard criteria in Medical and Science, RRT gains 2.8 to 5.6 points over GRPO. At half the criterion budget, adaptive Fisher selection with a frozen RPN keeps the macro criterion score across four datasets within 0.1 points of GRPO with full judging. These results show RRT can reduce judge requests while remaining competitive with GRPO. \href{\codeurl}{Code}
\end{abstract}

\input{sections/introduction}

\input{sections/related_works}

\input{sections/method}

\input{sections/experiments}

\input{sections/discussion}

\ifpreprint\else
\section*{Reproducibility Statement}
An anonymous reference implementation and instructions for preparing datasets, judging rollouts, fitting the RPN, computing RRT rewards, and evaluating policies are available at \href{\codeurl}{Code}. The experimental configuration, training parameter choices, computing resources, and checkpoint selection criteria are detailed in Appendix~\ref{sec:experimental-configuration}. The data sources, processing steps, splits, and roles in the experiments are documented in Appendix~\ref{sec:data-sources}. Policy and judge prompts are provided in Appendix~\ref{sec:judge-prompts}. Appendix~\ref{sec:theorems} states the model assumptions and gives complete mathematical derivations and proofs.

\section*{AI Use Statement}
Generative AI tools assisted with the theoretical model, mathematical claims and proofs, hypotheses and experiment design, implementing the method, data preparation, and interpreting results. They were not used to generate datasets. Translation and qualitative analysis are not applicable. They also assisted with code, text drafting and editing, and literature search. The authors made all decisions, reviewed and modified all assisted work, checked every proof step, tested all code, verified all numbers and references, and take responsibility for the final content, including text, claims, or artifacts produced with the aid of generative AI.

\section*{Ethics Statement}
The experiments use public research datasets and responses generated by language models. No new data were collected from human participants. Rubrics and automated judge verdicts can reflect biases in their source data and evaluation criteria. Appendix~\ref{sec:data-sources} documents the data sources and their roles, and Appendix~\ref{sec:judge-agreement-tables} reports agreement across repeated judging.
\fi

\bibliographystyle{iclr2027_conference}
\bibliography{refs}

\input{sections/appendix}

\end{document}

%% file: sections/introduction.tex
\section{Introduction}
\label{sec:introduction}
Reinforcement learning for language models has a well-defined reward when success is automatically verifiable, but many language tasks have no single correct answer. Such tasks use large language models as judges \citep{zheng2023judging} and rubrics specific to each prompt \citep{gunjal2026rubrics,viswanathan2025checklists}. A rubric decomposes an evaluation target into natural language criteria such as required content, reasoning steps, output constraints, and errors to avoid. For policy training, the resulting binary criterion verdicts are mapped to a scalar reward for each rollout. Beyond this aggregation problem, judging the full rubric requires more judge requests as criteria are added.

To produce this scalar reward, the baseline takes a weighted average of satisfied criteria using their assigned points \citep{gunjal2026rubrics,arora2025healthbench}. This additive form can assign the same total to distinct verdict patterns and therefore gives them the same rubric reward in group relative policy optimization (GRPO) \citep{guo2025deepseek}. The assigned points encode how much each criterion should count in the rubric, not how strongly its verdict distinguishes the current rollouts. The contribution of each verdict also does not depend on the rollout's other verdicts.

This work introduces Rubric Response Theory (RRT), which adapts item response theory (IRT) to on-policy reward inference and criterion selection for rubrics specific to each prompt. For rubrics whose criteria are monotone indicators of one shared target, RRT uses the verdicts to estimate rollout quality instead of adding assigned points. Like questions in a test, criteria can differ in difficulty and in how strongly they distinguish responses of different quality. RRT represents each rubric criterion with difficulty and discrimination parameters from IRT \citep{chen2025item}. Difficulty locates a criterion on the quality scale, while discrimination controls how sharply its pass probability changes with quality. The Response Parameter Network (RPN) predicts these parameters from the prompt and criterion text. Using these parameters, RRT finds the quality that best explains the full verdict pattern and uses it as the GRPO reward. Rubric points specify how much a criterion should count, while RRT measures how much its verdict reveals about shared quality. This distinction lets RRT distinguish rollouts that receive the same reward based on rubric points but whose verdict patterns provide different evidence about quality. Because the rollout distribution changes with the policy, RRT uses online hard expectation maximization (EM) to update the RPN from current rollout verdicts. It uses the same parameters to select criteria under a criterion budget.

\begin{figure}[!htbp]
\centering
\includegraphics[width=0.9\textwidth]{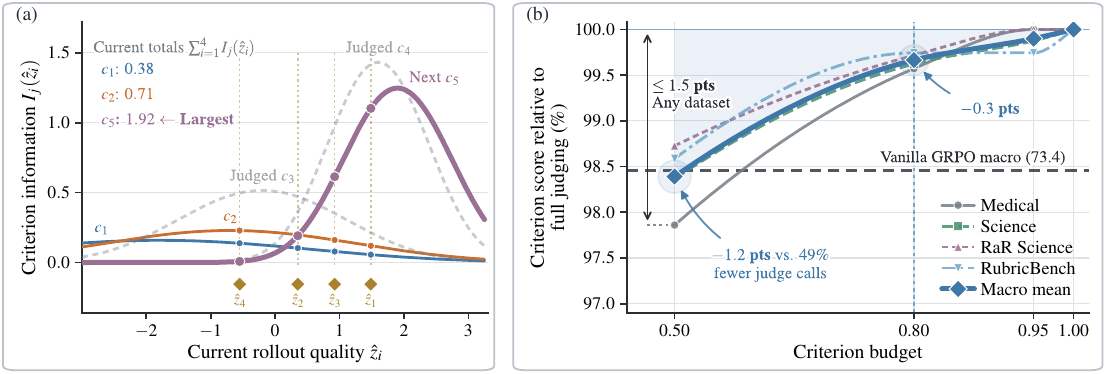}
\caption{Criterion selection in RRT using Fisher information. (a) Criterion information for five criteria and four rollouts after judging two criteria. Gold diamonds show inferred rollout qualities. Dots show information from unjudged criteria at these qualities, dashed gray curves show judged criteria, and $c_5$ is selected next. (b) Criterion scores after RRT with adaptive Fisher selection, normalized to full judging for each dataset. Shading shows the macro gap from full judging.}
\label{fig:opening}
\end{figure}

The contributions are the following.

\begin{itemize}[leftmargin=0.9em,nosep]
  \item RRT formulates rubric aggregation as Bayesian inference, using the posterior mode of quality as a scalar GRPO reward to distinguish rollouts with equal rubric point totals. It combines the likelihood of full verdict patterns with a quality prior. The RPN predicts criterion difficulty and discrimination from prompt and criterion text for unseen rubrics.
  \item RRT provides an online EM procedure to calibrate the RPN from current rollout verdicts as the policy changes. A theoretical analysis establishes that, in RRT's item response model, the rubric likelihood score maximizes the local signal-to-noise ratio (SNR) for small changes in rollout quality among all scalar functions of the verdicts, reaching the Fisher information bound.
  \item RRT extends adaptive Fisher selection to groups of rollouts, reducing judge requests under a criterion budget. It ranks unjudged criteria by total Fisher information at inferred rollout qualities and updates those qualities after each selected criterion is judged.
\end{itemize}

\textbf{Key Findings:} Across Medical, Science, Rubrics as Rewards (RaR) Science, and RubricBench, RRT preserves the gains in macro criterion score from Vanilla GRPO across three base policies. It exceeds Vanilla GRPO by 1.7 points on Qwen3.5-4B. On rollouts from trained policies, adaptive Fisher selection reaches 95.0\% mean Pearson correlation with GRPO advantages from full rubric judging while leaving 21.0\% of criteria unjudged. At half the criterion budget, RRT with adaptive Fisher selection keeps its macro criterion score across all four datasets within 0.1 points of Vanilla GRPO with full judging (Figure~\ref{fig:opening}).

%% file: sections/related_works.tex
\section{Related Work}
\label{sec:related_works}

Rubric and checklist supervision provides training signals specific to each prompt. \citet{gunjal2026rubrics} generate rubrics from reference answers and train with GRPO on a weighted sum of criterion verdicts or one rating from a judge that reads the full rubric. They call fixed weights brittle and leave learned weighting to future work. \citet{viswanathan2025checklists} generate checklists from failure modes in candidate responses and combine judge and program verifier scores with generated importance weights. Both fix criterion weights in advance or leave aggregation to the judge, and judge the full rubric for every response. RRT complements rubric generation. It learns from current rollouts how much each verdict reveals about quality, to compute the reward and to choose which criteria to judge.

Recent work adapts rubric aggregation to current rollouts. POW3R \citep{tyagi2026not} rescales human weights by contrast across rollouts, and DIVA \citep{cook2026check} weights soft criterion scores by their variance across responses. Both keep a weighted sum with one weight per criterion for all rollouts and judge every criterion. RRT replaces the weighted sum. It learns each criterion's difficulty and how well it separates good from poor responses, then finds the overall quality that best explains a rollout's full verdict pattern. Two rollouts with equal point totals can thus receive different rewards. This knowledge also lets RRT choose which criteria to judge and reduce judge requests.

IRT and learned rubric measurement serve calibration, assessment, data selection, and efficient evaluation. \citet{hashemi2024llm} train a network to calibrate a judge's rubric answers to human ratings. \citet{uto2021multidimensional} uses a Rasch model with item and rater effects to assess examinees. \citet{lalor2019learning} fit IRT to many neural models' responses and filter training data by difficulty. Adaptive testing selects questions from a calibrated pool to measure ability with fewer questions \citep{weiss1982improving}. \citet{truong2025reliable} predict question difficulty from text for adaptive testing of language models. Each measures a fixed examinee or model. RRT applies this measurement to policy training, where each rollout's measured quality becomes its reward. From the prompt and criterion text, RRT predicts each criterion's difficulty and how well it separates responses, so it works on unseen rubrics. Training verdicts then refine these estimates, so they stay current as the policy changes. The same estimates pick the most informative criteria for the current rollouts, so RRT judges fewer criteria per prompt.

%% file: sections/method.tex
\section{Method}
\label{sec:method}

\subsection{IRT for Rubric Criteria}
\label{sec:exam}

For a prompt $q$, let rollouts $i=1,\ldots,N$ be judged against rubric criteria $c_1,\ldots,c_K$. Each verdict is encoded as $G_{ij}\in\{0,1\}$, where $1$ denotes satisfaction of a positive criterion or avoidance of a pitfall (Appendix~\ref{sec:judge-prompts}). Let $w_j>0$ be the available points for criterion $j$. The reward based on rubric points is \(R_i^{\mathrm{base}}=\sum_j w_j\,G_{ij}/\sum_j w_j\in[0,1]\). RRT treats each rubric criterion as an item whose verdict provides evidence about scalar quality \(z_i\) for the rubric. The model assumes that every criterion's pass probability increases strictly with \(z_i\). For a fixed rubric and criterion parameters, the expected reward based on rubric points therefore increases strictly with quality (Theorem~\ref{thm:points-monotone} in Appendix~\ref{sec:points-progress-proofs}). The model also assumes local independence, so verdicts are conditionally independent given quality and criterion parameters \citep{chen2025item}.

Criterion \(j\) has difficulty \(b_j\) and discrimination \(a_j\). The RPN \(\psi\) predicts these parameters from the prompt \(q\) and criterion text \(c_j\), so \((a_j,b_j):=\psi(q,c_j)\) with \(a_j>0\). For rollout quality \(z_i\), define \(u_{ij}:=a_j(z_i-b_j)\) and \(P_{ij}:=F(u_{ij})\). The response function \(F:\mathbb{R}\to(0,1)\) is differentiable and strictly increasing, with \(F(0)=0.5\). Thus \(b_j\) is the quality where \(P_{ij}\) crosses \(0.5\) with slope \(a_jF'(0)\). This item response model has two parameters per criterion and lets discrimination vary \citep{birnbaum1968some}. With a logistic response function and \(a_j=1\) for every criterion, it reduces to the Rasch model with one parameter per criterion \citep{rasch1966item}. Appendix~\ref{sec:response-model-properties} states the assumptions on $F$.

\subsection{Bayesian Reward Inference and Online EM}
\label{sec:bayes}

RRT infers quality with the criterion parameters held fixed. Write $c=(c_1,\ldots,c_K)$ for the rubric and $G_i=(G_{i1},\ldots,G_{iK})$ for rollout $i$'s verdict vector. Since $G_{ij}\mid z_i,q,c_j,\psi\sim\mathrm{Bernoulli}(P_{ij})$, local independence gives
\begin{equation}
  p_\psi(G_i\mid z_i,q,c)=\prod\nolimits_{j=1}^{K}p_\psi(G_{ij}\mid z_i,q,c_j)=\prod\nolimits_{j=1}^{K}P_{ij}^{\,G_{ij}}(1-P_{ij})^{\,1-G_{ij}} .
  \label{eq:likelihood}
\end{equation}
RRT places a fixed Gaussian prior with mean zero and variance $\sigma_z^2$ on quality. If $\ell_{ij}(z)$ is the log of criterion $j$'s Bernoulli factor, Bayes' rule gives
\begin{equation}
  p_\psi(z_i\mid G_i,q,c)\overset{\mathrm{Bayes}}{\propto}p(z_i)\prod\nolimits_{j=1}^{K}P_{ij}^{\,G_{ij}}(1-P_{ij})^{\,1-G_{ij}},\quad \ell_i(z):=\sum\nolimits_{j=1}^{K}\ell_{ij}(z)-z^2/(2\sigma_z^2).
  \label{eq:estep}
\end{equation}
RRT uses the standard Gaussian CDF, $F=\Phi$, and the maximum a posteriori (MAP) reward \(R_i=\hat z_i=\arg\max_z\ell_i(z)\) \citep{mislevy1986bayes}. The Gaussian CDF lets criterion difficulty affect reward ordering, while the logistic CDF orders rollouts by $\sum_j a_jG_{ij}$ (Theorems~\ref{thm:logit} and~\ref{thm:gaussian} in Appendix~\ref{sec:response-model-properties}). Figure~\ref{fig:method-overview} shows how the full verdict pattern determines this reward.

\begin{figure}[!htbp]
\centering
\includegraphics[width=0.9\textwidth]{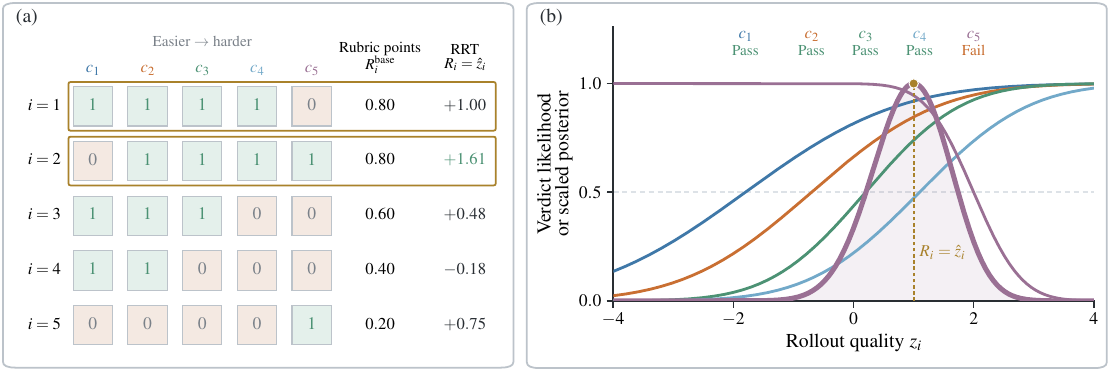}
\caption{RRT reward inference for an illustrative rubric. (a) Verdict vectors for equally weighted criteria ordered by difficulty, with both rewards shown. (b) E-step for rollout $i=1$. The thick curve combines the verdict likelihoods with the Gaussian prior, and its mode is the RRT reward.}
\label{fig:method-overview}
\end{figure}

For fixed criterion parameters, the log posterior is strictly concave and has a unique maximizer. Satisfying an additional criterion increases the MAP reward when the other verdicts and criterion parameters are fixed (Theorem~\ref{thm:concavity} and Corollary~\ref{cor:map-dominance} in Appendix~\ref{sec:map-reward-properties}). RRT finds this maximizer by bisection using the log posterior gradient from Theorem~\ref{thm:general-pass-probability} in Appendix~\ref{sec:response-model-properties}. RRT uses $R_i=\hat z_i$ as the GRPO reward \citep{guo2025deepseek}, while Vanilla GRPO uses $R_i^{\mathrm{base}}$, the normalized points score used in evaluation. Appendix~\ref{sec:grpo-objective} specifies the GRPO update.

As the policy changes during joint training, RRT calibrates $\psi$ from new rollout verdicts using one online EM sweep per policy step \citep{bach2015paired}. The E-step reuses Eq.~\ref{eq:estep} to compute a posterior mode for each rollout. With the posterior modes held fixed, the M-step loss for one prompt group is
\begin{equation}
  \mathcal L(\psi)
  =-\frac{1}{NK}\sum\nolimits_{i=1}^{N}\sum\nolimits_{j=1}^{K}
  \ell_{ij}(\hat z_i;\psi)+\frac{\lambda_a}{K}\sum\nolimits_{j=1}^{K}(\log a_j)^2.
  \label{eq:mstep}
\end{equation}
The regularizer pulls $a_j$ toward one when $\lambda_a>0$. Appendix~\ref{sec:hard-em} gives the EM objective, algorithms, and implementation details.

\subsection{Fisher Information for Aggregation and Selection}
\label{sec:criterion-information}

The item response model used for reward inference also quantifies each criterion's Fisher information about quality. Let $\phi$ be the standard Gaussian density and define the criterion likelihood score as $\mathcal S_{ij}(z_i):=\partial_{z_i}\ell_{ij}(z_i)$.

\begin{theorem}[Fisher information of a rubric criterion]
\label{thm:fisher-criterion}
Under the item response model with $F=\Phi$, the criterion likelihood score has conditional mean zero. Its Fisher information about rollout quality is
\begin{equation}
  I_j(z_i)
  =\mathbb E\!\left[\mathcal S_{ij}(z_i)^2\mid z_i\right]
  =a_j^2\frac{\phi(u_{ij})^2}{P_{ij}(1-P_{ij})}=-\mathbb E\!\left[\frac{\partial^2\ell_{ij}(z_i)}{\partial z_i^2}\,\middle|\,z_i\right].
  \label{eq:fisher-criterion}
\end{equation}
\end{theorem}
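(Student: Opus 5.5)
The plan is a direct computation with the Bernoulli log-likelihood of a single criterion. Fix $z=z_i$ and write $u=u_{ij}=a_j(z-b_j)$, $P=\Phi(u)$. Then $\ell_{ij}(z)=G_{ij}\log P+(1-G_{ij})\log(1-P)$. Since $\partial_z u=a_j$ and $\Phi'=\phi$, we have $\partial_z P=a_j\phi(u)$. The chain rule then gives the score
\[
\mathcal S_{ij}(z)=a_j\phi(u)\left(\frac{G_{ij}}{P}-\frac{1-G_{ij}}{1-P}\right)=\frac{a_j\phi(u)\,(G_{ij}-P)}{P(1-P)} .
\]
This is well defined because $\Phi$ maps into $(0,1)$, so $P(1-P)>0$. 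Conditional on $z$, the only random quantity is $G_{ij}\sim\mathrm{Bernoulli}(P)$. Hence $\mathbb E[G_{ij}-P\mid z]=0$, which gives the zero-mean claim at once.

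For the first equality in Eq.~\ref{eq:fisher-criterion}, square the score and use $\mathbb E[(G_{ij}-P)^2\mid z]=\operatorname{Var}(G_{ij}\mid z)=P(1-P)$. This gives
\[
\mathbb E[\mathcal S_{ij}^2\mid z]=\frac{a_j^2\phi(u)^2}{P(1-P)}.
\]

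For the second equality, differentiate the score once more. Write $\mathcal S_{ij}=h(z)\,(G_{ij}-P)$ with $h(z)=a_j\phi(u)/(P(1-P))$. The product rule gives
\[
\partial_z\mathcal S_{ij}=h'(z)\,(G_{ij}-P)-h(z)\,\partial_zP=h'(z)\,(G_{ij}-P)-\frac{a_j^2\phi(u)^2}{P(1-P)} .
\]
Taking the conditional expectation kills the first term, because $h'$ is deterministic given $z$ and $G_{ij}-P$ has mean zero. Negating the remaining term gives the same expression $a_j^2\phi(u)^2/(P(1-P))$.

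The main point to watch is to avoid expanding $h'$ explicitly; it involves $\phi'(u)=-u\phi(u)$ and derivatives of $1/(P(1-P))$. Structuring the argument as ``deterministic factor times centered Bernoulli'' makes that term vanish without computation. The only regularity needed is that $\Phi$ and $\phi$ are smooth and $P\in(0,1)$. This holds for every finite $u$, so no interchange-of-limits issues arise: the expectation is a two-point sum.
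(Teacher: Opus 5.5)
Your proof is correct and matches the paper's argument. The paper likewise writes the score as $a_j s_\Phi(u_{ij})(G_{ij}-P_{ij})$, which is your $h(z)(G_{ij}-P)$, obtained there by citing Theorem~\ref{thm:general-pass-probability} instead of differentiating directly; it then uses the Bernoulli mean and variance, and gets the last equality from the information identity or, as you do, by differentiating the score and taking the conditional expectation, so your product-rule step spells out what the paper states in one sentence.
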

Appendix~\ref{sec:fisher-information} gives the proof.
Summing criterion information gives a local bound for rubric aggregation. Write $\mathcal S_i(z_i):=\sum_j\mathcal S_{ij}(z_i)$ for the rubric likelihood score and $I(z_i):=\sum_j I_j(z_i)$ for the rubric information.

\begin{theorem}[Local information bound for rubric aggregation]
\label{thm:local-efficiency}
Under $F=\Phi$, fix a quality level $z_i$ and the criterion parameters, and assume conditionally independent verdicts. For a statistic $T(G_i)$ with $0<\operatorname{Var}(T\mid z_i)<\infty$, define its local SNR for quality as
\begin{equation}
  \operatorname{SNR}_T(z_i):=\big(\partial_{z_i}\mathbb E[T\mid z_i]\big)^2\big/\operatorname{Var}(T\mid z_i).
  \label{eq:local-snr}
\end{equation}
Then $\operatorname{SNR}_T(z_i)\le I(z_i)$, with equality if and only if $T-\mathbb E[T\mid z_i]=\gamma\,\mathcal S_i(z_i)$ almost surely for some $\gamma\neq0$. Up to a nonzero affine map, the rubric likelihood score is therefore the unique scalar verdict signal that achieves this bound. For the reward based on rubric points, equality holds if and only if $w_j\propto a_j\phi(u_{ij})/[P_{ij}(1-P_{ij})]$ for every $j$. Once two criteria have different criterion parameters, no fixed vector of rubric points achieves the bound at every quality level.
\end{theorem}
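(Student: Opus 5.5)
The plan is the Cramér--Rao argument in covariance form, applied to the Bernoulli verdict model on the finite sample space $\{0,1\}^K$. Throughout I fix $z_i$ and the criterion parameters. The proof then has three steps: a derivative-as-covariance identity, a Cauchy--Schwarz bound with its equality case, and the specialisation to rubric points.

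\textbf{Step 1: express the signal as a covariance.} Since $G_i$ takes finitely many values, I can differentiate under the sum:
\[
\partial_{z_i}\mathbb E[T\mid z_i]=\sum_g T(g)\,\partial_{z_i}p_\psi(g\mid z_i)=\mathbb E\big[T\,\mathcal S_i(z_i)\mid z_i\big].
\]
Here $\mathcal S_i=\partial_{z_i}\log p_\psi(G_i\mid z_i)$ by Eq.~\ref{eq:likelihood}, using local independence. Theorem~\ref{thm:fisher-criterion} gives $\mathbb E[\mathcal S_{ij}\mid z_i]=0$. Hence $\partial_{z_i}\mathbb E[T\mid z_i]=\operatorname{Cov}(T,\mathcal S_i\mid z_i)$. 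By independence the cross terms $\mathbb E[\mathcal S_{ij}\mathcal S_{ik}]$ with $j\neq k$ vanish. So $\operatorname{Var}(\mathcal S_i\mid z_i)=\sum_j I_j(z_i)=I(z_i)$, again by Theorem~\ref{thm:fisher-criterion}.

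\textbf{Step 2: Cauchy--Schwarz and the equality case.} Cauchy--Schwarz on the centered variables gives
\[
\operatorname{Cov}(T,\mathcal S_i)^2\le \operatorname{Var}(T)\,I(z_i),
\]
which is $\operatorname{SNR}_T\le I(z_i)$. The inequality needs $I(z_i)>0$; this holds because $a_j>0$ and $\phi>0$.

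Equality holds if and only if $T-\mathbb E T$ and $\mathcal S_i$ are almost surely linearly dependent. Because $\operatorname{Var}(T)>0$ and $\operatorname{Var}(\mathcal S_i)>0$, this means $T-\mathbb E T=\gamma\mathcal S_i$ with $\gamma\neq0$. The uniqueness statement up to a nonzero affine map follows directly.

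\textbf{Step 3: rubric points.} First I compute the criterion score. For $F=\Phi$, $\mathcal S_{ij}=a_j\phi(u_{ij})(G_{ij}-P_{ij})/[P_{ij}(1-P_{ij})]$. The centred points reward is $T-\mathbb E T=\sum_j w_j(G_{ij}-P_{ij})/\sum_k w_k$. Every configuration in $\{0,1\}^K$ has positive probability, so "almost surely" means "for every $g$". The centred indicators $G_{ij}-P_{ij}$ are linearly independent functions on $\{0,1\}^K$. Matching coefficients therefore gives equality if and only if $w_j/\sum_k w_k=\gamma\,a_j\phi(u_{ij})/[P_{ij}(1-P_{ij})]$ for all $j$, which is the stated proportionality.

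\textbf{Main obstacle: the final claim.} This is the step I expect to need the most care. Suppose fixed points $w$ achieved the bound at every $z$. Then for two criteria $j,k$ the ratio
\[
r(z)=\frac{a_j h(a_j(z-b_j))}{a_k h(a_k(z-b_k))},\qquad h(u)=\frac{\phi(u)}{\Phi(u)(1-\Phi(u))},
\]
would be constant in $z$.

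I would show it is not constant using tail asymptotics of $h$. Mills' ratio gives $h(u)\sim|u|$ as $|u|\to\infty$. Hence $\log h(a(z-b))=a\,|z|\,(1+o(1))$-type growth is absent, so the comparison has to come from polynomial terms and their corrections. Concretely, $h(u)=|u|+O(1/|u|)$, so $r(z)\to a_j^2/a_k^2$ as $z\to\pm\infty$. This forces $a_j=a_k$.

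With $a_j=a_k=a$, the ratio becomes $h(a(z-b_j))/h(a(z-b_k))$. I would then use that $h$ is even and strictly decreasing on $[0,\infty)$, which I would check via the known convexity properties of Mills' ratio. Evaluating at $z=b_j$ and at $z=b_k$ gives reciprocal values $h(0)/h(a(b_j-b_k))$ and $h(a(b_k-b_j))/h(0)$. These are both equal to the same constant only if $b_j=b_k$.

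Hence different parameters rule out any fixed $w$ achieving the bound at every quality level.
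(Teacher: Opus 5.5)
Your Steps 1--3 follow the paper's proof: the derivative of $\mathbb E[T\mid z_i]$ written as $\operatorname{Cov}(T,\mathcal S_i\mid z_i)$, additivity of $\operatorname{Var}(\mathcal S_i\mid z_i)$ under conditional independence, Cauchy--Schwarz with its equality case, and coefficient matching for the points reward. Your linear independence of the centred indicators does the job of the paper's variance-of-the-difference argument.

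The gap is in the final clause. From $h(u)=|u|+O(1/|u|)$ you correctly get $r(z)\to a_j^2/a_k^2$. But the limits at $+\infty$ and $-\infty$ coincide, so this only identifies the constant as $r_0=a_j^2/a_k^2$. Nothing yet forces $a_j=a_k$. The information you need is in the constant term of the expansion, which you discard by passing to the limit of the ratio. The paper writes $a_js_\Phi(a_j(z-b_j))=a_j^2(z-b_j)+O(z^{-1})$. Constancy of the ratio then gives $(a_j^2-r_0a_k^2)z-(a_j^2b_j-r_0a_k^2b_k)\to0$, which yields both $r_0=a_j^2/a_k^2$ and $b_j=b_k$. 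Only then does evaluating at $z=b_j$ give $r_0=a_j/a_k$, since the factors $s_\Phi(0)$ cancel. The two expressions for $r_0$ then force $a_j=a_k$. The order is the reverse of yours: difficulties first from the asymptotics, discriminations second from one evaluation.

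Your second sub-step also uses a false fact: $h=s_\Phi$ is not decreasing on $[0,\infty)$. It attains its minimum $4/\sqrt{2\pi}$ at $u=0$ and grows like $|u|$, as your own asymptotic says. The function that decreases away from the origin is the information profile $\phi\,s_\Phi$. What your argument needs is strict monotonicity on $[0,\infty)$, and this does hold in the increasing direction: $h'(u)=\lambda'(u)-\lambda'(-u)>0$ for $u>0$ by strict convexity of $\lambda(x)=\phi(x)/\Phi(x)$.

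With the corrected fact you could even repair the first sub-step without the paper's expansion. The constancy $r\equiv a_j^2/a_k^2$ is the identity $h(a_j(z-b_j))/a_j=h(a_k(z-b_k))/a_k$. Evaluating it at $z=b_j$ and $z=b_k$, using $h\ge h(0)$, gives $a_k\ge a_j$ and $a_j\ge a_k$. After that, $h(a_k(b_j-b_k))=h(0)$ gives $b_j=b_k$. As written, however, neither sub-step of the final claim is justified.
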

Appendix~\ref{sec:points-progress-proofs} gives the proof.
The MAP reward combines the rubric likelihood score and prior through realized posterior curvature. A Fisher scoring surrogate uses the expected local precision at a common quality level (Theorem~\ref{thm:local-advantage} and Eq.~\ref{eq:local-advantage} in Appendix~\ref{sec:points-progress-proofs}). Appendix~\ref{sec:local-efficiency} derives the statistic with minimum variance subject to unit local response to quality and the expected local precision.

Fisher information also guides criterion selection under a budget. It measures expected information before judging, while the likelihood score measures evidence from an observed verdict (Figure~\ref{fig:information-profiles} in Appendix~\ref{sec:fisher-information}). Following classical adaptive testing \citep{weiss1982improving}, RRT uses the criterion information from Theorem~\ref{thm:fisher-criterion} to rank unjudged criteria by $\sum_i I_j(\hat z_i)$ at the qualities currently inferred for the rollout group. The qualities are updated after each selected criterion is judged.

%% file: sections/experiments.tex
\section{Experiments}
\label{sec:experiments}

The experiments are designed to address the three following research questions (RQs).

\begin{description}[leftmargin=2.4em,labelsep=0.5em,nosep]
  \item[RQ1:] How does RRT compare with the baselines in verdict prediction, reward variation, and ordering stability within each prompt?
  \item[RQ2:] How does RRT affect criterion score, normalized points score, and training behavior across datasets and policies?
  \item[RQ3:] What tradeoff does selection based on Fisher information provide between judge usage, reward fidelity, and evaluation scores?
\end{description}

\subsection{Experimental Setup}
\label{sec:setup}

The default base policy is Qwen3.5-4B \citep{qwen2026qwen35}, and the datasets are Medical, Science, Rubrics as Rewards (RaR) Science, and RubricBench. Transfer comparisons also use Qwen3.5-2B and Llama-3.1-8B-Instruct \citep{grattafiori2024llama}. Appendix~\ref{sec:data-sources} gives data sources, splits, and roles. Each condition has one training run, and each policy step uses 32 prompts, eight rollouts per prompt, and one proximal policy optimization (PPO) epoch. Each policy comparison keeps the data, base checkpoint, sampler, judge, and GRPO implementation fixed. GPT-5.5 \citep{openai2026gpt55} with reasoning disabled gives one binary verdict per rollout and criterion. Using these verdicts, Vanilla GRPO trains on the reward $R_i^{\mathrm{base}}$ based on rubric points. The main RRT condition applies one stochastic partial M-step per policy step.

Criterion score is the percentage of rubric criteria satisfied. Normalized points score is the weighted fraction of available points. Scores, ROC-AUC values, correlations, rates, and shares are reported as percentages, and differences between them are percentage points. Macro means are unweighted. Policy scores average three evaluation sampling seeds. Reported confidence intervals are 95\% bootstrap intervals over prompt groups, paired when two conditions are compared on the same groups. They quantify variation across prompt groups with the trained policies fixed. Appendices~\ref{sec:judge-prompts} and~\ref{sec:experimental-configuration} give the policy input and judge prompts, full configuration, and checkpoint selection.

\subsection{Verdict Prediction and Reward Stability}
\label{sec:loco}

Leave-one-criterion-out prediction infers rollout quality from the other $K-1$ verdicts. For their verdicts $G_{i,-j}$ and texts $c_{-j}$, the prediction is \(\widehat P_{ij}^{(-j)}=\int F\!\left(a_j(z-b_j)\right)p_\psi\!\left(z\mid G_{i,-j},q,c_{-j}\right)dz\). Under this protocol, four methods predict $G_{ij}$ from the same rollouts. RRT uses a frozen RPN. Its matched $a_j=1$ baseline adjusts $b_j$ to preserve marginal pass rate. The RPN baseline without rollout evidence uses $\mathbb{E}_{z\sim\mathcal N(0,\sigma_z^2)}[F(a_j(z-b_j))]$. The other baseline averages the remaining $K-1$ verdicts. The evaluation reports ROC-AUC within each criterion and pooled ROC-AUC.
Results are averaged over Qwen3.5-2B and Llama-3.1-8B-Instruct. The macro mean covers the four datasets.

\begin{table}[!htbp]
\centering
\caption{Leave-one-criterion-out verdict prediction by ROC-AUC. Results within each criterion use the RPN without rollout evidence as the reference, and pooled results use the mean of the other verdicts. Blue marks RRT, and green parentheses give gains from the reference.}
\label{tab:loco}
\small
\setlength{\tabcolsep}{1.2pt}
\renewcommand{\arraystretch}{0.9}
\begin{tabular}{@{}lcccc@{\hspace{4pt}\vrule width 0.8pt\hspace{4pt}}c@{}}
\toprule
Prediction method & Science & RaR Science & Medical & RubricBench & Macro mean \\
\midrule
\rowcolor{black!4}
\multicolumn{6}{c}{\footnotesize Rollout evidence: ROC-AUC within each criterion} \\
\midrule
\initialpolicy{RPN without rollout evidence} & \initialpolicy{50.0} & \initialpolicy{50.0} & \initialpolicy{50.0} & \initialpolicy{50.0} & \initialpolicy{50.0} \\
Mean of other verdicts & 69.6 & 67.2 & 63.8 & 62.5 & \textbf{65.8} \\
RRT, $a_j=1$ & 69.7 & 67.0 & 63.9 & \textbf{62.8} & \textbf{65.8} \\
\rowcolor{blue!12}
RRT
& \textbf{70.0}\policygain{20.0}
& \textbf{67.4}\policygain{17.4}
& \textbf{64.0}\policygain{14.0}
& 61.8\policygain{11.8}
& \textbf{65.8}\policygain{15.8} \\
\midrule
\rowcolor{black!4}
\multicolumn{6}{c}{\footnotesize Joint rollout and criterion ranking: pooled ROC-AUC} \\
\midrule
\initialpolicy{Mean of other verdicts} & \initialpolicy{69.4} & \initialpolicy{70.6} & \initialpolicy{67.0} & \initialpolicy{70.5} & \initialpolicy{69.3} \\
RPN without rollout evidence & 74.6 & 76.5 & 76.7 & 68.8 & 74.1 \\
RRT, $a_j=1$ & 79.2 & 81.2 & 80.1 & \textbf{77.2} & 79.4 \\
\rowcolor{blue!12}
RRT
& \textbf{79.7}\policygain{10.3}
& \textbf{81.4}\policygain{10.8}
& \textbf{80.3}\policygain{13.3}
& 76.6\policygain{6.1}
& \textbf{79.5}\policygain{10.1} \\
\bottomrule
\end{tabular}
\end{table}

RRT gains 10.1 points in pooled macro ROC-AUC over the mean of other verdicts and also exceeds the RPN without rollout evidence on every dataset (Table~\ref{tab:loco}). All three methods using rollout evidence reach 65.8\% macro ROC-AUC within each criterion.

The RPN configuration ablations infer quality from all verdicts and vary the response function, parameter count, text conditioning, embedder, and policy. The adopted model improves macro ROC-AUC by 0.7 points over the model with $a_j=1$ (Table~\ref{tab:rpn-prediction} in Appendix~\ref{sec:rpn-prediction}). The RPN representation analysis also compares predicted and empirical criterion difficulty, with Spearman correlations of 38.0\% to 47.9\% (Table~\ref{tab:rpn-geometry} in Appendix~\ref{sec:rpn-geometry-appendix}). To compare reward orderings, both response functions use $a_j=1$ and the same criterion difficulties. The Gaussian CDF separates 83.3\% of rollout pairs with equal pass counts on Medical and 39.3\% on Science, while the logistic CDF leaves all such pairs tied (Table~\ref{tab:link-separation} in Appendix~\ref{sec:link-separation}).

The calibration analysis compares online and frozen RPNs in predicting empirical criterion pass rates as the policy changes. Online updates increase macro Pearson correlation by 1.7 points on the next policy step, with gains on both datasets (Table~\ref{tab:em-diagnostics} in Appendix~\ref{sec:em-diagnostics-appendix}). Two criterion trajectories selected post hoc end with lower predicted difficulty and higher empirical pass rates (Figure~\ref{fig:em-case-studies}). The RubricBench RPN is fitted with hard (posterior mode) and soft (full posterior) E-steps at two discrimination regularizer weights. At both weights, the hard E-step improves ROC-AUC by 0.6 to 1.3 points and reduces criterion loss by 0.011 to 0.017 (Table~\ref{tab:marginal-em} in Appendix~\ref{sec:marginal-em}).

Reward variation within prompts is compared between RRT and rubric points across policies and rollout group sizes. Relative to rubric points, RRT has 1.2 to 2.2 times the variance share within prompts and 2\% to 58\% fewer tied pairs in all 12 dataset and policy cells. With groups drawn from 48 rollouts per prompt, the tied pair share falls by 2.2 to 2.3 points on Medical and 1.5 to 1.6 on Science across tested group sizes from 2 to 32, including the training size $n=8$ (Figure~\ref{fig:reward-signal} and Table~\ref{tab:group-size} in Appendix~\ref{sec:reward-signal}). In the companion analysis, 52.3\% to 80.5\% of criteria receive the same verdict across all rollouts in a group (Table~\ref{tab:unanimous-criteria} in Appendix~\ref{sec:unanimous-criteria}).

Variation in judge verdicts is measured with three independent verdicts for each prompt, response, and criterion. Mean disagreement with the consensus verdict is 1.27\% across all 12 dataset and policy cells (Tables~\ref{tab:judge-agreement} and~\ref{tab:consensus-calibration} in Appendix~\ref{sec:judge-agreement-tables}). Stable nonzero ordering share is the fraction of rollout pairs that remain separated and retain their order. On repeated verdicts, RRT with marginal calibration increases its macro mean by 5.4 points over rubric points. Under corruption of the consensus verdicts, mean gains over rubric points are 0.99 to 2.00 points with estimated $a_j$, exceeding gains with $a_j=1$ at every tested level in both corruption channels. At the strongest tested level, 0.10, its mean order flip rate is lower than rubric points whether corruption targets split verdicts or all verdicts (Tables~\ref{tab:judge-reward-robustness} and~\ref{tab:consensus-noise} in Appendix~\ref{sec:judge-reward-stability}).

The comparison of noise structures measures cosine similarity to advantages from clean criterion scores under controlled corruption. At level 0.20, RRT with marginal calibration gains 1.3 to 5.1 points in mean similarity over criterion score across four channels that corrupt criterion subsets. Criterion score leads by 0.8 to 1.2 points under lenient noise tied to response length and symmetric noise on all criteria (Table~\ref{tab:noise-coverage} in Appendix~\ref{sec:noise-robustness-tables}). The companion comparison varies corruption level on a fixed 30\% criterion subset and includes POW3R and DIVA. At each tested nonzero level, RRT with marginal calibration has the highest similarity on every dataset, with gains of 3.0 to 7.0 points over criterion score at 0.20 (Table~\ref{tab:noise-doseresponse} in Appendix~\ref{sec:noise-robustness-tables}).

Variation across rollout samples is measured with pass rates from disjoint blocks of eight rollouts. Reliability, measured by Pearson correlation between blocks, is 93.6\% to 96.0\% over all criteria and 70.9\% to 71.9\% over uncertain criteria, whose empirical pass rates over 48 rollouts lie between 10\% and 90\% (Table~\ref{tab:passrate-reliability} in Appendix~\ref{sec:passrate-noise}). For criterion sampling, orderings from random rubric halves are compared using Kendall's $\tau$, with RRT parameters from the online RPN. RRT raises rank correlation over rubric points by 15.1 points on RaR Science and 0.5 to 3.6 points on the other datasets (Table~\ref{tab:split-half} in Appendix~\ref{sec:split-half-table}). Across quintiles of rank correlation from rubric points, RRT's gain falls from 14 points in the lowest quintile to 1 point in the highest (Table~\ref{tab:split-half-dose} in the same appendix).

To check local independence, the analysis measures dependence between criterion verdicts after accounting for fitted quality. Fitted quality explains 72.8\% to 85.3\% of pairwise mutual information (Table~\ref{tab:local-independence-detail} in Appendix~\ref{sec:local-independence-details}). For pairs with the most similar criterion text, the share with residual redundancy exceeds the bootstrap null rate by 3.4 to 11.9 points. For the least similar text, it stays within 0.7 points of the null rate.

\subsection{Policy Training Results}
\label{sec:policy-results}

This experiment compares policy performance across rubric rewards and criterion parameter settings. Baseline rewards include POW3R \citep{tyagi2026not} and DIVA \citep{cook2026check} (Appendix~\ref{sec:baseline-rewards}). Empirical pass rate conditions use $a_j=1$ and $b_j=1-2\bar G_j$, using pass rates from batch or cached rollouts (Appendix~\ref{sec:passrate-noise}). RRT + frozen RPN and RRT + online RPN predict both $a_j$ and $b_j$ from text.

\begin{table}[!htbp]
\centering
\caption{Criterion and normalized points scores. Bold marks the highest score in each column. RRT variants use parameters from empirical pass rates or the RPN, with the same reward and E-step. RubricBench uses one point per criterion, so both scores are equal.}
\label{tab:policy-training-results}
\small
\setlength{\tabcolsep}{0.8pt}
\renewcommand{\arraystretch}{0.9}
\begin{tabular}{@{}lccccccc@{\hspace{7pt}\vrule width 0.5pt\hspace{7pt}}cc@{}}
\toprule
& \multicolumn{2}{c}{Medical}
& \multicolumn{2}{c}{Science}
& \multicolumn{2}{c}{RaR Science}
& RubricBench
& \multicolumn{2}{c}{Macro mean} \\
\cmidrule(lr){2-3}\cmidrule(lr){4-5}\cmidrule(lr){6-7}
\cmidrule(lr){8-8}\cmidrule(lr){9-10}
Condition & \shortstack{\tiny Criterion\\\tiny score} & \shortstack{\tiny Normalized\\\tiny points score}
& \shortstack{\tiny Criterion\\\tiny score} & \shortstack{\tiny Normalized\\\tiny points score}
& \shortstack{\tiny Criterion\\\tiny score} & \shortstack{\tiny Normalized\\\tiny points score}
& \shortstack{\tiny Criterion\\\tiny score}
& \shortstack{\tiny Criterion\\\tiny score} & \shortstack{\tiny Normalized\\\tiny points score} \\
\midrule
\initialpolicy{Base policy}
& \initialpolicy{64.1} & \initialpolicy{63.3}
& \initialpolicy{76.5} & \initialpolicy{70.9}
& \initialpolicy{68.7} & \initialpolicy{71.0}
& \initialpolicy{74.7}
& \initialpolicy{71.0} & \initialpolicy{70.0} \\

Vanilla GRPO
& 67.7\policygain{3.6} & 66.8\policygain{3.5}
& 78.3\policygain{1.8} & 77.8\policygain{6.9}
& 70.6\policygain{1.9} & 73.7\policygain{2.7}
& 77.0\policygain{2.3}
& 73.4\policygain{2.4} & 73.8\policygain{3.8} \\

POW3R
& 70.6\policygain{6.5} & 69.6\policygain{6.3}
& 79.0\policygain{2.5} & 78.7\policygain{7.8}
& 70.1\policygain{1.4} & 73.2\policygain{2.2}
& 77.4\policygain{2.7}
& 74.3\policygain{3.3} & 74.7\policygain{4.7} \\

DIVA
& 64.9\policygain{0.8} & 64.4\policygain{1.1}
& 78.8\policygain{2.3} & 78.3\policygain{7.4}
& 70.6\policygain{1.9} & 73.8\policygain{2.8}
& 75.8\policygain{1.1}
& 72.5\policygain{1.5} & 73.1\policygain{3.1} \\

\midrule
\multicolumn{10}{c}{\footnotesize Same RRT reward $R_i=\hat z_i$ and E-step across criterion parameter settings} \\
\midrule

\multicolumn{10}{l}{RRT +} \\

\hspace{1em}\(\hookrightarrow\) batch pass rate
& 68.0\policygain{3.9} & 67.0\policygain{3.7}
& 78.3\policygain{1.8} & 77.8\policygain{6.9}
& 70.8\policygain{2.1} & 73.5\policygain{2.5}
& 77.3\policygain{2.6}
& 73.6\policygain{2.6} & 73.9\policygain{3.9} \\

\hspace{1em}\(\hookrightarrow\) cached pass rate
& 67.8\policygain{3.7} & 66.8\policygain{3.5}
& 78.3\policygain{1.8} & 77.9\policygain{7.0}
& 70.0\policygain{1.3} & 73.0\policygain{2.0}
& 77.4\policygain{2.7}
& 73.4\policygain{2.4} & 73.8\policygain{3.8} \\

\hspace{1em}\(\hookrightarrow\) frozen RPN
& 70.1\policygain{6.0} & \textbf{69.8}\policygain{6.5}
& 79.5\policygain{3.0} & 79.2\policygain{8.3}
& 70.7\policygain{2.0} & 73.8\policygain{2.8}
& 77.9\policygain{3.2}
& 74.6\policygain{3.6} & 75.2\policygain{5.2} \\

\rowcolor{blue!12}
\hspace{1em}\(\hookrightarrow\) online RPN
& \textbf{70.9}\policygain{6.8} & \textbf{69.8}\policygain{6.5}
& \textbf{79.8}\policygain{3.3} & \textbf{79.4}\policygain{8.5}
& \textbf{70.9}\policygain{2.2} & \textbf{74.0}\policygain{3.0}
& \textbf{78.7}\policygain{4.0}
& \textbf{75.1}\policygain{4.1} & \textbf{75.5}\policygain{5.5} \\
\bottomrule
\end{tabular}
\end{table}

RRT + online RPN is highest or tied in every column of Table~\ref{tab:policy-training-results}, exceeding Vanilla GRPO by 1.7 and POW3R by 0.8 points on both macro metrics. The frozen RPN is 1.0 to 1.2 macro criterion points above empirical pass rate conditions, and the online RPN scores 0.5 points above the frozen RPN.

RRT's macro criterion score is 0.2 points below Vanilla GRPO on Qwen3.5-2B and 0.1 points above it on Llama-3.1-8B-Instruct. It scores higher on RubricBench with both policies, as on Qwen3.5-4B (Table~\ref{tab:policy-transfer-results} in Appendix~\ref{sec:policy-transfer-appendix}). Difficulty bands are defined by empirical criterion difficulty under the base policy. RRT + online RPN exceeds Vanilla GRPO by 2.8 to 5.6 points in seven of eight bands on Medical and Science, including every Medium, Hard, and Very hard band (Figure~\ref{fig:criterion-difficulty} in Appendix~\ref{sec:criterion-difficulty-appendix}). Policies trained on Medical and Science are also evaluated on HealthBench and ResearchQA, respectively. RRT gains 0.1 to 0.7 macro criterion points over Vanilla GRPO (Table~\ref{tab:cross-benchmark-results} in Appendix~\ref{sec:cross-benchmark-generalization}). The policy comparison also measures response length. RRT has lower median response length in all 12 dataset and policy combinations, with the mean of dataset medians 10.6\% to 47.9\% below Vanilla GRPO across policies (Figure~\ref{fig:response-length} in Appendix~\ref{sec:response-length-figure}).

\subsection{Criterion Selection Based on Fisher Information}
\label{sec:fisher-judging}

The comparison finds the smallest criterion budget reaching 95.0\% reward fidelity, the mean Pearson correlation between GRPO advantages from partial and full judging. All $K$ criteria give $\mathbf A^{(K)}$ and $m$ selected criteria give $\mathbf A^{(m)}$. Methods share a frozen RPN and verdict matrix per dataset. The selection methods are random, discrimination ($a_j^2$), static Fisher ($N I_j(0)$), and adaptive Fisher selection.

\begin{table}[!htbp]
\centering
\caption{Unjudged criteria at the smallest budget reaching 95.0\% correlation with GRPO advantages from the full rubric on rollouts from trained policies. Parentheses give gains over random selection.}
\label{tab:fisher-judging}
\small
\setlength{\tabcolsep}{2pt}
\renewcommand{\arraystretch}{0.9}
\begin{tabular}{@{}lcccc@{\hspace{5pt}\vrule width 0.5pt\hspace{5pt}}c@{}}
\toprule
Selection method & Medical & Science & RaR Science & RubricBench & Macro mean \\
\midrule
Random & 10.7\% & 15.2\% & 5.1\% & 8.9\% & 10.0\% \\
Discrimination ($a_j^2$) & 20.0\%\policygain{9.3} & 18.0\%\policygain{2.7} & 5.1\%\policyeven{0.0} & 8.9\%\policyeven{0.0} & 13.0\%\policygain{3.0} \\
Static Fisher & 20.0\%\policygain{9.3} & 20.5\%\policygain{5.3} & 18.3\%\policygain{13.2} & 18.6\%\policygain{9.6} & 19.3\%\policygain{9.3} \\
\rowcolor{blue!12}
Adaptive Fisher & 20.8\%\policygain{10.1} & 22.9\%\policygain{7.6} & 18.3\%\policygain{13.2} & 21.9\%\policygain{13.0} & 21.0\%\policygain{11.0} \\
\bottomrule
\end{tabular}
\end{table}

On rollouts from trained policies, adaptive Fisher leaves 11.0 points more criteria unjudged than random selection and 1.7 more than static Fisher in the macro mean (Table~\ref{tab:fisher-judging}). Discrimination alone matches random selection on RaR Science and RubricBench, where static Fisher leaves 13.2 and 9.6 points more criteria unjudged, respectively. On rollouts from base policies, macro gains over random selection are 10.6 points for static Fisher and 9.5 for adaptive Fisher (Table~\ref{tab:fisher-judging-base} in Appendix~\ref{sec:fisher-judging-extra}).

To measure how these budgets affect policy training, this experiment compares criterion score and normalized points scores under full and partial judging. All RRT conditions share the frozen RPN. Random selection uses a criterion budget of 0.50 for Vanilla GRPO and RRT. Adaptive Fisher budgets are 0.50, 0.80, and 0.95, with full judging for reference.

\begin{table}[!htbp]
\centering
\caption{Scores under full and partial judging. Budget is the fraction of criteria judged. Parentheses give differences from Vanilla GRPO with full judging. Only criterion score is shown for RubricBench.}
\label{tab:fisher-budget-training}
\small
\setlength{\tabcolsep}{0.8pt}
\renewcommand{\arraystretch}{0.9}
\resizebox{\textwidth}{!}{%
\begin{tabular}{@{}lccccccc@{\hspace{7pt}\vrule width 0.5pt\hspace{7pt}}cc@{}}
\toprule
& \multicolumn{2}{c}{Medical} & \multicolumn{2}{c}{Science} & \multicolumn{2}{c}{RaR Science} & RubricBench & \multicolumn{2}{c}{Macro mean} \\
\cmidrule(lr){2-3}\cmidrule(lr){4-5}\cmidrule(lr){6-7}\cmidrule(lr){8-8}\cmidrule(lr){9-10}
Condition & \shortstack{\tiny Criterion\\\tiny score} & \shortstack{\tiny Normalized\\\tiny points score}
& \shortstack{\tiny Criterion\\\tiny score} & \shortstack{\tiny Normalized\\\tiny points score}
& \shortstack{\tiny Criterion\\\tiny score} & \shortstack{\tiny Normalized\\\tiny points score}
& \shortstack{\tiny Criterion\\\tiny score}
& \shortstack{\tiny Criterion\\\tiny score} & \shortstack{\tiny Normalized\\\tiny points score} \\
\midrule
Vanilla GRPO & 67.7 & 66.8 & 78.3 & 77.8 & 70.6 & 73.7 & 77.0 & 73.4 & 73.8 \\
\midrule
RRT, full judging (1.00)
& 70.1\policygain{2.4} & 69.8\policygain{3.0}
& 79.5\policygain{1.2} & 79.2\policygain{1.4}
& 70.7\policygain{0.1} & 73.8\policygain{0.1}
& 77.9\policygain{0.9}
& 74.6\policygain{1.2} & 75.2\policygain{1.4} \\
\midrule
\multicolumn{10}{l}{Random selection (0.50)} \\
\hspace{1em}\(\hookrightarrow\) Vanilla GRPO
& 65.9\policyloss{1.8} & 64.0\policyloss{2.8}
& 74.7\policyloss{3.6} & 74.8\policyloss{3.0}
& 67.5\policyloss{3.1} & 70.3\policyloss{3.4}
& 76.3\policyloss{0.7}
& 71.1\policyloss{2.3} & 71.4\policyloss{2.5} \\
\hspace{1em}\(\hookrightarrow\) RRT
& 67.8\policygain{0.1} & 67.0\policygain{0.2}
& 77.2\policyloss{1.1} & 77.0\policyloss{0.8}
& 70.3\policyloss{0.3} & 72.2\policyloss{1.5}
& 76.5\policyloss{0.5}
& 73.0\policyloss{0.5} & 73.2\policyloss{0.7} \\
\midrule
\multicolumn{10}{l}{RRT, adaptive Fisher} \\

\hspace{1em}\(\hookrightarrow\) 0.95
& 70.1\policygain{2.4} & 69.8\policygain{3.0}
& 79.4\policygain{1.1} & 79.1\policygain{1.3}
& 70.7\policygain{0.1} & 73.8\policygain{0.1}
& 77.7\policygain{0.7}
& 74.5\policygain{1.1} & 75.1\policygain{1.2} \\
\hspace{1em}\(\hookrightarrow\) 0.80
& 69.8\policygain{2.1} & 69.4\policygain{2.6}
& 79.2\policygain{0.9} & 78.8\policygain{1.0}
& 70.5\policyloss{0.1} & 73.5\policyloss{0.2}
& 77.7\policygain{0.7}
& 74.3\policygain{0.9} & 74.9\policygain{1.0} \\
\hspace{1em}\(\hookrightarrow\) 0.50
& 68.6\policygain{0.9} & 68.2\policygain{1.4}
& 78.2\policyloss{0.1} & 77.8\policyeven{0.0}
& 69.8\policyloss{0.8} & 72.8\policyloss{0.9}
& 76.8\policyloss{0.2}
& 73.4\policyloss{0.1} & 73.9\policygain{0.1} \\
\bottomrule
\end{tabular}%
}
\end{table}

At criterion budget 0.50, adaptive Fisher selection keeps both RRT macro scores within 0.1 points of Vanilla GRPO with full judging (Table~\ref{tab:fisher-budget-training}). Relative to each method's full judging score, random selection at this budget lowers macro criterion scores by 2.3 points for Vanilla GRPO and 1.6 for RRT. Adaptive Fisher selection at this budget reduces RRT's judge requests by 49.0\% on Medical and Science relative to full judging. Separate API measurements yield parseable verdicts for all 11,712 completed requests (Tables~\ref{tab:judge-cost-overhead} and~\ref{tab:judge-api-usage} in Appendix~\ref{sec:judge-cost}).

The cost analysis measures generation, judging, and total time per policy step. Judging is the longest measured stage in every tested condition. Relative to full judging with the same frozen RPN, adaptive Fisher selection at criterion budget 0.50 reduces median judging time by 49.1\% to 49.6\% and total step time by 23.5\% to 28.7\% on Medical and Science. Generation time stays within 1.0\% of full judging (Table~\ref{tab:step-time-cost} in Appendix~\ref{sec:step-time-cost}). Added computation for RRT + online RPN is 0.123\% of a Vanilla GRPO Medical step (Table~\ref{tab:rrt-overhead}). One RPN warm start per dataset serves all RPN variants. Embedding and fitting from cached verdicts cost 0.89 to 7.64 GPU hours across datasets and embedder sizes (Table~\ref{tab:warm-start-cost} in Appendix~\ref{sec:warm-start-cost}). The trained policy adds zero parameters or inference components at deployment (Table~\ref{tab:inference-cost} in Appendix~\ref{sec:inference-cost}).

%% file: sections/discussion.tex
\section{Discussion and Conclusion}
\label{sec:discussion}

RRT gives GRPO access to differences between verdict patterns that point totals discard (RQ1, Proposition~\ref{prop:points-ties} in Appendix~\ref{sec:points-progress-proofs}). Fewer ties preserve these distinctions across policies and rollout group sizes (Appendix~\ref{sec:reward-signal}). The rubric likelihood score maximizes local SNR for quality under the item response model (Theorem~\ref{thm:local-efficiency}). This supports likelihood aggregation because differences in criterion parameters make the optimal relative weights vary with quality. MAP rewards also depend on posterior curvature from realized verdicts (Theorem~\ref{thm:local-advantage} in Appendix~\ref{sec:points-progress-proofs}). Unanimous criteria, common in rollout groups (Appendix~\ref{sec:unanimous-criteria}), can change MAP reward gaps through shared likelihood factors. RRT can thus use the full rubric, even unanimous criteria, to shape GRPO advantages.

Policy score gains with an RPN support estimating criterion parameters from text rather than empirical pass rates alone (Table~\ref{tab:policy-training-results}). Criterion parameters complement rollout evidence in pooled verdict prediction (Table~\ref{tab:loco}). Less reliable empirical pass rates on uncertain criteria further support combining text and rollout evidence (Appendix~\ref{sec:passrate-noise}). Text is informative about criterion difficulty, as predicted and empirical difficulty correlate positively (Table~\ref{tab:rpn-geometry} in Appendix~\ref{sec:rpn-geometry-appendix}). Estimating discrimination improves mean reward stability over fixed discrimination under corruption with marginal calibration (Appendix~\ref{sec:judge-reward-stability}). Theorems~\ref{thm:logit} and~\ref{thm:gaussian} in Appendix~\ref{sec:response-model-properties} motivate the Gaussian CDF: difficulty can change reward ordering, while logistic ordering depends only on totals weighted by discrimination. At $a_j=1$ and matched difficulties, the Gaussian CDF separates observed rollout pairs with equal pass counts (Table~\ref{tab:link-separation} in Appendix~\ref{sec:link-separation}). As the policy changes, online EM reuses GRPO verdicts to update the RPN. These updates improve correlation between predicted pass probabilities and empirical criterion pass rates at the next policy step (Table~\ref{tab:em-diagnostics} in Appendix~\ref{sec:em-diagnostics-appendix}). The hard E-step has lower criterion loss than the tested soft variants (Table~\ref{tab:marginal-em} in Appendix~\ref{sec:marginal-em}). These results support an RPN warm start followed by online calibration from current rollout verdicts.

With criterion parameters fitted per rubric by marginal calibration, RRT preserves reward distinctions under judge variation. Its mean stable nonzero ordering share exceeds that of rubric points under repeated judging and at every tested corruption level. It combines fewer ties with better order preservation under repeated judging and has a lower mean order flip rate at the strongest tested corruption level (Tables~\ref{tab:judge-reward-robustness} and~\ref{tab:consensus-noise} in Appendix~\ref{sec:judge-reward-stability}). RRT with marginal calibration better preserves advantages from clean criterion scores when errors concentrate on particular criteria, while symmetric errors on all criteria and lenient errors tied to response length favor criterion score (Table~\ref{tab:noise-coverage} in Appendix~\ref{sec:noise-robustness-tables}). With criterion parameters from the online RPN, RRT is less sensitive to criterion sampling than rubric points on all tested datasets. The largest gain occurs in the lowest quintile of rank correlation under rubric points (Tables~\ref{tab:split-half} and~\ref{tab:split-half-dose} in Appendix~\ref{sec:split-half-table}). Dependence diagnostics support the item response model by showing that fitted quality explains most pairwise dependence, with residual redundancy concentrated among criteria with similar text (Table~\ref{tab:local-independence-detail} in Appendix~\ref{sec:local-independence-details}). Modeling this dependence could address redundancy and known bias in unidimensional IRT models \citep{yen1984effects}. Multiple quality targets could also accommodate rubrics with explicit tradeoffs.

RRT + online RPN improves criterion satisfaction over Vanilla GRPO, POW3R, and DIVA in the primary Qwen3.5-4B comparison (RQ2, Table~\ref{tab:policy-training-results}). It exceeds Vanilla GRPO on criteria the base policy often misses (Figure~\ref{fig:criterion-difficulty} in Appendix~\ref{sec:criterion-difficulty-appendix}). Gains on other benchmarks support generalization beyond the training rubrics (Table~\ref{tab:cross-benchmark-results} in Appendix~\ref{sec:cross-benchmark-generalization}). Across the tested policies, RRT produces shorter responses. On Qwen3.5-2B and Llama-3.1-8B-Instruct, its macro criterion scores remain close to Vanilla GRPO's (Appendices~\ref{sec:policy-transfer-appendix} and~\ref{sec:response-length-figure}).

Fisher information gives a common basis for reward inference and criterion selection (RQ3, Theorem~\ref{thm:fisher-criterion}). RRT connects the information principle used in adaptive testing and model evaluation \citep{weiss1982improving,truong2025reliable} to criterion selection during policy training with rubrics. Difficulty locates each criterion's informative quality range, and discrimination controls its information peak (Theorem~\ref{thm:information-frontier} in Appendix~\ref{sec:fisher-information}). Fisher selection's savings over random selection at matched reward fidelity show that these parameters can reduce judging while approximating GRPO advantages (Table~\ref{tab:fisher-judging}, Appendix~\ref{sec:fisher-judging-extra}). Discrimination alone matches random selection on RaR Science and RubricBench, while static Fisher selection reduces judging on both (Table~\ref{tab:fisher-judging}). This supports selection using difficulty as well as discrimination. Mean savings are greatest with static Fisher selection on rollouts from base policies and adaptive Fisher selection on those from trained policies (Tables~\ref{tab:fisher-judging} and~\ref{tab:fisher-judging-base}). This shift supports adapting judge allocation as the policy changes.

At half the criterion budget, RRT's macro criterion score falls less than Vanilla GRPO's under random selection. Adaptive Fisher selection keeps RRT's macro scores close to Vanilla GRPO with full judging at about half the judge requests (Table~\ref{tab:fisher-budget-training}, Appendix~\ref{sec:judge-cost}). Larger criterion budgets recover more of RRT's gains from full judging (Table~\ref{tab:fisher-budget-training}). Since judging is the longest measured stage of a policy step, selecting fewer criteria shortens steps with nearly unchanged generation time (Table~\ref{tab:step-time-cost} in Appendix~\ref{sec:step-time-cost}). Online RPN updates add little computation (Table~\ref{tab:rrt-overhead}). One RPN warm start per dataset serves all RPN variants (Appendix~\ref{sec:warm-start-cost}). These costs remain in training: RRT adds no policy parameters or inference components at deployment (Appendix~\ref{sec:inference-cost}).

Judging cost and time are a major challenge in reinforcement learning with rubrics. For rubrics whose criteria are monotone indicators of a shared target, RRT addresses this challenge with a common model in which difficulty and discrimination determine the evidence in each verdict and the expected information of each unjudged criterion. RRT thus distinguishes verdict patterns with the same point total and selects informative criteria. This reduces judge usage and shortens policy steps while retaining the gains from Vanilla GRPO. Beyond these savings, RRT improves criterion satisfaction, including on difficult criteria, and stabilizes reward orderings under repeated judging with marginal calibration. The deployed policy has no added parameters or inference components.

%% file: sections/appendix.tex
\appendix

\section{Theoretical Results and Proofs}
\label{sec:theorems}

\subsection{Response Function Assumptions and Properties}
\label{sec:response-model-properties}

The response function $F:\mathbb R\to(0,1)$ of Section~\ref{sec:exam} maps the signed margin $a_j(z_i-b_j)$ to a probability. It must satisfy two conditions.
\begin{itemize}
  \item $F(t)\to0$ as $t\to-\infty$, $F(t)\to1$ as $t\to+\infty$, and $F$ is strictly increasing. Increasing a rollout's quality for the rubric never lowers its chance of satisfying a criterion.
  \item $F(0)=0.5$, so $z_i=b_j\iff P_{ij}=0.5$. Thus $b_j$ is the criterion difficulty.
\end{itemize}
The logistic CDF $\sigma$ and the Gaussian CDF $\Phi$ both satisfy them. RRT adopts $\Phi$.

Fix a rollout $i$ and use the signed margin $u_{ij}$, with $a_j>0$ and prior variance $\sigma_z^2>0$, and assume that $F$ is differentiable. Define the response weight function
\begin{equation}
  s_F(u):=\frac{F'(u)}{F(u)(1-F(u))}
  =\frac{d}{du}\operatorname{logit}F(u).
  \label{eq:general-response-weight}
\end{equation}

\begin{theorem}[Log posterior gradient for general $F$]
\label{thm:general-pass-probability}
The log posterior objective $\ell_i$ for rollout $i$ in Eq.~\ref{eq:estep} has gradient
\begin{equation}
  \ell_i'(z_i)=\sum_j a_j\,s_F(u_{ij})\big(G_{ij}-F(u_{ij})\big)-\frac{z_i}{\sigma_z^2}.
  \label{eq:estep-grad}
\end{equation}
\end{theorem}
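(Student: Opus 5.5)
The plan is a direct computation: differentiate each term of $\ell_i$ in Eq.~\ref{eq:estep} and sum. First, write each Bernoulli log factor explicitly as
\[
\ell_{ij}(z)=G_{ij}\log F(u_{ij})+(1-G_{ij})\log\bigl(1-F(u_{ij})\bigr),\qquad u_{ij}=a_j(z-b_j).
\]
This is well defined because $F$ takes values in $(0,1)$, so both logarithms are finite. By the chain rule, $\partial u_{ij}/\partial z=a_j$, so
\[
\ell_{ij}'(z)=a_jF'(u_{ij})\Bigl[\frac{G_{ij}}{F(u_{ij})}-\frac{1-G_{ij}}{1-F(u_{ij})}\Bigr].
\]

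The one step needing care is combining the bracket over the common denominator. The numerator is
\[
G_{ij}\bigl(1-F\bigr)-(1-G_{ij})F=G_{ij}-F.
\]
This identity holds for $G_{ij}\in\{0,1\}$, and in fact for any real $G_{ij}$. Hence
\[
\ell_{ij}'(z)=a_j\frac{F'(u_{ij})}{F(u_{ij})(1-F(u_{ij}))}\bigl(G_{ij}-F(u_{ij})\bigr)=a_j\,s_F(u_{ij})\bigl(G_{ij}-F(u_{ij})\bigr),
\]
using the definition of $s_F$ in Eq.~\ref{eq:general-response-weight}. As a side remark, one can check the identity $s_F=\frac{d}{du}\operatorname{logit}F$ from $\operatorname{logit}F=\log F-\log(1-F)$, though the proof does not need it.

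Finally, the prior term $-z^2/(2\sigma_z^2)$ has derivative $-z/\sigma_z^2$. Summing over $j$ by linearity of differentiation, which is valid because the sum is finite, gives Eq.~\ref{eq:estep-grad}.

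No real obstacle is expected. The only points to state are that the assumed differentiability of $F$ makes each $\ell_{ij}$ differentiable, and that $0<F<1$ keeps the denominators nonzero.
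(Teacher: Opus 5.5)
Your proof is correct and follows the same route as the paper, which also differentiates each Bernoulli log factor with respect to the margin $u_{ij}$ to obtain $s_F(u_{ij})\big(G_{ij}-F(u_{ij})\big)$. It then multiplies by $\partial u_{ij}/\partial z_i=a_j$, sums over $j$, and adds the prior derivative $-z_i/\sigma_z^2$.
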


\begin{proof}
Differentiating the log likelihood $\ell_{ij}$ of criterion $j$ with respect to its margin $u_{ij}$ gives
\[
  \frac{\partial\ell_{ij}}{\partial u_{ij}}
  =F'(u_{ij})\left[\frac{G_{ij}}{F(u_{ij})}-\frac{1-G_{ij}}{1-F(u_{ij})}\right]
  =s_F(u_{ij})\big(G_{ij}-F(u_{ij})\big).
\]
Multiplying by $\partial u_{ij}/\partial z_i=a_j$, summing over $j$, and adding the prior derivative $-z_i/\sigma_z^2$ gives Eq.~\ref{eq:estep-grad}.
\end{proof}

\begin{theorem}[Difficulty dependence of the verdict coefficient]
\label{thm:logit}
Let $F$ be twice continuously differentiable. The coefficient of $G_{ij}$ in Eq.~\ref{eq:estep-grad} is $a_j s_F(u_{ij})$, and
\begin{equation}
  \frac{\partial}{\partial b_j}\big[a_j s_F(u_{ij})\big]
  =-a_j^2\,s_F'(u_{ij}).
  \label{eq:difficulty-weight-derivative}
\end{equation}
This coefficient is independent of $b_j$ for all margins if and only if $s_F$ is constant. For a centered $F$, this holds if and only if $F(u)=\sigma(\gamma_Fu)$ for some $\gamma_F>0$. In particular, for $F=\sigma$, the reward order depends only on $\sum_j a_jG_{ij}$ and not on $b_j$.
\end{theorem}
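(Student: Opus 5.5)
The plan is to prove the four claims in order: the derivative identity, the equivalence ``independent of $b_j$ $\iff$ $s_F$ constant'', the characterization of constant $s_F$ as a scaled logistic, and the reward-order consequence.

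\textbf{Derivative identity.} Start from Theorem~\ref{thm:general-pass-probability}. In Eq.~\ref{eq:estep-grad}, the verdict $G_{ij}$ enters linearly with coefficient $a_j s_F(u_{ij})$. Since $u_{ij}=a_j(z_i-b_j)$, we have $\partial u_{ij}/\partial b_j=-a_j$. The chain rule then gives $\partial_{b_j}[a_j s_F(u_{ij})]=a_j s_F'(u_{ij})\cdot(-a_j)=-a_j^2 s_F'(u_{ij})$. Here $s_F$ is $C^1$ because $F\in C^2$ and $F\in(0,1)$.

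\textbf{Independence $\iff$ $s_F$ constant.} Because $a_j>0$, the quantity $-a_j^2 s_F'(u)$ vanishes for all margins iff $s_F'(u)=0$ for every $u\in\mathbb R$. The margin $u$ ranges over all of $\mathbb R$ as $z_i$ or $b_j$ varies. So the condition holds iff $s_F$ is constant.

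\textbf{Constant $s_F$ $\iff$ scaled logistic.} Write $s_F\equiv\gamma_F$.
\begin{itemize}
  \item Since $F$ is strictly increasing, $F'\ge0$, so $\gamma_F\ge0$.
  \item If $\gamma_F=0$, then $F'\equiv0$, contradicting strict monotonicity. Hence $\gamma_F>0$.
  \item By Eq.~\ref{eq:general-response-weight}, $s_F=\frac{d}{du}\operatorname{logit}F$. Integrating gives $\operatorname{logit}F(u)=\gamma_F u+\operatorname{logit}F(0)$.
  \item Centering means $F(0)=0.5$, so $\operatorname{logit}F(0)=0$. Therefore $F(u)=\sigma(\gamma_F u)$.
  \item Conversely, $\sigma'=\sigma(1-\sigma)$ gives $s_F\equiv\gamma_F$ for $F(u)=\sigma(\gamma_F u)$.
\end{itemize}

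\textbf{Reward order for $F=\sigma$.} With $s_F\equiv1$, setting Eq.~\ref{eq:estep-grad} to zero gives
\[
  H(z):=\sum_j a_j\,\sigma\big(a_j(z-b_j)\big)+\frac{z}{\sigma_z^2}=\sum_j a_jG_{ij}=:S_i .
\]
The function $H$ is continuous and strictly increasing, with derivative at least $1/\sigma_z^2>0$. It runs from $-\infty$ to $+\infty$, and it does not depend on the verdicts. Hence $\hat z_i=H^{-1}(S_i)$ is the unique maximizer, which is consistent with strict concavity. Since $H^{-1}$ is strictly increasing, for rollouts sharing the same criterion parameters $\hat z_i>\hat z_{i'}$ iff $S_i>S_{i'}$. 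The values $b_j$ shift $H$ but cannot change the ordering.

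\textbf{Main obstacle.} The delicate step is the characterization. Two points need care there: ruling out $\gamma_F=0$, and reading ``centered'' as $F(0)=0.5$ so that the integration constant is fixed. The remaining steps are routine calculus.
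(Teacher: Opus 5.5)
Your proposal is correct and takes essentially the same route as the paper's proof. Both use the chain rule with $\partial u_{ij}/\partial b_j=-a_j$, integrate $s_F=\frac{d}{du}\operatorname{logit}F$ with $F(0)=1/2$ fixing the constant, and for $F=\sigma$ invert a verdict-free, strictly increasing bijection (your $H$, the paper's $g$) at $\sum_j a_jG_{ij}$. Your explicit exclusion of $\gamma_F=0$ and your remark that $s_F$ is $C^1$ spell out steps the paper leaves implicit.
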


\begin{proof}
Theorem~\ref{thm:general-pass-probability} gives the coefficient $a_j s_F(u_{ij})$, and $\partial u_{ij}/\partial b_j=-a_j$ yields Eq.~\ref{eq:difficulty-weight-derivative}, which vanishes for every margin if and only if $s_F'\equiv0$. Since $s_F=\frac{d}{du}\operatorname{logit}F$, a constant $s_F\equiv\gamma_F$ integrates to $\operatorname{logit}F(u)=\gamma_Fu+C$, and $F(0)=1/2$ forces $C=0$, so $F=\sigma(\gamma_Fu)$. Strict increase requires $\gamma_F>0$. The converse follows by differentiation.

For $F=\sigma$, $\sigma'=\sigma(1-\sigma)$ gives $s_\sigma\equiv1$, so $\ell_i'(z_i)=\sum_j a_j G_{ij}-g(z_i)$ with
\[
  g(z_i):=\sum_j a_j\sigma(u_{ij})+\frac{z_i}{\sigma_z^2}.
\]
The map $g$ does not involve the verdicts. Each $a_j\sigma(u_{ij})$ is nondecreasing in $z_i$ and the term $z_i/\sigma_z^2$ is strictly increasing with range $\mathbb R$, so $g$ is a strictly increasing bijection. The stationary condition $g(\hat z_i)=\sum_j a_j G_{ij}$ then has a unique solution that increases with its right side, the same map for all rollouts. Hence the reward order is the order of $\sum_j a_j G_{ij}$, which contains no $b_j$.
\end{proof}

For a scaled $F(u)=\sigma(\gamma_Fu)$ the coefficient scale is $s_F\equiv\gamma_F$, absorbed into $a_j$. The analysis uses $\gamma_F=1$. Theorem~\ref{thm:logit} concerns the order of the MAP rewards. Under every $F$, including $\sigma$, difficulties can change reward gaps and normalized GRPO advantages when a group has at least three distinct weighted verdict totals.

\begin{theorem}[Effect of $F=\Phi$]
\label{thm:gaussian}
Suppose $F(u)=\Phi(u)$, with standard Gaussian density $\phi$. Then
\begin{equation}
  s_\Phi(u)=\frac{\phi(u)}{\Phi(u)(1-\Phi(u))}=\lambda(u)+\lambda(-u),
  \qquad
  \lambda(u):=\frac{\phi(u)}{\Phi(u)}.
  \label{eq:gaussian-cdf-response-weight}
\end{equation}
The response weight is even and nonconstant, satisfies $s_\Phi(0)=4/\sqrt{2\pi}$, and obeys $s_\Phi(u)\sim|u|$ as $|u|\to\infty$. Thus $b_j$ changes the verdict coefficient through $u_{ij}$. Moreover, a change in one difficulty can reverse the RRT reward order of two fixed verdict vectors.
\end{theorem}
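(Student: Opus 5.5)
The plan has two parts: the analytic properties of $s_\Phi$, which are direct computations, and the reversal claim, which needs an explicit two-criterion construction.

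\emph{Identity and elementary properties.} Use $1-\Phi(u)=\Phi(-u)$ and $\phi(-u)=\phi(u)$. Then
\[
\frac{1}{\Phi(1-\Phi)}=\frac{1}{\Phi}+\frac{1}{1-\Phi},
\]
so $s_\Phi(u)=\phi(u)/\Phi(u)+\phi(u)/\Phi(-u)=\lambda(u)+\lambda(-u)$.
\begin{itemize}
\item Evenness is immediate from this form.
\item At $u=0$ we get $s_\Phi(0)=\phi(0)/(1/4)=4/\sqrt{2\pi}$.
\item For the asymptotics, use the Mills-ratio bounds: as $u\to+\infty$, $\lambda(u)\to0$ and $\lambda(-u)=\phi(u)/(1-\Phi(u))\sim u$. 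Evenness then gives $s_\Phi(u)\sim|u|$.
\item Since $s_\Phi$ is bounded near $0$ but unbounded, it is nonconstant.
\end{itemize}
By Theorem~\ref{thm:logit}, $s_\Phi'\not\equiv0$, so Eq.~\ref{eq:difficulty-weight-derivative} is nonzero at some margins. Hence $b_j$ changes the verdict coefficient through $u_{ij}$.

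\emph{Reduction for the reversal.} Take $K=2$, $a_1=a_2=1$, $b_2=0$, and the two verdict vectors $A=(1,0)$ and $B=(0,1)$. Using $s_\Phi(u)(1-\Phi(u))=\lambda(u)$ and $s_\Phi(u)\Phi(u)=\lambda(-u)$, Eq.~\ref{eq:estep-grad} gives
\begin{align*}
\ell_A'(z)&=\lambda(z-b_1)-\lambda(-z)-z/\sigma_z^2,\\
\ell_B'(z)&=\lambda(z)-\lambda(b_1-z)-z/\sigma_z^2,
\end{align*}
and therefore $\ell_A'(z)-\ell_B'(z)=s_\Phi(z-b_1)-s_\Phi(z)$. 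Each $\ell'$ is strictly decreasing by strict concavity (Theorem~\ref{thm:concavity}). So $\hat z_A>\hat z_B$ iff $\ell_A'(\hat z_B)>0$, i.e. iff $s_\Phi(\hat z_B-b_1)>s_\Phi(\hat z_B)$. I will use that $s_\Phi$ is even and strictly increasing in $|u|$. This follows from $\lambda$ being decreasing and log-concavity of $\Phi$, but it needs a short argument, for instance via $s_\Phi'(u)=\lambda'(-u)\cdot(-1)+\lambda'(u)$ and convexity of $-\log\Phi$. With that fact, the order reduces to comparing $|\hat z_B-b_1|$ with $|\hat z_B|$.

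\emph{Two difficulty values with opposite orders.}
\begin{itemize}
\item For $b_1=T$ large: $B$ fails a very hard criterion and passes criterion 2. That failure contributes only $-\lambda(T-z)\to0$, so $\hat z_B$ stays bounded. Then $|\hat z_B-T|\approx T>|\hat z_B|$, and $A$ is ranked higher.
\item For $b_1=-T$ large: $B$ fails a very easy criterion. Balancing $-\lambda(-(z+T))\approx z+T$ against the prior pull $T/\sigma_z^2$ gives $\hat z_B\approx -T-cT$ with $c>0$ of order $1/\sigma_z^2$, up to bounded corrections. Then $|\hat z_B-b_1|\approx cT<|\hat z_B|\approx(1+c)T$, and $B$ is ranked higher.
\end{itemize}
Changing only $b_1$ therefore reverses the RRT reward order of the fixed verdict vectors $A$ and $B$.

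\emph{Main obstacle.} The hard step is making the asymptotic location of $\hat z_B$ rigorous in the $b_1=-T$ case. I would bound $\hat z_B$ between $-T-C_1T$ and $-T-C_2T$ using $u\le\lambda(-u)\le u+1/u$-type Mills bounds for large $u$, and then choose $T$ large enough that the inequality on absolute margins holds strictly. The monotonicity of $s_\Phi$ in $|u|$ is the other point needing care. If it proves awkward, only the $|u|\to\infty$ comparison from the asymptotic $s_\Phi(u)\sim|u|$ is needed, and the two cases above can be arranged so that both margins are large.
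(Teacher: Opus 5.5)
Your identity, evenness, $s_\Phi(0)=4/\sqrt{2\pi}$, the $|u|$ asymptotic and nonconstancy are fine. Your reduction $\ell_A'-\ell_B'=s_\Phi(z-b_1)-s_\Phi(z)$ is also correct. The gap is that your reversal example does not reverse. With $b_2=0$, the pair $A=(1,0)$, $B=(0,1)$ has $\hat z_A>\hat z_B$ for every $b_1\neq0$.

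To see this, evaluate $\ell_B'$ at the midpoint $b_1/2$. Both $\lambda$ terms equal $\lambda(b_1/2)$ and cancel, leaving $\ell_B'(b_1/2)=-b_1/(2\sigma_z^2)$. Since $\ell_B'$ is strictly decreasing, $\hat z_B$ lies strictly on the zero side of $b_1/2$, so $|\hat z_B-b_1|>|\hat z_B|$. By your own criterion, $A$ is then ahead whatever the sign of $b_1$. The order only passes through a tie at $b_1=0$ and never flips.

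Your case $b_1=-T$ goes wrong because the balance omits the pass term $\lambda(z)\approx|z|$ from criterion $2$. For $z<-T$ the failure term $\lambda(-T-z)$ is below $\lambda(0)$, while $\lambda(z)-z/\sigma_z^2$ is large and positive, so no root lies there. The actual root is $\hat z_B\approx-\sigma_z^2T/(1+2\sigma_z^2)$, which sits between $-T$ and $0$ and nearer $0$. Passing a very hard criterion and failing a very easy one are both strong evidence, and in both regimes that evidence favors $A$.

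The paper avoids this by using vectors with different pass counts. It takes three criteria with $\sigma_z^2=1$, $b_2=b_3=0$, $b_1=d$, $G^A=(0,1,1)$ and $G^B=(1,0,0)$.
\begin{itemize}
\item At $d=0$, $\ell_A'(0)=\lambda(0)>0>-\lambda(0)=\ell_B'(0)$, so $\hat z_A>0>\hat z_B$.
\item As $d\to\infty$, $\ell_A'(z)\le2\lambda(z)-z$ keeps $\hat z_A$ bounded.
\item Meanwhile $\lambda(z-d)\to\infty$ at each fixed $z$ forces $\hat z_B\to\infty$.
\end{itemize}
This argument needs no monotonicity of $s_\Phi$.

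You can also repair your two-criterion route by taking $b_2=c\neq0$. The same midpoint computation gives $\ell_B'\bigl((b_1+c)/2\bigr)=-(b_1+c)/(2\sigma_z^2)$. Hence $A$ is ahead when $|b_1|>|c|$ and behind when $|b_1|<|c|$. Moving $b_1$ from $0$ to $2|c|$ then reverses the order exactly, without asymptotics.

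That version does rely on $s_\Phi$ being strictly increasing in $|u|$, and your justification for this is insufficient. Since $s_\Phi'(u)=\lambda'(u)-\lambda'(-u)$, you need $\lambda$ convex. Equivalently, $\operatorname{Var}(X\mid X\le u)=1+\lambda'(u)$ must increase in $u$ for standard Gaussian $X$. Log-concavity of $\Phi$ only gives $\lambda'<0$.
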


\begin{proof}
Substituting $F=\Phi$, so $F'=\phi$, into Eq.~\ref{eq:general-response-weight}, and using $1-\Phi(u)=\Phi(-u)$, gives Eq.~\ref{eq:gaussian-cdf-response-weight}. The corresponding signed criterion likelihood score in Eq.~\ref{eq:estep-grad} is
\[
  a_j
  \begin{cases}
    \lambda(u_{ij}), & G_{ij}=1,\\
    -\lambda(-u_{ij}), & G_{ij}=0.
  \end{cases}
\]
Because $\lambda$ is decreasing, raising $b_j$ decreases $u_{ij}$ and increases the positive contribution of a satisfied criterion while decreasing the magnitude of the negative contribution of an unsatisfied criterion. Passing a harder criterion therefore gives a larger positive likelihood score. Failing a harder criterion gives a negative likelihood score with smaller magnitude.

At $u=0$, $s_\Phi(0)=\phi(0)/\Phi(0)^2=4/\sqrt{2\pi}$. Mills' ratio gives $1-\Phi(u)\sim\phi(u)/u$ as $u\to+\infty$, so
\[
  s_\Phi(u)
  =\frac{\phi(u)}{\Phi(u)(1-\Phi(u))}
  \sim u.
\]
Evenness gives the corresponding $|u|$ asymptotic in the left tail.

For the rank claim, take three criteria with $\sigma_z^2=1$, $a_j=1$, $b_2=b_3=0$, and let $b_1=d$. Compare the fixed verdict vectors
\[
  G^A=(0,1,1),
  \qquad
  G^B=(1,0,0).
\]
At $d=0$, the log posterior derivatives are $\ell_A'(0)=\lambda(0)>0$ and $\ell_B'(0)=-\lambda(0)<0$. The derivatives are strictly decreasing, so $\hat z_A>0>\hat z_B$. As $d\to\infty$, the derivative for $A$ is
\[
  -\lambda(d-z)+2\lambda(z)-z
  \le 2\lambda(z)-z,
\]
so its root stays bounded above by the finite root of $2\lambda(z)-z=0$. The derivative for $B$ is
\[
  \lambda(z-d)-2\lambda(-z)-z.
\]
If its root stayed bounded, then $\lambda(z-d)\sim d-z$ would diverge while the other terms stayed bounded. Also, $\ell_B'(0)=\lambda(-d)-2\lambda(0)>0$ for sufficiently large $d$, so the root is positive. Hence $\hat z_B\to\infty$. For sufficiently large $d$, $\hat z_B>\hat z_A$, so changing only $b_1$ reverses the order.
\end{proof}

The Gaussian CDF tail behavior makes the magnitude of an unexpected criterion likelihood score unbounded in the theoretical model. For an unexpected hard pass, $a_j\lambda(u_{ij})\sim a_j|u_{ij}|$ as $u_{ij}\to-\infty$. An unexpected easy failure has the same asymptotic magnitude in the opposite direction. By contrast, under $F=\sigma$ each criterion likelihood score has magnitude at most $a_j$. For a mastered criterion with $u_{ij}\gg0$, the expected pass likelihood score tends to zero while the magnitude of an unexpected failure score grows with $u_{ij}$.

\subsection{MAP Reward Derivation and Properties}
\label{sec:map-reward-properties}

The fixed quality prior is
\begin{equation}
  p(z_i)=\frac{1}{\sqrt{2\pi}\,\sigma_z}\,\exp\!\Big(-\frac{z_i^2}{2\sigma_z^2}\Big).
  \label{eq:prior}
\end{equation}
Bayes' rule turns the likelihood of Eq.~\ref{eq:likelihood} and the prior of Eq.~\ref{eq:prior} into the posterior over quality,
\[
  p_\psi(z_i\mid G_i,q,c)=\frac{p_\psi(G_i\mid z_i,q,c)\,p(z_i)}{p_\psi(G_i\mid q,c)} .
\]
The denominator $p_\psi(G_i\mid q,c)$ does not depend on $z_i$, so the most probable quality maximizes the numerator,
\begin{equation}
  \hat z_i=\arg\max_{z_i}\ p_\psi(z_i\mid G_i,q,c)=\arg\max_{z_i}\ p_\psi(G_i\mid z_i,q,c)\,p(z_i).
  \label{eq:argmax}
\end{equation}
Taking a logarithm turns the product into a sum without moving the maximizer. The log likelihood of rollout $i$'s verdict on criterion $j$ at quality $z$, the logarithm of that criterion's Bernoulli factor in Eq.~\ref{eq:likelihood}, is
\[
  \ell_{ij}(z)=G_{ij}\log F\big(a_j(z-b_j)\big){}+(1-G_{ij})\log\big(1-F(a_j(z-b_j))\big),
\]
so the likelihood contributes $\sum_j\ell_{ij}(z_i)$ and the prior contributes $\log p(z_i)=-z_i^2/(2\sigma_z^2)-\tfrac12\log(2\pi\sigma_z^2)$. Dropping the constant leaves the log posterior objective of Eq.~\ref{eq:estep}.

\begin{theorem}[Strict concavity of the log posterior objective]
\label{thm:concavity}
Fix criteria with $a_j>0$, difficulties $b_j\in\mathbb R$, and prior variance $\sigma_z^2>0$. Suppose the adopted response function is $F=\Phi$. Then the log posterior objective $\ell_i$ of Eq.~\ref{eq:estep} is smooth, strictly concave, and tends to $-\infty$ as $|z_i|\to\infty$. Its derivative is strictly decreasing and satisfies
\[
  \ell_i'(z_i)\to+\infty\text{ as }z_i\to-\infty,
  \qquad
  \ell_i'(z_i)\to-\infty\text{ as }z_i\to+\infty.
\]
Thus the reward $\hat z_i=\arg\max_z\ell_i(z)$ exists, is the unique root of $\ell_i'$, and is differentiable in the criterion parameters $(a_j,b_j)$.
\end{theorem}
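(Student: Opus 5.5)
The proof reduces strict concavity to log-concavity of $\Phi$ and of $1-\Phi$, since each criterion term is a log-concave function composed with an affine map. For $F=\Phi$ the criterion log likelihood is $\ell_{ij}(z)=G_{ij}\log\Phi(u_{ij})+(1-G_{ij})\log\Phi(-u_{ij})$ with $u_{ij}=a_j(z-b_j)$, where we use $1-\Phi(u)=\Phi(-u)$. Smoothness is immediate because $\Phi\in(0,1)$ is $C^\infty$, so $\log\Phi$ and $\log\Phi(-\cdot)$ are $C^\infty$. Using $\lambda=\phi/\Phi$ from Theorem~\ref{thm:gaussian}, the second derivatives are
\[
  \frac{d^2}{du^2}\log\Phi(u)=\lambda'(u)=-\lambda(u)\big(u+\lambda(u)\big),
\]
and the same expression with $u\mapsto-u$ holds for $\log\Phi(-u)$. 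By the chain rule,
\[
  \ell_{ij}''(z)=a_j^2\,\lambda'(\pm u_{ij})\le 0.
\]
The prior term contributes $-1/\sigma_z^2<0$. Hence $\ell_i''(z)\le -1/\sigma_z^2<0$, which gives strict concavity and a strictly decreasing derivative $\ell_i'$.

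The main step, and the one that needs an argument, is $\lambda'(u)\le0$, that is, $u+\lambda(u)>0$ for all $u$. For $u\ge0$ this is trivial. For $u<0$, set $t=-u>0$. The claim becomes $\phi(t)/(1-\Phi(t))>t$, which is the classical Mills-ratio inequality $1-\Phi(t)<\phi(t)/t$. That inequality follows from
\[
  1-\Phi(t)=\int_t^\infty\phi(s)\,ds<\int_t^\infty \frac{s}{t}\,\phi(s)\,ds=\frac{\phi(t)}{t}.
\]
This shows $\lambda$ is strictly decreasing, as was already used in Theorem~\ref{thm:gaussian}.

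For the limits of $\ell_i'$, use Eq.~\ref{eq:estep-grad}: $\ell_i'(z)=\sum_j a_j s_\Phi(u_{ij})(G_{ij}-\Phi(u_{ij}))-z/\sigma_z^2$. A simple bound suffices. Each likelihood contribution $\ell_{ij}'(z)$ is nonincreasing in $z$ because $\ell_{ij}''\le0$, and its sign is fixed: it equals $a_j\lambda(u_{ij})>0$ if $G_{ij}=1$ and $-a_j\lambda(-u_{ij})<0$ if $G_{ij}=0$.
\begin{itemize}
  \item As $z\to+\infty$, every term is bounded above, by $0$ or by $a_j\lambda(u_{ij})\to0$. The prior term $-z/\sigma_z^2$ goes to $-\infty$, so $\ell_i'\to-\infty$.
  \item Symmetrically, as $z\to-\infty$, every likelihood term is bounded below, and $-z/\sigma_z^2\to+\infty$, so $\ell_i'\to+\infty$.
\end{itemize}
Because $\ell_i''\le-1/\sigma_z^2$, integrating twice gives $\ell_i(z)\le \ell_i(0)+\ell_i'(0)z-z^2/(2\sigma_z^2)\to-\infty$ as $|z|\to\infty$.

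$\ell_i'$ is continuous and strictly decreasing, and by the limits above it goes from $+\infty$ to $-\infty$. By the intermediate value theorem it has exactly one root, which is the unique maximizer $\hat z_i$. For differentiability in $(a_j,b_j)$, apply the implicit function theorem to $H(z,a,b):=\ell_i'(z;a,b)=0$. The map $H$ is $C^\infty$ jointly in $(z,a,b)$ because $u_{ij}$ is smooth in all of them, and $\partial_zH=\ell_i''(\hat z_i)\le-1/\sigma_z^2\neq0$. Hence $\hat z_i$ is locally, and since the root is unique also globally, a smooth function of the parameters, with
\[
  \partial_\theta\hat z_i=-\,\partial_\theta\ell_i'(\hat z_i)\big/\ell_i''(\hat z_i).
\]
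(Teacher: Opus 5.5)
Your proposal is correct and takes essentially the same route as the paper: you bound $\ell_i''\le -1/\sigma_z^2$ using $\lambda'\le 0$, get the unique root from strict monotonicity and the derivative limits, and get differentiability from the implicit function theorem. The differences are minor. You prove $\lambda'\le 0$ directly from the Mills-ratio inequality, where the paper cites log-concavity of $\Phi$, and you obtain $\ell_i\to-\infty$ from the curvature bound, where the paper uses nonpositivity of each log-likelihood term.
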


\begin{proof}
Differentiating the log posterior objective $\ell_i$ of Eq.~\ref{eq:estep} twice gives
\[
  \ell_i''(z_i)=\sum_j a_j^2\big[G_{ij}\,\lambda'(u_{ij})+(1-G_{ij})\,\lambda'(-u_{ij})\big]-\frac{1}{\sigma_z^2}\le -\frac{1}{\sigma_z^2}<0,
\]
because $\lambda=(\log\Phi)'$ and $\log\Phi$ is concave \citep{bagnoli2005log}, so $\lambda'(x)\le0$ and every bracketed term is nonpositive. The Gaussian prior makes the displayed inequality strict. Hence $\ell_i$ is smooth and its derivative is strictly decreasing. Each log likelihood term is nonpositive, so $\ell_i(z_i)\le-z_i^2/(2\sigma_z^2)$ and the log posterior objective tends to $-\infty$ in both tails. Eq.~\ref{eq:estep-grad} also gives the stated derivative limits. The derivative is continuous and strictly decreasing, so it has exactly one root. This root is the unique maximizer. Its derivative with respect to the criterion parameters exists by implicit differentiation because $\ell_i''(\hat z_i)<0$.
\end{proof}

\begin{corollary}[Dominance of the MAP reward]
\label{cor:map-dominance}
Fix one prompt group and its criterion parameters. If two verdict vectors satisfy $G^A_j\ge G^B_j$ for every criterion, then $\hat z_A\ge\hat z_B$. The inequality is strict if at least one verdict differs. Thus a difficulty change can reverse only incomparable verdict vectors.
\end{corollary}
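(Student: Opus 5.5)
The argument compares the two log-posterior derivatives pointwise and then uses the root characterization from Theorem~\ref{thm:concavity}. By Theorem~\ref{thm:general-pass-probability} with $F=\Phi$, the derivative for a verdict vector $G$ is
\[
\ell_G'(z)=\sum_j a_j\big[G_j\lambda(u_j)-(1-G_j)\lambda(-u_j)\big]-\frac{z}{\sigma_z^2},
\qquad u_j=a_j(z-b_j).
\]
This uses the split $s_\Phi(u)(G-\Phi(u))$ into the cases $G=1$ and $G=0$, as written out in the proof of Theorem~\ref{thm:gaussian}. The parameters $(a_j,b_j)$ and $\sigma_z^2$ are shared across the prompt group, so $\ell_A'$ and $\ell_B'$ differ only through the verdicts:
\[
\ell_A'(z)-\ell_B'(z)=\sum_j a_j\,(G^A_j-G^B_j)\big[\lambda(u_j)+\lambda(-u_j)\big]=\sum_j a_j\,(G^A_j-G^B_j)\,s_\Phi(u_j).
\]
Each term is nonnegative for every $z$, because $G^A_j\ge G^B_j$, $a_j>0$, and $s_\Phi>0$ since $\phi>0$. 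If some verdict differs, the corresponding term is strictly positive at every $z$, so $\ell_A'>\ell_B'$ everywhere.

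Now evaluate at $\hat z_B$, which by Theorem~\ref{thm:concavity} is the unique root of the strictly decreasing function $\ell_B'$. We get $\ell_A'(\hat z_B)\ge\ell_B'(\hat z_B)=0$, with strict inequality when the vectors differ. Since $\ell_A'$ is strictly decreasing and vanishes only at $\hat z_A$, the inequality $\ell_A'(\hat z_B)\ge 0$ forces $\hat z_A\ge\hat z_B$. Strict positivity forces $\hat z_B\ne\hat z_A$, so $\hat z_A>\hat z_B$.

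The final sentence follows as a consequence. If $G^A$ and $G^B$ are comparable, say $G^A\ge G^B$ coordinatewise and distinct, the strict inequality holds for \emph{every} choice of criterion parameters, including every difficulty vector. A difficulty change therefore cannot reverse their order, or even tie them. Any reversal, such as the one constructed in Theorem~\ref{thm:gaussian}, must involve incomparable vectors; that example indeed uses $(0,1,1)$ and $(1,0,0)$.

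The only step that needs care is the sign of the per-criterion difference. One must rewrite $G\lambda(u)-(1-G)\lambda(-u)$ as $s_\Phi(u)\,G-\lambda(-u)$ so the difference is linear in $G$ with positive coefficient $s_\Phi(u)$. Everything else is a direct use of the monotone-root property from Theorem~\ref{thm:concavity}.
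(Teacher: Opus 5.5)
Your proof is correct and follows the paper's argument essentially verbatim: the pointwise identity $\ell_A'(z)-\ell_B'(z)=\sum_j a_j s_\Phi(u_j)(G^A_j-G^B_j)\ge 0$, evaluation at the root $\hat z_B$, and strict monotonicity of $\ell_A'$ from Theorem~\ref{thm:concavity}. Your explicit remark that comparable vectors stay strictly ordered under every difficulty vector usefully spells out the corollary's last sentence, which the paper leaves implicit.
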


\begin{proof}
At every $z$, define $u_j=a_j(z-b_j)$. Theorem~\ref{thm:general-pass-probability} gives
\[
  \ell_A'(z)-\ell_B'(z)
  =
  \sum_j a_j s_\Phi(u_j)(G^A_j-G^B_j).
\]
Every summand is nonnegative, and one is positive when a verdict differs. At the root $\hat z_B$, this gives $\ell_A'(\hat z_B)\ge0$. Since $\ell_A'$ is strictly decreasing, its root lies weakly to the right, and strictly to the right when a verdict differs.
\end{proof}

\subsection{Criterion Information and Realized Evidence}
\label{sec:fisher-information}

\begin{proof}[Proof of Theorem~\ref{thm:fisher-criterion}]
Theorem~\ref{thm:general-pass-probability} gives
\[
  \mathcal S_{ij}(z_i)
  =
  a_j s_\Phi(u_{ij})(G_{ij}-P_{ij}).
\]
Since $\mathbb E[G_{ij}\mid z_i]=P_{ij}$, its conditional mean is zero. The conditional variance of a Bernoulli verdict gives
\[
  \mathbb E[(G_{ij}-P_{ij})^2\mid z_i]
  =
  P_{ij}(1-P_{ij}).
\]
Substituting the criterion likelihood score into the definition of $I_j$ gives
\[
  I_j(z_i)=a_j^2 s_\Phi(u_{ij})^2 P_{ij}(1-P_{ij})=a_j^2\frac{\phi(u_{ij})^2}{P_{ij}(1-P_{ij})}.
\]
The last equality in Eq.~\ref{eq:fisher-criterion} is the information identity for this smooth Bernoulli likelihood. It also follows by directly differentiating the criterion likelihood score and taking its conditional expectation.
\end{proof}

\begin{theorem}[Information frontier of a criterion under $F=\Phi$]
\label{thm:information-frontier}
For a fixed criterion $j$, the information
\[
  I_j(z_i)=a_j^2 f(u_{ij}),
  \qquad
  f(u)=\frac{\phi(u)^2}{\Phi(u)\Phi(-u)}.
\]
is symmetric around $z_i=b_j$. It strictly increases for $z_i<b_j$ and strictly decreases for $z_i>b_j$. Its unique maximum is
\[
  I_j(b_j)=\frac{2}{\pi}a_j^2.
\]
It also satisfies
\[
  I_j(z_i)\longrightarrow0
  \qquad
  \text{as }
  |z_i-b_j|\longrightarrow\infty.
\]
For fixed $t$,
\[
  \frac{I_j(b_j+t/a_j)}{a_j^2}
  =
  \frac{\phi(t)^2}{\Phi(t)(1-\Phi(t))}.
\]
\end{theorem}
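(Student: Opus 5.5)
Everything reduces to the scalar function $f(u)=\phi(u)^2/[\Phi(u)\Phi(-u)]$, because $I_j(z_i)=a_j^2 f(a_j(z_i-b_j))$. The representation itself comes straight from Theorem~\ref{thm:fisher-criterion} via $1-\Phi(u)=\Phi(-u)$. Setting $z_i=b_j+t/a_j$ gives $u_{ij}=t$, which yields the rescaled identity at the end of the statement.

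\textbf{Symmetry.} Both $\phi$ and the product $\Phi(u)\Phi(-u)$ are even, so $f$ is even. Since $u_{ij}$ is an increasing affine function of $z_i$ that vanishes at $z_i=b_j$, $I_j$ is symmetric about $b_j$.

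\textbf{Monotonicity.} By evenness, it suffices to show $f$ is strictly decreasing on $(0,\infty)$. I would differentiate $\log f$:
\[
(\log f)'(u)=-2u-\lambda(u)+\lambda(-u),\qquad \lambda=\phi/\Phi.
\]
The goal is to show this is negative for $u>0$. Both $-2u$ and $-\lambda(u)$ are negative, so the task is to control $\lambda(-u)$, the inverse Mills ratio $\phi(u)/(1-\Phi(u))$. The standard Gordon bound gives
\[
\lambda(-u)<\frac{u+\sqrt{u^2+4}}{2}\quad\text{for }u>0,
\]
so it suffices to check
\[
\frac{u+\sqrt{u^2+4}}{2}<2u+\lambda(u).
\]
This is equivalent to $\sqrt{u^2+4}<3u+2\lambda(u)$. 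Squaring, it becomes
\[
4<8u^2+12u\lambda(u)+4\lambda(u)^2.
\]
For $u\ge 1$ this holds trivially from $8u^2\ge 8$. For $0<u<1$, use $\lambda(u)\ge\phi(1)\approx 0.242$ (since $\lambda$ is decreasing) together with a direct estimate.

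The small-$u$ regime is the main obstacle, because the crude bounds are loose near $0$. At $u=0$ the derivative is exactly $0$, since $\lambda(0)=\lambda(-0)$. So I would instead examine $h(u)=\lambda(-u)-\lambda(u)-2u$. We have $h(0)=0$, and
\[
h'(u)=-\lambda'(-u)-\lambda'(u)-2.
\]
Because $-\lambda'(x)=\lambda(x)(x+\lambda(x))\in(0,1)$ (the variance of a truncated normal), $h'(u)<1+1-2=0$. Hence $h$ is strictly negative on $(0,\infty)$. This argument is cleaner than the Gordon bound, so I would adopt it as the main route.

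\textbf{Maximum value.} At the peak, $f(0)=\phi(0)^2/(1/4)=4\cdot\frac{1}{2\pi}=\frac{2}{\pi}$, so $I_j(b_j)=\frac{2}{\pi}a_j^2$.

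\textbf{Decay.} For the limit, Mills' ratio gives $\Phi(-u)\sim\phi(u)/u$, so $f(u)\sim u\,\phi(u)\to 0$. Evenness handles the left tail, and the asymptotic form of $s_\Phi$ in Theorem~\ref{thm:gaussian} can also be cited here.
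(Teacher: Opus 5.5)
Your proof is correct and is essentially the paper's proof. The paper shows $(\log f)''(u)=-2-\lambda'(u)-\lambda'(-u)=-\operatorname{Var}(X\mid X\le u)-\operatorname{Var}(X\mid X>u)<0$, so the even function $f$ is strictly log-concave with unique maximum $2/\pi$ at $0$, and it uses Mills' ratio for the decay; your $h=(\log f)'$ argument and tail estimate match this step for step. One small wording fix: the identity is $1+\lambda'(x)=\operatorname{Var}(X\mid X\le x)$, so $-\lambda'(x)$ is one minus a truncated-normal variance rather than the variance itself, which still gives $-\lambda'(x)\in(0,1)$; the Gordon-bound route you dropped could not have worked, since at $u=0$ its squared inequality reads $4<8/\pi$, which is false.
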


\begin{proof}
The function $f$ is even because $\phi$ is even and $\Phi(-u)=1-\Phi(u)$. Let $X$ be a standard Gaussian random variable. Using $\lambda(u)=\phi(u)/\Phi(u)$ gives
\[
  \frac{d^2}{du^2}\log f(u)=-2-\lambda'(u)-\lambda'(-u)=-\operatorname{Var}(X\mid X\le u)-\operatorname{Var}(X\mid X>u)<0.
\]
The second equality uses $1+\lambda'(u)=\operatorname{Var}(X\mid X\le u)$ and Gaussian symmetry. Thus $f$ is strictly log-concave. An even, strictly log-concave function has its unique maximum at zero and is strictly monotone on either side. At zero,
\[
  f(0)
  =
  \frac{\phi(0)^2}{(1/2)(1/2)}
  =
  \frac{2}{\pi}.
\]
Mills' ratio gives $f(u)\sim |u|\phi(u)$ as $|u|\to\infty$, which tends to zero. The final display follows by substituting $z_i=b_j+t/a_j$.
\end{proof}

Theorem~\ref{thm:information-frontier} separates the criterion parameters. The difficulty $b_j$ places the information peak on the quality scale. The discrimination $a_j$ raises the peak quadratically and makes its width on the $z_i$ scale proportional to $1/a_j$.

For a realized verdict, the magnitude of one criterion likelihood score is
\[
  \left|
    \frac{\partial\ell_{ij}}{\partial z_i}
  \right|
  =
  \begin{cases}
    \displaystyle
    a_j\frac{\phi(u_{ij})}{P_{ij}},
    &G_{ij}=1,\\[8pt]
    \displaystyle
    a_j\frac{\phi(u_{ij})}{1-P_{ij}},
    &G_{ij}=0.
  \end{cases}
\]
Consequently,
\[
  \frac{
    \left|\partial\ell_{ij}/\partial z_i\right|_{G_{ij}=1}
  }{
    \left|\partial\ell_{ij}/\partial z_i\right|_{G_{ij}=0}
  }
  =
  \frac{1-P_{ij}}{P_{ij}}.
\]
These criterion likelihood scores measure evidence from the realized verdict, whereas Fisher information measures expected usefulness before the verdict is observed. A pass has nine times the likelihood score magnitude of a failure when $P_{ij}=0.1$. A failure has nine times the magnitude of a pass when $P_{ij}=0.9$. Expected information and realized evidence can therefore rank a criterion differently.

Figure~\ref{fig:information-profiles} uses pass probability as its horizontal coordinate. Let \(p=P_{ij}\) and \(u=\Phi^{-1}(p)\). The three plotted curves are
\[
  \frac{I_j(z_i)}{a_j^2}=\frac{\phi(u)^2}{p(1-p)},
  \qquad
  \left.\frac{1}{a_j}\frac{\partial\ell_{ij}}{\partial z_i}\right|_{G_{ij}=1}=\frac{\phi(u)}{p},
  \qquad
  \left.\frac{1}{a_j}\frac{\partial\ell_{ij}}{\partial z_i}\right|_{G_{ij}=0}=-\frac{\phi(u)}{1-p}.
\]
The information curve reaches \(2/\pi\) at \(p=1/2\). The pass and failure curves grow in magnitude when the observed verdict has low fitted probability.

\begin{figure}[!htbp]
  \centering
  \includegraphics[width=\textwidth]{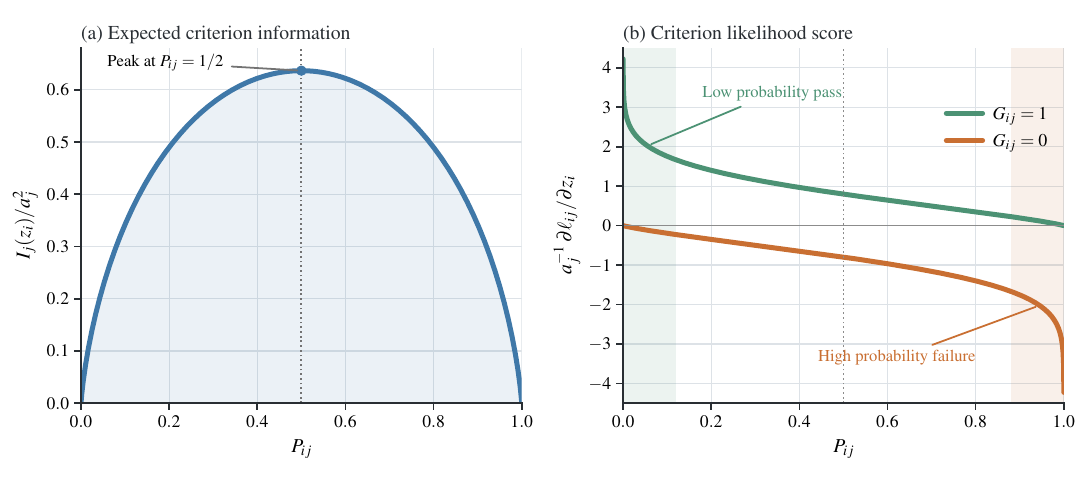}
  \caption{Criterion information and likelihood scores under the Gaussian CDF $F=\Phi$ as functions of pass probability $P_{ij}$. The left panel shows normalized information $I_j(z_i)/a_j^2$. The right shows $a_j^{-1}\partial\ell_{ij}/\partial z_i$ for a pass and failure. The dotted line marks $P_{ij}=1/2$, and shading marks verdicts with low probability.}
  \label{fig:information-profiles}
\end{figure}

\subsection{Local Efficiency of Reward Aggregation}
\label{sec:local-efficiency}

Fix a differentiable $F$, a quality level $z_i$, fixed criterion parameters $(a_j,b_j)$, and conditionally independent verdicts under the item response model of Section~\ref{sec:exam}. Define
\[
  I_{F,j}(z_i):=a_j^2\frac{F'(u_{ij})^2}{P_{ij}(1-P_{ij})}
  =P_{ij}(1-P_{ij})[a_js_F(u_{ij})]^2,
\]
and suppose $\sum_j I_{F,j}(z_i)>0$. Consider the linear statistic
\[
  T_v=\sum_j v_j(G_{ij}-P_{ij}),
\]
where $P_{ij}$ is evaluated at $z_i$. Suppose $T_v$ has unit local response to a quality change,
\begin{equation}
  \left.
  \frac{d}{dh}
  \mathbb E_{z_i+h}[T_v]
  \right|_{h=0}
  =
  \sum_j v_j a_jF'(u_{ij})
  =1.
  \label{eq:local-unbiasedness}
\end{equation}
The proof of Theorem~\ref{thm:local-efficiency} extends its bound to general $F$. Applying this bound to $T_v$ gives the statistic with minimum variance
\begin{equation}
  T^*
  =
  \frac{
    \sum_j a_j s_F(u_{ij})(G_{ij}-P_{ij})
  }{
    \sum_j I_{F,j}(z_i)
  }.
  \label{eq:efficient-local-estimator}
\end{equation}
Its conditional variance is
\[
  \operatorname{Var}(T^*\mid z_i)
  =
  \left[\sum_j I_{F,j}(z_i)\right]^{-1}.
\]
Differentiability gives $\mathbb E_{z_i+h}[T^*]=h+o(h)$. This unbiasedness is local.

For $F=\Phi$, $I_{F,j}=I_j$ and the numerator of Eq.~\ref{eq:efficient-local-estimator} is the rubric likelihood score $\mathcal S_i(z_i)$, the likelihood contribution in Eq.~\ref{eq:estep-grad}. The efficient coefficient is $a_j s_\Phi(u_{ij})$, not $a_j$ alone. The latter coefficient belongs to $F=\sigma$ because $s_\sigma\equiv1$.

Under conditional independence, the expected negative curvature of the log posterior is
\begin{equation}
  -\mathbb E[\ell_i''(z_i)\mid z_i]
  =
  \frac{1}{\sigma_z^2}+\sum_j I_j(z_i).
  \label{eq:expected-local-precision}
\end{equation}
The Gaussian prior contributes $1/\sigma_z^2$. Theorem~\ref{thm:fisher-criterion} gives the expected negative curvature $I_j(z_i)$ of each criterion, and the terms add under conditional independence. For duplicate or dependent criteria, information is not additive. After substituting $\hat z_i$, the inverse square root of the expected precision and the observed curvature $[-\ell_i''(\hat z_i)]^{-1/2}$ give local uncertainty approximations.

\subsection{Alignment with the Reward Based on Rubric Points}
\label{sec:points-progress-proofs}

Under the item response model, the expected reward based on rubric points is strictly increasing in scalar quality and does not decrease under first-order stochastic dominance. This section proves that result and compares this reward with the rubric likelihood score and MAP reward as local training signals. Fix a prompt $q$, its rubric $c$, and the criterion parameters $(a_j,b_j)=\psi(q,c_j)$. For the verdict vector $G_i$, the rubric likelihood score expands to
\[
  \mathcal S_i(z_i)=\sum_j\mathcal S_{ij}(z_i)
  =\sum_j a_j s_F(u_{ij})\big(G_{ij}-P_{ij}\big)
\]
for a general $F$. Define the rubric information for general $F$ as
\[
  I_F(z_i):=\sum_j I_{F,j}(z_i),
\]
which equals the rubric information $I(z_i)$ under the adopted $F=\Phi$. The results below assume the model of Eq.~\ref{eq:likelihood}.

\begin{theorem}[Monotone expected reward based on rubric points]
\label{thm:points-monotone}
Let $F$ be differentiable and strictly increasing with $F'>0$. Then
\[
  m(z_i):=\mathbb E\big[R_i^{\mathrm{base}}\,\big|\,z_i\big]
  =\frac{\sum_j w_jP_{ij}}{\sum_\ell w_\ell},
  \qquad
  m'(z_i)=\frac{\sum_j w_ja_jF'(u_{ij})}{\sum_\ell w_\ell}>0 .
\]
Let $Z_\theta$ be the quality of a rollout drawn from $\pi_\theta(\cdot\mid q)$, so that $\mathbb E[R_i^{\mathrm{base}}\mid q]=\mathbb E[m(Z_\theta)]$. If the quality after a policy update first-order stochastically dominates the quality before it, the expected reward based on rubric points does not decrease.
\end{theorem}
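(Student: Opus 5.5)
The plan is to compute the conditional mean directly, differentiate it, and then deduce the stochastic-dominance claim from monotonicity.

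\textbf{Step 1 (conditional mean).} Since $R_i^{\mathrm{base}}=\sum_j w_jG_{ij}/\sum_\ell w_\ell$ is linear in the verdicts, linearity of conditional expectation applies. The model gives $G_{ij}\mid z_i\sim\mathrm{Bernoulli}(P_{ij})$, hence $\mathbb E[G_{ij}\mid z_i]=P_{ij}=F(a_j(z_i-b_j))$. This yields $m(z_i)=\sum_j w_jP_{ij}/\sum_\ell w_\ell$ immediately. Local independence is not even needed for this step, only the marginal Bernoulli law of each verdict.

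\textbf{Step 2 (derivative and positivity).} I differentiate term by term with the chain rule: $\partial_{z_i}P_{ij}=a_jF'(u_{ij})$. Since $K$ is finite, this gives the displayed $m'$. Positivity follows because every summand $w_ja_jF'(u_{ij})$ is strictly positive: $w_j>0$, $a_j>0$, and $F'>0$ by hypothesis. Hence $m$ is strictly increasing and, in particular, nondecreasing.

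\textbf{Step 3 (stochastic dominance).} By the tower property, $\mathbb E[R_i^{\mathrm{base}}\mid q]=\mathbb E[\mathbb E[R_i^{\mathrm{base}}\mid Z_\theta]]=\mathbb E[m(Z_\theta)]$. Here I use that the verdicts depend on the policy only through the rollout's quality, with fixed criterion parameters; this is exactly the model of Eq.~\ref{eq:likelihood}. Now let $Z'$ first-order stochastically dominate $Z$. I would invoke the standard characterization that $\mathbb E[g(Z')]\ge\mathbb E[g(Z)]$ for every bounded nondecreasing $g$. The function $m$ is nondecreasing, and it is bounded in $[0,1]$ since each $P_{ij}\in(0,1)$, so the expectations exist and the inequality applies.

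If a self-contained argument is preferred, I would use the quantile coupling. Take $U\sim\mathrm{Unif}(0,1)$ and set $Z=F_Z^{-1}(U)$ and $Z'=F_{Z'}^{-1}(U)$. Dominance gives $Z'\ge Z$ pointwise, so $m(Z')\ge m(Z)$ pointwise; taking expectations gives the claim.

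\textbf{Main obstacle.} I expect no real difficulty in the calculation. The only point needing care is the modeling step in Step 3: one must state explicitly that the conditional law of $G_i$ given quality, and hence the function $m$, is unchanged by the policy update, with the prompt and criterion parameters held fixed. Once that is stated, boundedness of $m$ removes any integrability concern.
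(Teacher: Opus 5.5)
Your proposal is correct and follows the paper's proof essentially step for step: linearity of expectation with $\mathbb E[G_{ij}\mid z_i]=P_{ij}$, the chain rule $\partial_{z_i}P_{ij}=a_jF'(u_{ij})$ with termwise positivity, and monotonicity of $m$ combined with first-order stochastic dominance. Your remarks on the boundedness of $m$ and the quantile coupling spell out what the paper simply calls the defining property of dominance.
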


\begin{proof}
Linearity of expectation and $\mathbb E[G_{ij}\mid z_i]=P_{ij}$ give $m$. Differentiating $P_{ij}=F(u_{ij})$ gives $\partial P_{ij}/\partial z_i=a_jF'(u_{ij})$, and every term is positive because $w_j>0$, $a_j>0$, and $F'>0$. Since $m$ is increasing, $\mathbb E[m(Z_{\text{after}})]\ge\mathbb E[m(Z_{\text{before}})]$ is the defining property of first-order stochastic dominance.
\end{proof}

Theorem~\ref{thm:points-monotone} establishes alignment under first-order stochastic dominance. Outside this condition, the two training signals can disagree. A location shift of the quality distribution is one sufficient special case.

\begin{proof}[Proof of Theorem~\ref{thm:local-efficiency}]
The argument uses only differentiability of $F$ and conditional independence. It therefore gives the bound with $I_F(z_i)$ in place of $I(z_i)$ for a general response function, whenever $0<P_{ij}<1$ for every $j$ and $I_F(z_i)>0$. The verdict vector takes finitely many values, so the sum defining $\mathbb E[T\mid z_i]$ may be differentiated term by term. Writing $p(G_i\mid z_i)$ for the likelihood of Eq.~\ref{eq:likelihood},
\[
  \frac{d}{dz_i}\mathbb E[T\mid z_i]=\sum_{G_i}T(G_i)\,p(G_i\mid z_i)\,\mathcal S_i(z_i)=\mathbb E[T\,\mathcal S_i(z_i)\mid z_i]=\operatorname{Cov}\big(T,\mathcal S_i(z_i)\mid z_i\big),
\]
where the last step uses $\mathbb E[G_{ij}\mid z_i]=P_{ij}$, which makes each criterion likelihood score have conditional mean zero. Conditional independence gives
\[
  \operatorname{Var}\big(\mathcal S_i(z_i)\mid z_i\big)
  =\sum_j\mathbb E\big[\mathcal S_{ij}(z_i)^2\mid z_i\big]
  =I_F(z_i).
\]
Cauchy-Schwarz then gives $\operatorname{Cov}(T,\mathcal S_i)^2\le\operatorname{Var}(T)\,I_F(z_i)$, which is the stated bound, with equality if and only if the two centered variables are almost surely proportional under $p_\psi(G_i\mid z_i,q,c)$. The constant cannot be zero, since that would make $T$ almost surely constant and contradict $\operatorname{Var}(T\mid z_i)>0$. Replacing $T$ by $\alpha+\beta T$ with $\beta\neq0$ multiplies numerator and denominator of Eq.~\ref{eq:local-snr} by $\beta^2$, so the ratio is unchanged by any nonzero affine map of $T$.
\end{proof}

\begin{proof}[Proof of the clause for rubric points in Theorem~\ref{thm:local-efficiency}]
Under the same conditions,
\[
  \operatorname{SNR}_{\mathrm{base}}(z_i)
  =\frac{\big(\sum_j w_ja_jF'(u_{ij})\big)^2}
        {\sum_j w_j^2P_{ij}(1-P_{ij})}
  \le I_F(z_i),
\]
with equality if and only if $w_j=\gamma_w\,a_js_F(u_{ij})$ for every $j$ and some $\gamma_w>0$. For $F=\sigma$ this condition reads $w_j\propto a_j$. For $F=\Phi$ it reads $w_j\propto a_js_\Phi(u_{ij})$, whose right side depends on $b_j$ and on $z_i$. The displayed ratio follows from $\partial P_{ij}/\partial z_i=a_jF'(u_{ij})$, from conditional independence, and from the Bernoulli variance $P_{ij}(1-P_{ij})$. The factor $\sum_\ell w_\ell$ cancels. Equality in the information bound requires
\[
  \sum_j\left(\frac{w_j}{\sum_\ell w_\ell}-\gamma_w'a_js_F(u_{ij})\right)(G_{ij}-P_{ij})=0
\]
almost surely. Taking the conditional variance of the left side gives a sum of nonnegative terms with positive factors $P_{ij}(1-P_{ij})$, so every coefficient vanishes. Since $w_j>0$, equality requires $s_F(u_{ij})>0$ for every $j$ and $\gamma_w'>0$. For $F=\sigma$, $s_\sigma\equiv1$ by Theorem~\ref{thm:logit}. For $F=\Phi$, take two criteria and suppose equality holds at every $z_i$, so that
\[
  \chi(z_i)
  :=\frac{a_1s_\Phi\big(a_1(z_i-b_1)\big)}
         {a_2s_\Phi\big(a_2(z_i-b_2)\big)}
  =\frac{w_1}{w_2}
\]
is a constant $\chi_0$. Mills' ratio sharpens the asymptotic of Theorem~\ref{thm:gaussian} to $s_\Phi(u)=u+O(u^{-1})$ as $u\to\infty$, so $a_js_\Phi(a_j(z_i-b_j))=a_j^2(z_i-b_j)+O(z_i^{-1})$ and
\[
  \big(a_1^2-\chi_0a_2^2\big)z_i-\big(a_1^2b_1-\chi_0a_2^2b_2\big)\longrightarrow0 .
\]
An affine function with this limit vanishes identically, so $a_1^2=\chi_0a_2^2$ and then $b_1=b_2$. Evaluating $\chi$ at $z_i=b_1$ gives $\chi=a_1/a_2$ through $s_\Phi(0)=4/\sqrt{2\pi}$, so $a_1/a_2=\chi_0=a_1^2/a_2^2$ and $a_1=a_2$. Therefore, once two criteria have different criterion parameters, no fixed vector of rubric points achieves the bound at every quality level.
\end{proof}

The points $w_j$ specify how much each criterion should count. The coefficients of the verdicts in Eq.~\ref{eq:efficient-local-estimator} are proportional to $a_js_F(u_{ij})$. Under $F=\Phi$, they vary with quality while the points $w_j$ stay fixed. Unless the coefficients are proportional, the points total has lower local SNR than the likelihood score. The reward based on rubric points also cannot distinguish verdict vectors with the same points total.

\begin{proposition}[Points ties and reward separation]
\label{prop:points-ties}
Fix a prompt group and its criterion parameters. If two rollouts satisfy $\sum_j w_jG_{1j}=\sum_j w_jG_{2j}$, then $R_1^{\mathrm{base}}=R_2^{\mathrm{base}}$.
The RRT reward $\hat z_i$ can still separate two such rollouts. Under $F=\sigma$ it separates them if and only if $\sum_j a_jG_{1j}\neq\sum_j a_jG_{2j}$. Under $F=\Phi$ it can separate them even when those totals weighted by discrimination agree.
\end{proposition}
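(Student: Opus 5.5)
The proof splits into three parts, one for each sentence of the proposition.

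\textbf{Points ties.} The first claim is immediate from the definition. Since $R_i^{\mathrm{base}}=\sum_j w_jG_{ij}/\sum_j w_j$ and the denominator is common to both rollouts, equal points totals give equal rewards.

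\textbf{Logistic case.} We reuse the proof of Theorem~\ref{thm:logit}. There, under $F=\sigma$, the MAP reward solves $g(\hat z_i)=\sum_j a_jG_{ij}$, where
\[
  g(z)=\sum_j a_j\sigma\big(a_j(z-b_j)\big)+z/\sigma_z^2 .
\]
The map $g$ is a strictly increasing bijection of $\mathbb R$ and does not depend on the rollout. Hence $\hat z_i=g^{-1}\big(\sum_j a_jG_{ij}\big)$. Because $g^{-1}$ is injective, $\hat z_1\neq\hat z_2$ if and only if the totals weighted by discrimination differ. This yields the ``if and only if'' clause.

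\textbf{Gaussian case.} For $F=\Phi$ we need one explicit example in which the points totals agree and the discrimination-weighted totals agree, yet the MAP rewards differ. We take two criteria with $w_1=w_2=1$, $a_1=a_2=1$, $\sigma_z^2=1$, $b_1=d\neq b_2=0$, and verdict vectors $G^1=(1,0)$ and $G^2=(0,1)$. Both totals equal $1$.

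By Theorem~\ref{thm:general-pass-probability}, the two stationary equations are
\[
  \ell_1'(z)=\lambda(z-d)-\lambda(-z)-z=0,
  \qquad
  \ell_2'(z)=-\lambda(d-z)+\lambda(z)-z=0 .
\]
We compare them directly by evaluating $\ell_1'$ at $\hat z_2$, following the template of Corollary~\ref{cor:map-dominance}. The difference is
\[
  \ell_1'(z)-\ell_2'(z)=s_\Phi(z-d)-s_\Phi(z),
\]
using $\lambda(u)+\lambda(-u)=s_\Phi(u)$ from Theorem~\ref{thm:gaussian}. Since $\ell_2'(\hat z_2)=0$, the sign of $\ell_1'(\hat z_2)$ equals the sign of $s_\Phi(\hat z_2-d)-s_\Phi(\hat z_2)$. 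Whenever this is nonzero, strict monotonicity of $\ell_1'$ (Theorem~\ref{thm:concavity}) gives $\hat z_1\neq\hat z_2$.

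It therefore suffices to show $s_\Phi(\hat z_2-d)\neq s_\Phi(\hat z_2)$. Because $s_\Phi$ is even, this amounts to $|\hat z_2-d|\neq|\hat z_2|$, provided $s_\Phi$ is strictly increasing in $|u|$.

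\textbf{Main obstacle.} The remaining work is to justify that monotonicity and to rule out the coincidence $|\hat z_2-d|=|\hat z_2|$. Since $d\neq0$, that coincidence means $\hat z_2=d/2$.

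For monotonicity, $s_\Phi=1/\big(\Phi(1-\Phi)\big)\cdot\phi$ is even. The log-concavity computation in Theorem~\ref{thm:information-frontier} can be adapted to show it is strictly increasing in $|u|$. Alternatively, we can choose $d$ large and use the asymptotic $s_\Phi(u)\sim|u|$.

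To exclude $\hat z_2=d/2$, we take $d$ large and show $\hat z_2$ stays bounded while $d/2\to\infty$. Indeed, $\ell_2'(z)\le\lambda(z)-z$, and $\lambda(z)-z$ has a finite root, so $\hat z_2$ is bounded above. It is also bounded below, since $-\lambda(d-z)\to0$ uniformly on bounded sets. For large $d$ this gives $s_\Phi(\hat z_2-d)>s_\Phi(\hat z_2)$, hence $\ell_1'(\hat z_2)>0$ and $\hat z_1>\hat z_2$: the two rollouts are separated.
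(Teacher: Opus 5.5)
Your proposal is correct. It uses the same three-part split and the same two-criterion example as the paper, and the points-tie and logistic parts match the paper's argument. The difference lies in how the two Gaussian-case roots are compared.

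\textbf{Comparison step.} The paper uses a single fixed comparison point. Since $\ell_2'(z)\le\lambda(z)-z$, it gets $\hat z_2\le z^\star$. Then $\ell_1'(z^\star)>0$ for large $b$ because $\lambda(z^\star-b)\to\infty$, so $\hat z_1>z^\star\ge\hat z_2$. This needs only an upper bound on $\hat z_2$. You instead evaluate $\ell_1'$ at $\hat z_2$ through the identity $\ell_1'-\ell_2'=s_\Phi(z-d)-s_\Phi(z)$, in the style of Corollary~\ref{cor:map-dominance}. On your asymptotic branch this requires two-sided control of $\hat z_2$, so that $s_\Phi(\hat z_2)$ stays bounded while $s_\Phi(\hat z_2-d)\to\infty$. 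You do supply that control. State the lower bound as an explicit evaluation rather than ``uniformly on bounded sets.'' For instance, $\ell_2'(0)=\lambda(0)-\lambda(d)>0$ for $d>0$ gives $\hat z_2>0$.

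\textbf{The monotonicity branch.} Do not rely on this branch as written. The log-concavity argument of Theorem~\ref{thm:information-frontier} does not adapt. We have $(\log s_\Phi)''(u)=1-\operatorname{Var}(X\mid X\le u)-\operatorname{Var}(X\mid X>u)$, which is positive at $u=0$ but negative for large $|u|$. So $s_\Phi$ is neither log-convex nor log-concave, and the evenness trick gives nothing.

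Strict monotonicity of $s_\Phi$ on $[0,\infty)$ is nonetheless true. It follows from $s_\Phi'(u)=\lambda'(u)-\lambda'(-u)$ together with strict convexity of $\lambda$, a classical Mills-ratio fact that the paper does not prove. If you import it, your identity gives something the paper's route does not. Since $\ell_2'(d/2)=-d/2<0$, you get $\hat z_2<d/2$, hence $|\hat z_2-d|>|\hat z_2|$, and therefore $\hat z_1>\hat z_2$ for every $d>0$, not only for large $d$.

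For the existence claim the proposition actually makes, the paper's fixed-point comparison is shorter and self-contained.
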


\begin{proof}
Equality of the rewards based on rubric points follows from $R_i^{\mathrm{base}}=\sum_j w_jG_{ij}/\sum_\ell w_\ell$.
The claim for $F=\sigma$ is Theorem~\ref{thm:logit}, which makes $\hat z_i$ a strictly increasing function of $\sum_j a_jG_{ij}$ shared by the group. For $F=\Phi$ take $K=2$, $w_1=w_2=1$, $a_1=a_2=1$, $\sigma_z^2=1$, $b_2=0$, $b_1=b>0$, and the verdict vectors $G_1=(1,0)$ and $G_2=(0,1)$, which agree on both totals. Eq.~\ref{eq:estep-grad} gives
\[
  \ell_1'(z)=\lambda(z-b)-\lambda(-z)-z,
  \qquad
  \ell_2'(z)=\lambda(z)-\lambda(b-z)-z .
\]
Since $\lambda>0$, it follows that $\ell_2'(z)\le\lambda(z)-z$, whose unique root $z^\star$ is finite, so $\hat z_2\le z^\star$. Mills' ratio gives $\lambda(z^\star-b)\to\infty$ as $b\to\infty$ while $\lambda(-z^\star)+z^\star$ is fixed, so $\ell_1'(z^\star)>0$ for large $b$. Because $\ell_1'$ is strictly decreasing by Theorem~\ref{thm:concavity}, $\hat z_1>z^\star\ge\hat z_2$.
\end{proof}

\begin{theorem}[Exact MAP expansion and group invariance]
\label{thm:local-advantage}
Suppose $F=\Phi$. For a group statistic vector $\mathbf T=(T_1,\ldots,T_N)$, write $\bar T=N^{-1}\sum_kT_k$ and define
\[
  \mathcal A_i[\mathbf T]:=\frac{T_i-\bar T}{\operatorname{std}_kT_k+\varepsilon}
\]
for the group advantage operator of Eq.~\ref{eq:reward}. Let $\hat{\mathbf z}=(\hat z_1,\ldots,\hat z_N)$.
If $\varepsilon=0$ and $\operatorname{std}_kT_k>0$, then $\mathcal A_i[\alpha\mathbf 1+\beta\mathbf T]=\mathcal A_i[\mathbf T]$ for every $\alpha$ and every $\beta>0$. Fix a quality level $z_0$. For every rollout there is a point $\xi_i$ between $z_0$ and $\hat z_i$ with
\[
  \hat z_i-z_0=\frac{\mathcal S_i(z_0)-z_0/\sigma_z^2}{-\ell_i''(\xi_i)} .
\]
\end{theorem}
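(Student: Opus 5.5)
The statement has two independent parts, and I would prove them separately: an algebraic invariance of the group advantage operator, and a mean value expansion of the MAP reward around a reference quality level.

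\emph{Invariance.} I would start from $\mathbf T'=\alpha\mathbf 1+\beta\mathbf T$. Linearity of the mean gives $\bar T'=\alpha+\beta\bar T$, so $T_i'-\bar T'=\beta(T_i-\bar T)$. Because the group standard deviation is translation invariant and positively homogeneous, $\operatorname{std}_kT_k'=|\beta|\operatorname{std}_kT_k=\beta\operatorname{std}_kT_k$ for $\beta>0$. This quantity is positive whenever $\operatorname{std}_kT_k>0$, so the operator is well defined at $\varepsilon=0$. The factor $\beta$ then cancels in numerator and denominator. I would note where each hypothesis enters: $\varepsilon=0$ is needed because otherwise $\beta$ does not cancel, and $\beta>0$ is needed to avoid a sign flip.

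\emph{MAP expansion.} Theorem~\ref{thm:concavity} gives that $\ell_i$ is smooth and that $\hat z_i$ is the unique root of $\ell_i'$. I would apply the mean value theorem to $\ell_i'$ on the interval between $z_0$ and $\hat z_i$. This yields a point $\xi_i$ in that interval with
\[
  \ell_i'(\hat z_i)-\ell_i'(z_0)=\ell_i''(\xi_i)(\hat z_i-z_0).
\]
Since $\ell_i'(\hat z_i)=0$, this becomes $\hat z_i-z_0=\ell_i'(z_0)/(-\ell_i''(\xi_i))$.

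Next I would identify the numerator. Eq.~\ref{eq:estep-grad} with $F=\Phi$ gives $\ell_i'(z_0)=\sum_j a_j s_\Phi(u_{ij})(G_{ij}-\Phi(u_{ij}))-z_0/\sigma_z^2$, with $u_{ij}$ evaluated at $z_0$. The sum is exactly $\mathcal S_i(z_0)$ as expanded in the section on rubric points. Division is legitimate because Theorem~\ref{thm:concavity} gives $-\ell_i''\ge1/\sigma_z^2>0$ everywhere, in particular at $\xi_i$.

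The only delicate point is the degenerate case $\hat z_i=z_0$. There the mean value theorem interval collapses, and I would take $\xi_i=z_0$; the identity then holds because both sides are zero, since $\ell_i'(z_0)=0$. I would also state explicitly that $\xi_i$ may depend on $i$ and on the verdicts, so the identity is exact but not a common linearization. That caveat motivates the Fisher scoring surrogate, which replaces $-\ell_i''(\xi_i)$ by the expected precision of Eq.~\ref{eq:expected-local-precision}.
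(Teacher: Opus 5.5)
Your proposal is correct and follows the paper's proof: the same scaling of the group mean and standard deviation for the invariance, and the same mean value theorem applied to $\ell_i'$ using $\ell_i'(\hat z_i)=0$ from Theorem~\ref{thm:concavity}, with the numerator identified through Eq.~\ref{eq:estep-grad}. Your additional checks, that $-\ell_i''\ge 1/\sigma_z^2>0$ so the division is valid and that the case $\hat z_i=z_0$ holds trivially, are sound details the paper leaves implicit.
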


\begin{proof}
For $\beta>0$ the group mean of $\alpha\mathbf 1+\beta\mathbf T$ is $\alpha+\beta\bar T$ and its standard deviation is $\beta\operatorname{std}_kT_k$, so the two factors $\beta$ cancel at $\varepsilon=0$ when the denominator is positive. Theorem~\ref{thm:concavity} makes $\ell_i$ smooth with $\ell_i'(\hat z_i)=0$, so the mean value theorem gives $0=\ell_i'(z_0)+\ell_i''(\xi_i)(\hat z_i-z_0)$. Eq.~\ref{eq:estep-grad} gives $\ell_i'(z_0)=\mathcal S_i(z_0)-z_0/\sigma_z^2$.
\end{proof}

A Fisher scoring surrogate replaces the realized curvature in Theorem~\ref{thm:local-advantage} by the expected local precision in Eq.~\ref{eq:expected-local-precision}. Define
\begin{equation}
  \widetilde z_i(z_0)
  :=z_0+
  \frac{\mathcal S_i(z_0)-z_0/\sigma_z^2}{\sigma_z^{-2}+I(z_0)}.
  \label{eq:local-advantage}
\end{equation}
Let $\widetilde{\mathbf z}(z_0)=(\widetilde z_1(z_0),\ldots,\widetilde z_N(z_0))$ and $\boldsymbol{\mathcal S}(z_0)=(\mathcal S_1(z_0),\ldots,\mathcal S_N(z_0))$. The denominator in Eq.~\ref{eq:local-advantage} is positive and shared across the group. Therefore, if $\varepsilon=0$ and the score vector has positive standard deviation, affine invariance gives the exact identity
\[
  \mathcal A_i[\widetilde{\mathbf z}(z_0)]
  =\mathcal A_i[\boldsymbol{\mathcal S}(z_0)].
\]
Replacing each realized curvature by the shared expected precision can change the exact MAP order.

The MAP reward can rank incomparable verdict vectors differently from the reward based on rubric points.

\section{Method and Optimization Details}
\label{sec:method-derivations}

\subsection{Online EM Objective and Algorithms}
\label{sec:hard-em}

Algorithm~\ref{alg:rrt} gives the full policy training and calibration loop.

\begin{algorithm}[!htbp]
\caption{RRT policy training with online GRPO and EM calibration of the RPN.}
\label{alg:rrt}
\small
\begin{algorithmic}[1]
\State \textbf{Initialize} policy $\pi_\theta$ from the checkpoint of the base policy and RPN $\psi$ neutrally or with a warm start
\For{each policy step}
  \State sample a batch of prompts and draw $N$ rollouts per prompt from $\pi_\theta$
  \State judge each rollout against its rubric $\to$ verdicts $G$
  \State \textbf{E-step:} in evaluation mode, infer $\hat z_i^{\mathrm{reward}}\gets\arg\max_{z}\ell_i(z)$ by bisection using Eq.~\ref{eq:estep-grad} (Algorithm~\ref{alg:estep}), $R_i\gets\hat z_i^{\mathrm{reward}}$
  \State compute $A_i$ within each prompt group (Eq.~\ref{eq:reward})
  \State \textbf{GRPO:} update $\pi_\theta$ on $J(\theta)$ (Eq.~\ref{eq:grpo})
  \State \textbf{Stochastic partial M-step:} one accumulated gradient step on $\psi$ with detached mode targets $\hat z_i^{\mathrm{M}}$ using Eq.~\ref{eq:mstep} (Algorithm~\ref{alg:mstep})
\EndFor
\end{algorithmic}
\end{algorithm}

Let $\mathbf z=(z_1,\ldots,z_N)$ collect the latent qualities of the rollouts. The ideal EM objective is the regularized log posterior for the complete data
\begin{equation}
  \mathcal C_{\mathrm{EM}}(\psi,\mathbf z)
  =\sum_{i=1}^{N}\Big[\sum_{j=1}^{K}\ell_{ij}(z_i;\psi)-\frac{z_i^2}{2\sigma_z^2}\Big]
  -N\lambda_a\sum_{j=1}^{K}(\log a_j)^2.
  \label{eq:hard-em-objective}
\end{equation}
With $\psi^{(s)}$ fixed, maximizing it over each $z_i$ gives the E-step of Eq.~\ref{eq:estep}. With $\mathbf z$ fixed, exact minimization of Eq.~\ref{eq:mstep} gives the other coordinate update up to normalization because the Gaussian quality prior has no parameter in $\psi$. Algorithm~\ref{alg:mstep} instead takes one stochastic optimizer step. Because $z_i$ is scalar and its log posterior is strictly concave by Theorem~\ref{thm:concavity}, each E-step has one solution and bisection computes it deterministically for fixed criterion parameters.

The likelihood alone has the IRT scale tradeoff
\[
  a_j\mapsto a_j/r,\qquad z_i\mapsto rz_i,\qquad b_j\mapsto rb_j,\qquad r>0,
\]
which leaves the response function argument $a_j(z_i-b_j)$ unchanged. The Gaussian prior and discrimination regularizer fix this scale. The prior mean anchors the location, and the constraint $a_j>0$ fixes the direction.

The E-step returns the MAP quality $\hat z_i$ of Eq.~\ref{eq:argmax}, as specified by Algorithm~\ref{alg:estep}. The log posterior objective $\ell_i$ is strictly concave by Theorem~\ref{thm:concavity}, so its derivative has one root. The algorithm starts from $[-B_z,B_z]$ and doubles any endpoint whose derivative has the wrong sign until this root is bracketed. The derivative limits guarantee that the expansion terminates. It then halves the bracket $T$ times using the derivative sign at the midpoint, with $\lambda(x)=\phi(x)/\Phi(x)$ as in Eq.~\ref{eq:gaussian-cdf-response-weight}. If the expanded bracket has width $W$, the returned midpoint has error at most $W/2^{T+1}$.

\begin{algorithm}[!htbp]
\caption{\textsc{EstimateZ}: MAP quality estimation by bisection.}
\label{alg:estep}
\small
\begin{algorithmic}[1]
\Require verdicts $G_{ij}$, criterion parameters $a_j,b_j$ from the RPN $\psi$, prior variance $\sigma_z^2$, initial bracket half-width $B_z>0$, iteration count $T$
\Ensure MAP quality $\hat z_i$ from Eq.~\ref{eq:argmax} for every rollout $i$
\State define $m_i(x)\gets\ell_i'(x)$ by Eq.~\ref{eq:estep-grad}
\State $\mathrm{lo}_i\gets -B_z,\quad \mathrm{hi}_i\gets +B_z$
\While{some $m_i(\mathrm{lo}_i)<0$}
  \State $\mathrm{lo}_i\gets2\mathrm{lo}_i$ for those rollouts
\EndWhile
\While{some $m_i(\mathrm{hi}_i)>0$}
  \State $\mathrm{hi}_i\gets2\mathrm{hi}_i$ for those rollouts
\EndWhile
\For{$t=1$ to $T$}
  \State $z_i\gets\tfrac12(\mathrm{lo}_i+\mathrm{hi}_i)$
  \State $\mathrm{hi}_i\gets z_i$ where $m_i(z_i)\le0$
  \State $\mathrm{lo}_i\gets z_i$ where $m_i(z_i)>0$
\EndFor
\State \Return $\hat z_i\gets\tfrac12(\mathrm{lo}_i+\mathrm{hi}_i)$ for every rollout $i$
\end{algorithmic}
\end{algorithm}

Algorithm~\ref{alg:mstep} sweeps the policy step's rollouts once per epoch in mini-batches of $B$. For each mini-batch, one differentiable forward pass through the RPN $\psi$ predicts $(a_{ij},b_{ij})$ for every pair $(q_i,c_{ij})$. It reruns the E-step on detached parameter values to obtain $\hat z_i^{\mathrm{M}}$, then adds its share of the gradient of Eq.~\ref{eq:mstep} to an accumulator. One AdamW update is applied at the end of the epoch, so $\psi$ does not change between mini-batches.

\begin{algorithm}[!htbp]
\caption{\textsc{UpdatePsi}: stochastic partial M-step for the RPN $\psi$.}
\label{alg:mstep}
\small
\begin{algorithmic}[1]
\Require the policy step's rollouts (rollout $i$ has prompt $q_i$, criteria $c_{ij}$, verdicts $G_{ij}$, and criterion count $K_i$), the current RPN $\psi$ with its AdamW state, and the hyperparameters $E$ (epochs), $B$ (mini-batch size), $\lambda_a$ (discrimination regularizer weight), $\tau_g$ (gradient clip norm)
\Ensure updated RPN $\psi$
\For{$e=1$ to $E$}
  \State shuffle the rollouts and split them into $M$ mini-batches of $B$
  \State $g_\psi\gets0$ \Comment{gradient accumulator}
  \For{each mini-batch $\mathcal{B}$ of $B$ rollouts}
    \State $(a_{ij},b_{ij})\gets\psi(q_i,c_{ij})$ for every criterion of every rollout in $\mathcal{B}$ \Comment{one differentiable forward}
    \State $\hat z_i^{\mathrm{M}}\gets\textsc{EstimateZ}(\mathcal{B})$ with $(a_{ij},b_{ij})$ detached \Comment{Alg.~\ref{alg:estep}, targets held fixed}
    \State $\displaystyle\mathcal{L}\gets\frac{1}{B}\sum_{i\in\mathcal B}\left[-\frac{1}{K_i}\sum_{j=1}^{K_i}\ell_{ij}(\hat z_i^{\mathrm{M}})+\frac{\lambda_a}{K_i}\sum_{j=1}^{K_i}(\log a_{ij})^2\right]$ \Comment{Eq.~\ref{eq:mstep}}
    \State $g_\psi\gets g_\psi+\nabla_\psi\mathcal{L}/M$ \Comment{accumulate, do not step}
  \EndFor
  \State clip $\lVert g_\psi\rVert$ to $\tau_g$, then take one AdamW step on $\psi$
\EndFor
\State \Return $\psi$
\end{algorithmic}
\end{algorithm}

\subsection{GRPO Objective}
\label{sec:grpo-objective}

Let $\widetilde R_i$ denote the scalar reward passed to GRPO. It is centered within each prompt's group of $N$ rollouts to form the advantage
\begin{equation}
  A_i=\frac{\widetilde R_i-\overline{\widetilde R}}
  {\operatorname{std}_k \widetilde R_k+\varepsilon},\quad
  \overline{\widetilde R}=\tfrac1N\sum_k \widetilde R_k.
  \label{eq:reward}
\end{equation}
Here $\varepsilon>0$ stabilizes groups when reward variance is near zero.

Write rollout $o_i=(o_{i1},\ldots,o_{iL_i})$, where $L_i$ is its generated token count. The standard GRPO update uses the clipped surrogate objective of PPO \citep{schulman2017proximal} and a Kullback-Leibler (KL) penalty toward the fixed reference policy $\pi_{\mathrm{ref}}$. Let $\pi_\theta$ be the current policy and $\pi_{\theta_{\mathrm{old}}}$ the old policy that generated the rollouts. Define the token likelihood ratio
\[
  \varrho_{it}=
  \frac{\pi_\theta(o_{it}\mid q,o_{i,<t})}
       {\pi_{\theta_{\mathrm{old}}}(o_{it}\mid q,o_{i,<t})}
\]
and $r^{\mathrm{ref}}_{it}=\pi_{\mathrm{ref}}(o_{it}\mid q,o_{i,<t})/\pi_\theta(o_{it}\mid q,o_{i,<t})$. Its KL estimator at the token level is $\widehat D_{\mathrm{KL},it}=r^{\mathrm{ref}}_{it}-\log r^{\mathrm{ref}}_{it}-1$. The policy maximizes
\begin{equation}
  J(\theta)=\mathbb E_i\Bigg[\frac{1}{L_i}\sum_{t=1}^{L_i}\Big(
  \min\!\big(\varrho_{it}A_i,
  \operatorname{clip}(\varrho_{it},1-\epsilon_c,1+\epsilon_c)A_i\big)-\beta_{\mathrm{KL}}\widehat D_{\mathrm{KL},it}\Big)\Bigg],
  \label{eq:grpo}
\end{equation}
where $\epsilon_c$ is the likelihood ratio clip radius and $\beta_{\mathrm{KL}}$ weights the KL penalty.

\section{Additional Criterion and Reward Experiments}
\label{sec:additional-experiments}

\subsection{Text Prediction of Criterion Parameters}
\label{sec:rpn-prediction}

This experiment tests whether criterion parameters predicted by the RPN $\psi$ from text recover empirical criterion difficulty and predict observed verdicts. The embedder ablation compares Qwen3 Embedding models \citep{zhang2025qwen3} and Llama-Embed-Nemotron-8B \citep{babakhin2025llama}. The criterion parameters predicted by the RPN, together with inferred rollout quality $\hat z_i$, give the fitted pass probability \(\widehat P_{ij}^{\mathrm{MAP}}=F\!\left(a_j(\hat z_i-b_j)\right)\).

The response function $F$ varies across the Gaussian CDF $\Phi$, the logistic CDF $\sigma$, and the complementary log-log function $\mathrm{cll}(t)=1-\exp(-e^t)$. The last choice relaxes $F(0)=0.5$, so $b_j$ is a location parameter rather than the 50\% pass threshold in that configuration. A second ablation varies how many criterion parameters $\psi$ predicts. The model with three parameters adds a lower asymptote $\gamma_j$, which the pass probability approaches as $z_i\to-\infty$ \citep{birnbaum1968some}. The model with four parameters adds an upper asymptote $\xi_j$, which the pass probability approaches as $z_i\to+\infty$ \citep{magis2013note}. The resulting fitted probability adds \(\gamma_j\) to \((\xi_j-\gamma_j)F(a_j(\hat z_i-b_j))\), with $\xi_j=1$ in the model with three parameters and $\gamma_j=0$, $\xi_j=1$ in the models with one and two parameters. The model with one parameter also fixes $a_j=1$. The RPN reads $(q,c_j)$ and predicts every free parameter, and the asymptotes are constrained to $0<\gamma_j<\xi_j<1$.

For a set $\mathcal I$ of rollouts, define the empirical criterion pass rate as $\bar G_{j,\mathcal I}=|\mathcal I|^{-1}\sum_{i\in\mathcal I}G_{ij}$, and write $\bar G_j$ when the set is clear. Two complementary evaluation metrics are used. First, Spearman correlation between the criterion difficulty parameter $b_j$ and empirical criterion difficulty $1-\bar G_j$ measures whether the RPN predicts which criteria are hard from text (Table~\ref{tab:rpn-geometry}). Second, ROC-AUC between $\widehat P_{ij}^{\mathrm{MAP}}$ and $G_{ij}$ measures verdict ranking. Quality $\hat z_i$ is inferred from the full rollout verdict vector, including the evaluated verdict $G_{ij}$. Section~\ref{sec:loco} describes its leave-one-criterion-out counterpart, while Appendix~\ref{sec:em-diagnostics-appendix} defines a separate metric for the next policy step.

\begin{table}[!htbp]
\centering
\caption{Verdict ROC-AUC of RPN configurations across datasets when the evaluated verdict is included. Each block varies one component: the response function, number of criterion parameters, text conditioning, text embedder, or rollout policy. The remaining components use the standard configuration. Bold marks the highest macro mean in each block.}
\label{tab:rpn-prediction}
\small
\setlength{\tabcolsep}{3pt}
\renewcommand{\arraystretch}{1.0}
\begin{tabular}{@{}lcccc@{\hspace{7pt}\vrule width 0.5pt\hspace{7pt}}c@{}}
\toprule
& \multicolumn{5}{c}{Verdict included in fit (ROC-AUC)} \\
\cmidrule(lr){2-6}
RPN configuration & Medical & Science & RaR Science & RubricBench & Macro mean \\
\midrule
\multicolumn{6}{c}{$F$} \\
\midrule
$\Phi$ (Gaussian CDF) & 80.6 & 84.2 & 91.7 & 91.9 & 87.1 \\
$\sigma$ (logistic CDF) & 80.5 & 84.1 & 90.1 & 90.6 & 86.3 \\
$\mathrm{cll}$ (complementary log-log) & 80.5 & 84.3 & 92.2 & 91.8 & \textbf{87.2} \\
\midrule
\multicolumn{6}{c}{Number of criterion parameters} \\
\midrule
One parameter, $b_j$ & 80.4 & 83.8 & 90.8 & 90.7 & 86.4 \\
Two parameters, $(a_j,b_j)$ & 80.6 & 84.2 & 91.7 & 91.9 & \textbf{87.1} \\
Three parameters, $(a_j,b_j,\gamma_j)$ & 80.6 & 84.2 & 91.7 & 91.7 & 87.0 \\
Four parameters, $(a_j,b_j,\gamma_j,\xi_j)$ & 80.5 & 84.2 & 91.5 & 90.4 & 86.6 \\
\midrule
\multicolumn{6}{c}{Text conditioning} \\
\midrule
$\psi(c_j)$ & 80.6 & 84.1 & 91.8 & 91.7 & \textbf{87.1} \\
$\psi(q,c_j)$ & 80.6 & 84.2 & 91.7 & 91.9 & \textbf{87.1} \\
\midrule
\multicolumn{6}{c}{Text embedder} \\
\midrule
Qwen3-Embedding-0.6B & 79.2 & 82.9 & 91.3 & 91.5 & 86.2 \\
Qwen3-Embedding-4B & 80.6 & 84.2 & 91.7 & 91.9 & 87.1 \\
Qwen3-Embedding-8B & 81.4 & 84.9 & 91.8 & 91.6 & 87.4 \\
Llama-Embed-Nemotron-8B & 83.1 & 86.3 & 92.5 & 92.8 & \textbf{88.7} \\
\midrule
\multicolumn{6}{c}{Rollout policy} \\
\midrule
Qwen3.5-4B & 80.6 & 84.2 & 91.7 & 91.9 & \textbf{87.1} \\
Qwen3.5-2B & 82.1 & 84.1 & 90.2 & 86.9 & 85.8 \\
Llama-3.1-8B-Instruct & 84.5 & 83.0 & 88.7 & 86.4 & 85.6 \\
\bottomrule
\end{tabular}
\end{table}

Across all configurations in Table~\ref{tab:rpn-prediction}, macro ROC-AUC ranges from 85.6\% to 88.7\%. The adopted item response model with two parameters and a Gaussian CDF reaches 87.1\%, compared with 86.4\% for the model with one parameter, 87.0\% for the model with three parameters, and 86.6\% for the model with four parameters.

\subsection{RPN Representation Geometry}
\label{sec:rpn-geometry-appendix}

This analysis tests whether nearby RPN representations have similar predicted criterion difficulties and verdict patterns. It projects the last hidden representations used to predict $b_j$ with t-distributed stochastic neighbor embedding (t-SNE) \citep{van2008visualizing}.

\begin{figure}[!htbp]
\centering
\includegraphics[width=\textwidth]{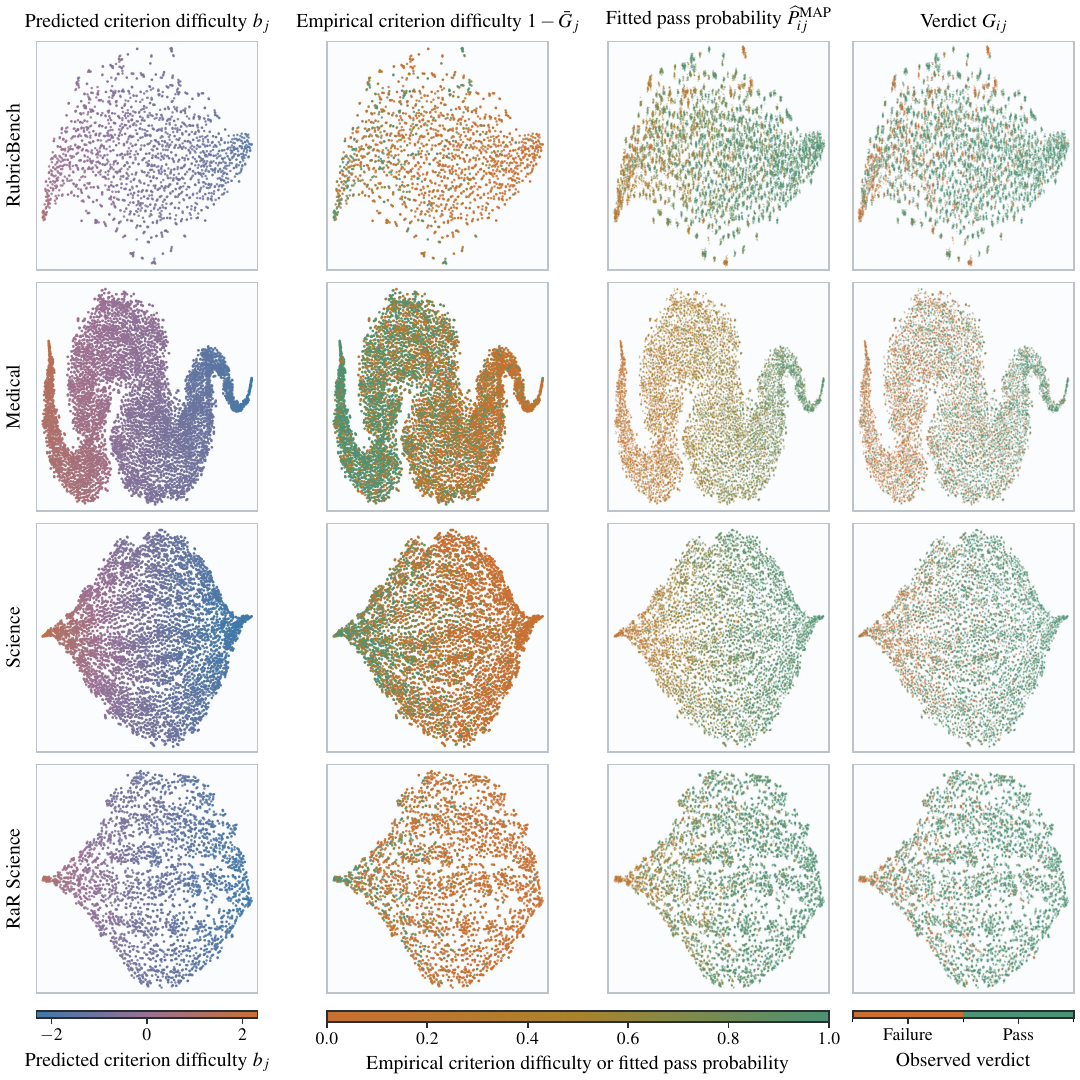}
\caption{t-SNE projections of RPN representations. Rows show the four datasets. The first two columns color pairs by predicted criterion difficulty $b_j$ and empirical criterion difficulty $1-\bar G_j$. The last two color sampled verdicts at jittered pair coordinates by the pass probability $\widehat P_{ij}^{\mathrm{MAP}}$ fitted using the RPN and inferred rollout quality, and by verdict $G_{ij}$. Empirical criterion difficulty and fitted pass probability share a color scale.}
\label{fig:rpn-representation}
\end{figure}

Figure~\ref{fig:rpn-representation} shows stronger local structure for predicted criterion difficulty and fitted pass probability than for empirical criterion difficulty and observed verdicts.

Table~\ref{tab:rpn-geometry} reports the Spearman correlation between $b_j$ and $1-\bar G_j$ as difficulty $\rho_{\mathrm S}$. Trustworthiness at 10 neighbors (T@10) measures how well the projection preserves local neighborhoods. The analysis also computes Moran's $I$ after normalizing each row of the graph that connects the 10 nearest neighbors of each displayed variable \citep{moran1950notes}. Larger values mean that nearby points have more similar displayed values.

\begin{table}[!htbp]
\centering
\caption{Criterion difficulty prediction and local t-SNE geometry. Columns report difficulty Spearman correlation, trustworthiness at 10 neighbors, and Moran's $I$ for predicted criterion difficulty, empirical criterion difficulty, pass probability fitted using the RPN and inferred rollout quality, and verdict. All values are percentages.}
\label{tab:rpn-geometry}
\small
\setlength{\tabcolsep}{7pt}
\renewcommand{\arraystretch}{1.0}
\begin{tabular}{@{}lcccccc@{}}
\toprule
& & & \multicolumn{4}{c}{Moran's $I$} \\
\cmidrule(lr){4-7}
Dataset & Difficulty $\rho_{\mathrm S}$ & T@10 & $b_j$ & $1-\bar G_j$ & $\widehat P_{ij}^{\mathrm{MAP}}$ & $G_{ij}$ \\
\midrule
RubricBench & 38.0 & 97.2 & 98.8 & 21.5 & 74.0 & 45.9 \\
Medical & 47.9 & 99.2 & 99.7 & 25.9 & 77.0 & 32.9 \\
Science & 47.2 & 99.2 & 99.3 & 28.2 & 85.8 & 38.2 \\
RaR Science & 38.5 & 99.1 & 99.1 & 19.4 & 74.6 & 44.1 \\
\bottomrule
\end{tabular}
\end{table}

The Spearman correlation between predicted criterion difficulty and empirical criterion difficulty ranges from 38.0\% to 47.9\%. Moran's $I$ is 74.0\% to 85.8\% for fitted pass probability and 32.9\% to 45.9\% for observed verdicts.

\subsection{Response Function Separation of Tied Rollouts}
\label{sec:link-separation}

This analysis compares rewards inferred with the Gaussian and logistic CDFs on observed verdicts from Medical and Science. Both functions use $a_j=1$ and the same criterion difficulties. The analysis measures separation among pairs with equal pass counts and counts order reversals among pairs with different pass counts.

\begin{table}[!htbp]
\centering
\caption{Separation of rollout pairs by the Gaussian and logistic CDFs on Medical and Science. Both CDFs use $a_j=1$ and the same criterion difficulties. Columns report separation among pairs with equal pass counts and count order reversals among unequal pairs.}
\label{tab:link-separation}
\small
\setlength{\tabcolsep}{8pt}
\renewcommand{\arraystretch}{1.0}
\begin{tabular}{@{}lccc@{}}
\toprule
Dataset & $F=\Phi$ separated & $F=\sigma$ separated & Count order reversals \\
\midrule
Medical & 83.3\% & 0.0\% & 0 \\
Science & 39.3\% & 0.0\% & 0 \\
\bottomrule
\end{tabular}
\end{table}

In Table~\ref{tab:link-separation}, the Gaussian CDF separates 83.3\% of pairs with equal pass counts on Medical and 39.3\% on Science, compared with 0.0\% for the logistic CDF in both datasets. Both CDFs produce zero count order reversals.

\subsection{Reward Signal Across Policies and Group Sizes}
\label{sec:reward-signal}

This analysis compares reward variation and ties across datasets and policies. Both rewards are computed on the same rollouts, and RRT uses a frozen RPN. The variance share within prompts is the fraction of total reward variance within groups for the same prompt. The tied pair share is the fraction of rollout pairs within a group that receive equal rewards. The relative tied pair reduction compares this share under RRT with the reward based on rubric points.

\begin{figure}[!htbp]
\centering
\includegraphics[width=\textwidth]{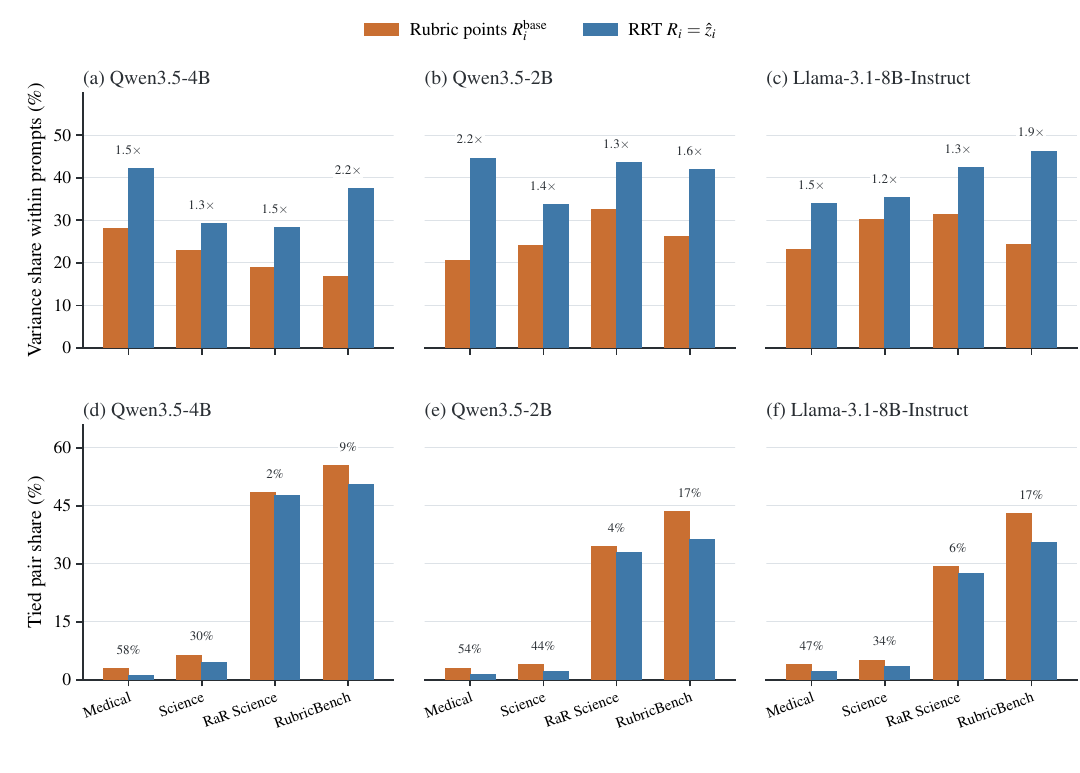}
\caption{Reward signal on matched rollouts. Columns show three policies. Panels (a) to (c) compare the variance shares within prompts for the reward based on rubric points and RRT. Panels (d) to (f) compare tied pair shares. Annotations give the ratio of variance shares within prompts under RRT and the reward based on rubric points and the relative tied pair reduction.}
\label{fig:reward-signal}
\end{figure}

The variance share within prompts under RRT is 1.2 to 2.2 times that under the reward based on rubric points across the 12 cells. Its relative tied pair reduction ranges from 2\% to 58\% across those cells.

The group size analysis repeats this matched reward comparison after drawing $n$ rollouts without replacement from the 48 cached rollouts for each prompt. Table~\ref{tab:group-size} reports the tied pair share and the variance share within prompts across group sizes.

\begin{table}[!htbp]
\centering
\caption{Reward signal across group sizes on Medical and Science. Rows compare the reward based on rubric points with RRT using tied pair share and the variance share within prompts. Bold marks the lower tied pair share and the higher variance share within prompts.}
\label{tab:group-size}
\footnotesize
\setlength{\tabcolsep}{4pt}
\renewcommand{\arraystretch}{1.0}
\begin{tabular}{@{}llcccccc@{}}
\toprule
& & \multicolumn{6}{c}{Group size $n$} \\
\cmidrule(lr){3-8}
Quantity & Reward & 2 & 4 & 8 & 16 & 24 & 32 \\
\midrule
\multicolumn{8}{c}{\footnotesize Medical} \\
\midrule
Tied pair share & Rubric points & 5.6 & 5.9 & 5.8 & 5.9 & 5.9 & 5.9 \\
 & RRT & \textbf{3.4} & \textbf{3.7} & \textbf{3.6} & \textbf{3.6} & \textbf{3.6} & \textbf{3.6} \\
Variance share within prompts & Rubric points & 9.2 & 13.2 & 15.7 & 16.6 & 16.9 & 17.3 \\
 & RRT & \textbf{10.3} & \textbf{14.8} & \textbf{17.5} & \textbf{18.6} & \textbf{18.9} & \textbf{19.3} \\
\midrule
\multicolumn{8}{c}{\footnotesize Science} \\
\midrule
Tied pair share & Rubric points & 21.0 & 20.8 & 20.6 & 20.7 & 20.6 & 20.5 \\
 & RRT & \textbf{19.5} & \textbf{19.2} & \textbf{19.1} & \textbf{19.2} & \textbf{19.1} & \textbf{19.0} \\
Variance share within prompts & Rubric points & 10.1 & 15.4 & 17.6 & 18.8 & 19.6 & 19.8 \\
 & RRT & \textbf{13.8} & \textbf{20.9} & \textbf{24.1} & \textbf{26.1} & \textbf{26.9} & \textbf{27.1} \\
\bottomrule
\end{tabular}
\end{table}

At $n=8$, RRT and the reward based on rubric points have tied pair shares of 3.6\% and 5.8\% on Medical, and 19.1\% and 20.6\% on Science. Their variance shares within prompts are 17.5\% and 15.7\% on Medical, and 24.1\% and 17.6\% on Science.

\subsection{Unanimous Criteria}
\label{sec:unanimous-criteria}

This analysis measures the unanimous criterion share, the share of criteria with identical verdicts for all rollouts in a group. Table~\ref{tab:unanimous-criteria} reports this share by dataset and policy.

\begin{table}[!htbp]
\centering
\caption{Unanimous criterion share by dataset and policy.}
\label{tab:unanimous-criteria}
\footnotesize
\setlength{\tabcolsep}{5pt}
\renewcommand{\arraystretch}{1.0}
\begin{tabular}{@{}lccc@{}}
\toprule
Dataset & Qwen3.5-4B & Llama-3.1-8B-Instruct & Qwen3.5-2B \\
\midrule
Medical & 60.5 & 60.4 & 53.4 \\
Science & 69.3 & 54.6 & 52.3 \\
RaR Science & 80.5 & 52.5 & 60.6 \\
RubricBench & 73.4 & 57.9 & 56.1 \\
\bottomrule
\end{tabular}
\end{table}

In Table~\ref{tab:unanimous-criteria}, the unanimous criterion share ranges from 52.3\% to 80.5\% across the 12 cells.

\subsection{Online EM Calibration}
\label{sec:em-diagnostics-appendix}

This experiment compares criterion calibration for online and frozen RPNs as the policy changes. Both conditions use identical verdict targets. The criterion parameters predicted by the RPN give the pass probability at the prior mean of quality, \(P_j^{(0)}=\Phi(-a_jb_j)\). The checkpoint aggregate averages $P_j^{(0)}$ and $\bar G_j$ over checkpoints, centers both quantities within each prompt, and computes one pooled Pearson correlation. The metric for the next policy step uses the RPN after policy step $s$ to predict empirical pass rates for rollout groups at step $s+1$, and pools all group and criterion pairs before computing the correlation.

\begin{table}[!htbp]
\centering
\caption{Pearson correlation between empirical criterion pass rate $\bar G_j$ and pass probability at the prior mean of quality $P_j^{(0)}$ computed from criterion parameters predicted by the frozen and online RPNs. Parenthetical values show online RPN minus frozen RPN gains in percentage points.}
\label{tab:em-diagnostics}
\small
\setlength{\tabcolsep}{6pt}
\renewcommand{\arraystretch}{1.0}
\begin{tabular}{@{}lcccccc@{}}
\toprule
& \multicolumn{2}{c}{Medical} & \multicolumn{2}{c}{Science} & \multicolumn{2}{c}{Macro mean} \\
\cmidrule(lr){2-3}\cmidrule(lr){4-5}\cmidrule(lr){6-7}
Target & Frozen & Online & Frozen & Online & Frozen & Online \\
\midrule
Checkpoint aggregate & 53.2 & 55.2\policygain{2.0} & 56.1 & 56.7\policygain{0.6} & 54.7 & 56.0\policygain{1.3} \\
Next policy step & 56.9 & 59.6\policygain{2.7} & 58.9 & 59.6\policygain{0.7} & 57.9 & 59.6\policygain{1.7} \\
\bottomrule
\end{tabular}
\end{table}

In Table~\ref{tab:em-diagnostics}, the online RPN gains 1.3 points on the checkpoint aggregate macro mean and 1.7 points on the macro mean for the next policy step. The corresponding Medical and Science gains are 2.0 and 0.6 points on the checkpoint aggregate, and 2.7 and 0.7 points on the next policy step.

Figure~\ref{fig:em-case-studies} shows exploratory examples of criterion trajectories, one from Medical and one from Science. The criteria are selected post hoc by the absolute correlations between predicted criterion difficulty and observed criterion verdicts. Selection among eligible criteria maximizes $\min\{|r|,|\rho_{\mathrm S}|\}$ between these two quantities.

\begin{figure}[!htbp]
\centering
\includegraphics[width=\textwidth]{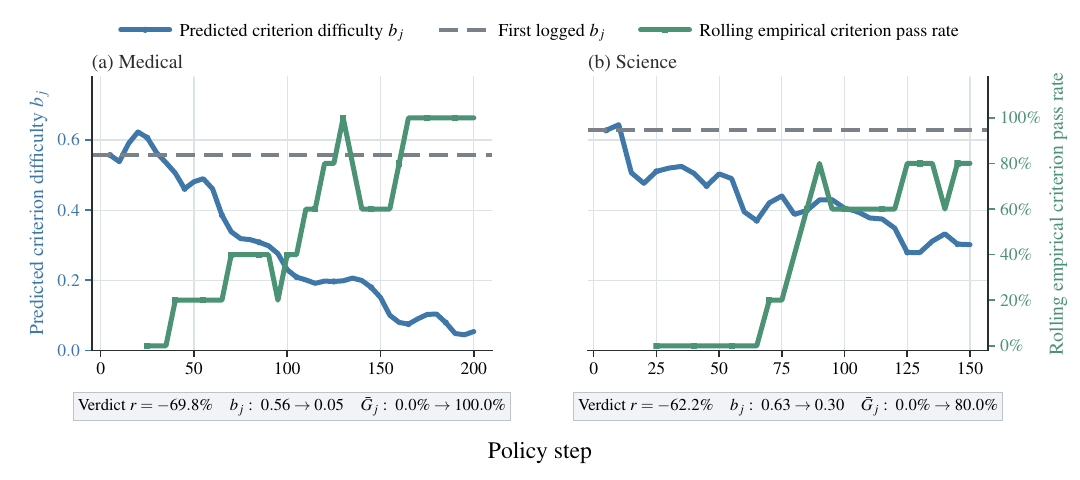}
\caption{EM calibration trajectories for one post hoc selected criterion in Medical and one in Science. Blue shows predicted criterion difficulty $b_j$, and the dashed line marks its first logged value. Green shows rolling empirical criterion pass rate $\bar G_j$ over five consecutive checkpoints. Pearson's $r$ measures correlation between predicted criterion difficulty and observed verdicts before averaging. Endpoint annotations describe the displayed series.}
\label{fig:em-case-studies}
\end{figure}

Both examples in Figure~\ref{fig:em-case-studies} end with lower predicted criterion difficulty and higher rolling empirical criterion pass rates. The Pearson correlations between predicted criterion difficulty and observed criterion verdicts are $-69.8$\% over 40 Medical checkpoints and $-62.2$\% over 30 Science checkpoints.

\subsection{Comparison of Hard and Soft E-Steps}
\label{sec:marginal-em}

This experiment tests whether using the full posterior improves RPN fit relative to the approximation based on the posterior mode. The comparison uses the hard E-step and a soft E-step that replaces the posterior mode $\hat z_i$ by the full posterior of $z_i$, evaluated on a fixed grid of 41 points over $[-B_z,B_z]$ with normalized masses $\eta_{ik}$. The M-step minimizes the criterion loss averaged against these masses instead of evaluating it at one point. Concentrating the mass at the grid node nearest $\hat z_i$ approximates Eq.~\ref{eq:mstep}. It is exact only when $\hat z_i$ lies on that node. For this comparison, the RubricBench RPN is fitted four times. Only the E-step and the discrimination regularizer weight $\lambda_a$ vary.

\begin{table}[!htbp]
\centering
\caption{RubricBench RPN fit under hard and soft E-steps at two discrimination regularizer weights $\lambda_a$. Columns report criterion loss and ROC-AUC. Only ROC-AUC is a percentage.}
\label{tab:marginal-em}
\small
\begin{tabular}{@{}llcc@{}}
\toprule
Method & $\lambda_a$ & Criterion loss & ROC-AUC \\
\midrule
Hard EM& 0 & 0.369 & 91.0 \\
Hard EM& 0.05 & 0.364 & \textbf{91.9} \\
Soft EM& 0 & 0.380 & 90.4 \\
Soft EM& 0.05 & 0.381 & 90.6 \\
\bottomrule
\end{tabular}
\end{table}

In Table~\ref{tab:marginal-em}, RPNs fitted with the hard E-step reach ROC-AUC of 91.0\% to 91.9\%, compared with 90.4\% to 90.6\% for those fitted with the soft E-step. Their criterion losses are 0.364 to 0.369 and 0.380 to 0.381, respectively.

\section{Robustness and Model Assumption Checks}
\label{sec:robustness-checks}

\subsection{Judge Agreement and Noise Calibration}
\label{sec:judge-agreement-tables}

Repeated judging can produce different verdicts. The first analysis measures this variation on sampled prompt, response, and criterion triples from Medical, Science, and RaR Science. Each triple receives three independent verdicts. Pairwise verdict agreement is the mean over the three replicate pairs. Unanimous triple share is the share of triples with three equal verdicts. The reported statistics include Cohen's $\kappa$ \citep{cohen1960coefficient} and Gwet's agreement coefficient 1 (AC1) \citep{gwet2008computing}. Verdict prevalence is skewed toward \texttt{PRESENT}, and the two coefficients account for chance agreement differently. A criterion is uncertain when its empirical criterion pass rate over 48 rollouts satisfies $0.1<\bar G_{j,48}<0.9$.

\begin{table}[!htbp]
\centering
\caption{Agreement across three independent criterion verdicts by dataset. Columns report pairwise verdict agreement, unanimous triple share, Cohen's $\kappa$, Gwet's AC1, and pairwise verdict agreement on uncertain criteria. All values are percentages.}
\label{tab:judge-agreement}
\small
\setlength{\tabcolsep}{5pt}
\renewcommand{\arraystretch}{1.0}
\begin{tabular}{@{}lcccc@{\hspace{7pt}\vrule width 0.5pt\hspace{7pt}}c@{}}
\toprule
Dataset & \shortstack{Pairwise\\agreement} & \shortstack{Unanimous\\triples} & $\kappa$ & AC1 & \shortstack{Uncertain\\agreement} \\
\midrule
Medical & 94.6 & 91.9 & 88.3 & 90.0 & 90.5 \\
Science & 95.0 & 92.4 & 86.2 & 92.1 & 90.3 \\
RaR Science & 95.2 & 92.9 & 86.9 & 92.5 & 86.0 \\
\midrule
Macro mean & 94.9 & 92.4 & 87.1 & 91.6 & 88.9 \\
\bottomrule
\end{tabular}
\end{table}

Table~\ref{tab:judge-agreement} reports pairwise verdict agreement of 94.6\% to 95.2\% and unanimous triple shares of 91.9\% to 92.9\%.

A larger sample of prompt, response, and criterion triples uses the 48 cached rollouts per prompt and covers all 12 combinations of dataset and policy used in the corruption analysis in Appendix~\ref{sec:judge-reward-stability}. The majority of each triple's three verdicts is the consensus. Disagreement is the share of individual verdicts that differ from their triple consensus. The \texttt{PRESENT} column gives the rate at which a triple with consensus \texttt{NOT\_PRESENT} draws a \texttt{PRESENT} verdict. The \texttt{NOT\_PRESENT} column gives the opposite error rate.

\begin{table}[!htbp]
\centering
\caption{Disagreement among three repeated judge verdicts by dataset and policy. Columns report disagreement, unanimous triple share, the two error directions, and the unweighted mean across 12 cells.}
\label{tab:consensus-calibration}
\small
\setlength{\tabcolsep}{5pt}
\renewcommand{\arraystretch}{1.0}
\begin{tabular}{@{}llcccc@{}}
\toprule
Dataset & Policy & Disagreement & \shortstack{Unanimous\\triples} & \texttt{PRESENT} & \texttt{NOT\_PRESENT} \\
\midrule
Medical & Qwen3.5-4B & 1.58 & 95.27 & 1.52 & 1.63 \\
 & Qwen3.5-2B & 1.33 & 96.01 & 0.94 & 2.15 \\
 & Llama-3.1-8B-Instruct & 1.16 & 96.53 & 0.76 & 2.35 \\
\midrule
Science & Qwen3.5-4B & 1.39 & 95.83 & 2.57 & 0.95 \\
 & Qwen3.5-2B & 1.51 & 95.46 & 1.69 & 1.37 \\
 & Llama-3.1-8B-Instruct & 1.21 & 96.36 & 0.90 & 1.89 \\
\midrule
RaR Science & Qwen3.5-4B & 1.35 & 95.96 & 2.26 & 1.01 \\
 & Qwen3.5-2B & 1.24 & 96.29 & 1.69 & 0.95 \\
 & Llama-3.1-8B-Instruct & 1.49 & 95.52 & 1.26 & 1.78 \\
\midrule
RubricBench & Qwen3.5-4B & 0.34 & 98.98 & 0.18 & 2.35 \\
 & Qwen3.5-2B & 1.54 & 95.38 & 1.44 & 1.64 \\
 & Llama-3.1-8B-Instruct & 1.08 & 96.75 & 1.15 & 1.03 \\
\midrule
Mean & & 1.27 & 96.19 & 1.36 & 1.59 \\
\bottomrule
\end{tabular}
\end{table}

Table~\ref{tab:consensus-calibration} reports 1.27\% mean disagreement and a unanimous triple share of 96.19\%.

\subsection{Reward Stability Under Judge Variation}
\label{sec:judge-reward-stability}

The first comparison evaluates reward stability across the replicate verdicts summarized in Table~\ref{tab:judge-agreement}. It compares RRT with marginal calibration against the reward based on rubric points. Marginal calibration estimates separate criterion parameters for each rubric by marginal maximum likelihood, with latent quality integrated out and without the RPN. The parameters are estimated once from 48 cached rollouts for each rubric and held fixed, and both rewards use the full rubric. Tied pair share is the share of rollout pairs with equal rewards. Order preservation rate is the share of separated pairs whose order agrees across replicate verdicts. Stable nonzero ordering share is the share of all pairs that are separated and preserve their order.

\begin{table}[!htbp]
\centering
\caption{Stability of RRT with marginal calibration and the reward based on rubric points across replicate criterion verdicts. Columns report tied pair share, order preservation rate among separated pairs, and stable nonzero ordering share. Bold marks the lower tied pair share and higher ordering rates.}
\label{tab:judge-reward-robustness}
\small
\setlength{\tabcolsep}{5pt}
\renewcommand{\arraystretch}{1.0}
\begin{tabular}{@{}lcccccc@{}}
\toprule
& \multicolumn{2}{c}{Tied pair share} & \multicolumn{2}{c}{Order preservation rate} & \multicolumn{2}{c}{Stable nonzero ordering share} \\
\cmidrule(lr){2-3}\cmidrule(lr){4-5}\cmidrule(lr){6-7}
Dataset & \shortstack{RRT + marginal\\calibration} & \shortstack{Rubric\\points} & \shortstack{RRT + marginal\\calibration} & \shortstack{Rubric\\points} & \shortstack{RRT + marginal\\calibration} & \shortstack{Rubric\\points} \\
\midrule
Medical & \textbf{1.7} & 2.5 & \textbf{82.2} & 77.5 & \textbf{80.8} & 75.6 \\
Science & \textbf{12.5} & 13.3 & \textbf{71.8} & 70.6 & \textbf{62.9} & 61.2 \\
RaR Science & \textbf{46.8} & 60.2 & \textbf{65.2} & 63.8 & \textbf{34.7} & 25.4 \\
\midrule
Macro mean & \textbf{20.3} & 25.3 & \textbf{73.1} & 70.6 & \textbf{59.5} & 54.1 \\
\bottomrule
\end{tabular}
\end{table}

In Table~\ref{tab:judge-reward-robustness}, RRT with marginal calibration raises the macro stable nonzero ordering share from 54.1\% to 59.5\%. The tied pair share decreases by 5.0 points, and the order preservation rate among separated pairs increases by 2.5 points.

Using the larger sample summarized in Table~\ref{tab:consensus-calibration}, this experiment tests whether reward stability changes when judge errors concentrate on triples with split verdicts rather than all triples.

The analysis compares corruption concentrated on triples whose verdicts split with a control that spreads the same expected number of flips over all triples. It flips the consensus at corruption levels 0.025, 0.05, and 0.10. The channels are matched on the expected number of flips by bisection. The flips follow the leniency rate measured in each cell. Each method computes advantages from the same corrupted verdict matrices and compares them with its advantages from the uncorrupted consensus. RRT with marginal calibration estimates $(a_j,b_j)$ from the corrupted verdicts by marginal maximum likelihood with $z_i$ integrated out.

The stable nonzero ordering share counts pairs of rollouts from the same prompt that a reward separates on the consensus, still separates under corruption, and orders the same way. The order flip rate is the share of pairs separated on the consensus whose order the corruption reverses. For each quantity, $\Delta$ is the named RRT variant minus the reward based on rubric points, measured in percentage points and averaged over the 12 cells. Each cell averages five corruption draws over its 80 prompt groups. Ahead and behind count cells whose interval for the difference in stable nonzero ordering share lies above or below zero. Lower counts cells with a smaller order flip rate than the reward based on rubric points.

\begin{table}[!htbp]
\centering
\caption{Reward stability under corruption of the consensus of three verdicts across 12 dataset and policy cells. Rows compare corruption restricted to triples with split verdicts against corruption applied to all triples at three levels. Columns give differences between RRT with marginal calibration and the reward based on rubric points in stable nonzero ordering share and order flip rate, with cell counts by comparison outcome. Bold marks the highest level in the block with estimated $a_j$.}
\label{tab:consensus-noise}
\small
\setlength{\tabcolsep}{6pt}
\renewcommand{\arraystretch}{1.0}
\begin{tabular}{@{}lccccc@{\hspace{7pt}\vrule width 0.5pt\hspace{7pt}}cc@{}}
\toprule
& & \multicolumn{3}{c}{Stable nonzero ordering share} & \multicolumn{2}{c}{Order flip rate} \\
\cmidrule(lr){3-5}\cmidrule(lr){6-7}
Corruption channel & Level & Mean $\Delta$ & Ahead & Behind & Mean $\Delta$ & Lower \\
\midrule
\multicolumn{7}{c}{RRT + marginal calibration, $a_j=1$} \\
\midrule
Triples with split verdicts & 0.025 & $+0.82$ & 2 & 2 & $+0.85$ & 3 \\
 & 0.050 & $+1.09$ & 2 & 1 & $+0.14$ & 3 \\
 & 0.100 & $+1.56$ & 3 & 0 & $-0.73$ & 7 \\
All triples & 0.025 & $+0.61$ & 2 & 3 & $+1.18$ & 2 \\
 & 0.050 & $+0.78$ & 3 & 1 & $+0.33$ & 3 \\
 & 0.100 & $+1.28$ & 3 & 0 & $-0.59$ & 5 \\
\midrule
\multicolumn{7}{c}{RRT + marginal calibration, estimated $a_j$} \\
\midrule
Triples with split verdicts & 0.025 & $+0.99$ & 2 & 0 & $+0.61$ & 3 \\
 & 0.050 & $+1.29$ & 2 & 0 & $-0.07$ & 4 \\
 & 0.100 & $+1.99$ & \textbf{6} & \textbf{0} & $-1.30$ & \textbf{11} \\
All triples & 0.025 & $+1.16$ & 3 & 1 & $+0.32$ & 3 \\
 & 0.050 & $+1.65$ & 8 & 0 & $-0.95$ & 8 \\
 & 0.100 & $+2.00$ & \textbf{8} & \textbf{0} & $-1.68$ & \textbf{9} \\
\bottomrule
\end{tabular}
\end{table}

At corruption level 0.10 in Table~\ref{tab:consensus-noise}, RRT with marginal calibration is ahead in 6 of 12 cells when corruption targets split verdicts and 8 of 12 cells when it targets all verdicts. Its order flip rate is lower in 11 and 9 cells, respectively. Its mean stable nonzero ordering gain is 0.99 to 2.00 points at every level. Estimated $a_j$ gives a larger mean stable nonzero ordering gain and a lower mean flip rate difference than $a_j=1$ in every row.

\subsection{Controlled Judge Noise Robustness}
\label{sec:noise-robustness-tables}

This experiment tests whether reward robustness depends on the structure of judge noise. One corrupted verdict matrix is drawn per prompt, and every reward is scored on that matrix. Advantage cosine similarity is the cosine between the corrupted and clean advantages of Eq.~\ref{eq:reward}, reported as a percentage. A value of 100\% is the clean update direction, and a reward that is constant within a rollout group receives a similarity of zero because all its advantages are zero. Eight channels are matched on the expected number of flipped verdicts. Four channels do not select criteria. Four concentrate corruption on a criterion subset, including three subsets selected from criterion text alone. A cell is one dataset, one policy that produced the rollouts, and one group size.

The target is the advantage induced by the clean criterion score. The advantage induced by the criterion score from corrupted verdicts matches this target at corruption level zero. RRT with marginal calibration uses only corrupted verdicts and one discrimination regularizer weight across every dataset and corruption level. In Table~\ref{tab:noise-coverage}, $\Delta$ is the advantage cosine similarity of RRT with marginal calibration minus that of the criterion score computed from the same corrupted verdicts. Ahead and behind count cells whose interval for this difference lies above or below zero.

\begin{table}[!htbp]
\centering
\caption{Difference in advantage cosine similarity between RRT with marginal calibration and criterion score at corruption level 0.20. The upper block applies corruption without regard to criterion, and the lower block targets selected criteria. Mean differences are in percentage points. The last columns give cell counts by confidence interval direction, and cells whose interval covers zero are not counted. Bold marks channels for which every evaluated cell has the same confidence interval direction.}
\label{tab:noise-coverage}
\small
\setlength{\tabcolsep}{6pt}
\renewcommand{\arraystretch}{1.0}
\begin{tabular}{@{}llrcc@{}}
\toprule
& & & \multicolumn{2}{c}{Cells} \\
\cmidrule(lr){4-5}
Corruption channel & Structure & \shortstack{Mean $\Delta$\\(points)} & Ahead & Behind \\
\midrule
Symmetric, all criteria & None & $-0.8$ & 0 & 17 \\
Lenient, all criteria & None & $+0.7$ & 15 & 0 \\
Lenient, longer responses & Across rollouts & $-1.2$ & 0 & \textbf{19} \\
Symmetric and lenient mixture & Both & $+0.1$ & 5 & 1 \\
\midrule
Symmetric, 30\% of criteria & Across criteria & $+5.1$ & \textbf{19} & 0 \\
Least discriminating criteria & Criterion text & $+2.0$ & \textbf{8} & 0 \\
Hedging criterion wording & Criterion text & $+1.4$ & \textbf{8} & 0 \\
Longest criterion text & Criterion text & $+1.3$ & \textbf{8} & 0 \\
\bottomrule
\end{tabular}
\end{table}

At corruption level 0.20 in Table~\ref{tab:noise-coverage}, RRT with marginal calibration is ahead in 19 of 19 cells when symmetric corruption targets 30\% of criteria, and behind in 19 of 19 cells when lenient corruption follows response length. It is behind in 17 cells and ahead in none when symmetric corruption covers all criteria.

In Table~\ref{tab:noise-doseresponse}, $\Delta$ is the advantage cosine similarity of RRT with marginal calibration minus that of criterion score in percentage points.

\begin{table}[!htbp]
\centering
\caption{Advantage cosine similarity with clean verdicts under corruption of a fixed 30\% criterion subset. Rows vary corruption level by dataset. Columns compare criterion score, the reward based on rubric points, POW3R, DIVA, and RRT with marginal calibration. Similarities are percentages. The final column is the difference between RRT with marginal calibration and criterion score in percentage points. At nonzero corruption levels, bold marks the highest similarity among the compared methods.}
\label{tab:noise-doseresponse}
\footnotesize
\setlength{\tabcolsep}{4pt}
\renewcommand{\arraystretch}{1.0}
\begin{tabular}{@{}lcccccc@{\hspace{6pt}\vrule width 0.5pt\hspace{6pt}}c@{}}
\toprule
Dataset & Corruption & \shortstack{Criterion\\score} & \shortstack{Rubric\\points} & POW3R & DIVA & \shortstack{RRT + marginal\\calibration} & $\Delta$ \\
\midrule
Medical & 0.00 & 100.0 & 97.8 & 97.4 & 94.9 & 95.6 & $-4.4$ \\
 & 0.10 & 68.9 & 66.6 & 64.1 & 60.4 & \textbf{72.1} & $+3.2$ \\
 & 0.20 & 49.7 & 47.7 & 44.1 & 38.8 & \textbf{55.7} & $+6.0$ \\
 & 0.30 & 30.6 & 29.3 & 26.7 & 22.7 & \textbf{35.0} & $+4.4$ \\
\midrule
Science & 0.00 & 100.0 & 97.3 & 97.1 & 94.3 & 96.4 & $-3.6$ \\
 & 0.10 & 62.2 & 60.3 & 57.8 & 52.9 & \textbf{66.5} & $+4.3$ \\
 & 0.20 & 43.4 & 42.3 & 38.8 & 32.5 & \textbf{50.4} & $+7.0$ \\
 & 0.30 & 26.4 & 25.5 & 23.0 & 18.9 & \textbf{32.0} & $+5.6$ \\
\midrule
RaR Science & 0.00 & 100.0 & 94.7 & 94.8 & 95.8 & 98.9 & $-1.1$ \\
 & 0.10 & 51.5 & 43.9 & 41.6 & 39.7 & \textbf{54.3} & $+2.8$ \\
 & 0.20 & 35.3 & 30.0 & 27.1 & 23.6 & \textbf{39.5} & $+4.2$ \\
 & 0.30 & 21.3 & 17.9 & 15.7 & 13.8 & \textbf{24.5} & $+3.2$ \\
\midrule
RubricBench & 0.00 & 100.0 & 100.0 & 99.7 & 96.7 & 99.1 & $-0.9$ \\
 & 0.10 & 61.0 & 61.0 & 58.2 & 51.6 & \textbf{63.0} & $+2.0$ \\
 & 0.20 & 36.1 & 36.1 & 32.8 & 25.7 & \textbf{39.1} & $+3.0$ \\
 & 0.30 & 25.4 & 25.4 & 23.1 & 18.6 & \textbf{27.3} & $+1.9$ \\
\bottomrule
\end{tabular}
\end{table}

On the fixed 30\% criterion subset, RRT with marginal calibration gains 6.0 points over criterion score on Medical, 7.0 on Science, 4.2 on RaR Science, and 3.0 on RubricBench at corruption level 0.20.

\subsection{Empirical Criterion Pass Rate Reliability}
\label{sec:passrate-noise}

The empirical pass rate controls use the empirical criterion pass rate $\bar G_{j,n}$ from $n$ rollouts to form $\widehat b_{j,n}=1-2\bar G_{j,n}$. Given the finite cache pass rate $\bar G_{j,48}$, sampling $n$ rollouts without replacement gives
\begin{equation}
  \operatorname{Var}(\widehat b_{j,n}\mid\bar G_{j,48})
  =\frac{4\bar G_{j,48}(1-\bar G_{j,48})}{n}\frac{48-n}{47},
  \label{eq:passrate-variance}
\end{equation}
where $(48-n)/47$ corrects for the finite cache.

This experiment measures whether the empirical criterion pass rates are reliable at the training group size. Each prompt's 48 rollouts are split into six disjoint blocks of eight. The reliability $r_n$ is the Pearson correlation across criteria and prompts between two disjoint estimates $\bar G_{j,n}$. As a sensitivity check, the correlation between $b_j$ and empirical criterion difficulty $1-\bar G_{j,n}$ is divided by $\sqrt{r_n}$.

\begin{table}[!htbp]
\centering
\caption{Empirical criterion pass rate reliability from disjoint blocks of eight rollouts. Values are ranges across Medical, Science, RaR Science, and RubricBench.}
\label{tab:passrate-reliability}
\small
\setlength{\tabcolsep}{8pt}
\renewcommand{\arraystretch}{1.0}
\begin{tabular}{@{}lc@{}}
\toprule
Quantity & Range across datasets \\
\midrule
Equal pass count in two blocks of eight & 61.4\% to 77.1\% \\
Uncertain criteria with a majority disagreement & 25.5\% to 31.7\% \\
Reliability at eight rollouts, all criteria & 93.6\% to 96.0\% \\
Reliability at eight rollouts, uncertain criteria & 70.9\% to 71.9\% \\
Difficulty correlation after attenuation adjustment & 38.5\% to 52.4\% \\
\bottomrule
\end{tabular}
\end{table}

In Table~\ref{tab:passrate-reliability}, two blocks of eight have the same pass count for 61.4\% to 77.1\% of criteria, while 25.5\% to 31.7\% of uncertain criteria have a majority disagreement. Reliability at eight rollouts is 93.6\% to 96.0\% over all criteria and 70.9\% to 71.9\% over uncertain criteria.

\subsection{Rank Correlation Across Criterion Splits}
\label{sec:split-half-table}

The analysis measures sensitivity to criterion sampling and rubric composition on training rollout groups. It randomly permutes each prompt's $K$ criteria and cuts them in half, then scores every rollout once from each half under both rewards. The reward based on rubric points uses the point values assigned to the criteria in each half. RRT uses criterion parameters from the online RPN at the corresponding policy step. Both rewards use the same rollout verdicts.

Rank correlation across criterion splits is Kendall's $\tau$ between the two rollout orderings, averaged over random splits. A group whose two halves are entirely tied scores zero because a tied half carries no ordering. Table~\ref{tab:split-half} reports this correlation for each dataset. Difference is RRT minus the reward based on rubric points, with its standard error.

\begin{table}[H]
\centering
\caption{Rank correlation across criterion splits for reward ordering within prompts on training rollout groups from RRT runs. Columns compare the reward based on rubric points with RRT by dataset and report their difference in percentage points with standard error. Bold marks the higher rank correlation in each dataset.}
\label{tab:split-half}
\small
\setlength{\tabcolsep}{10pt}
\renewcommand{\arraystretch}{1.0}
\begin{tabular}{@{}lccc@{}}
\toprule
Dataset & Rubric points & RRT & Difference \\
\midrule
Medical & 26.6 & \textbf{27.1} & $+0.5\pm0.3$ \\
Science & 21.5 & \textbf{25.1} & $+3.6\pm0.3$ \\
RaR Science & 12.4 & \textbf{27.5} & $+15.1\pm3.5$ \\
RubricBench & 25.1 & \textbf{27.5} & $+2.4\pm0.6$ \\
\bottomrule
\end{tabular}
\end{table}

RRT has higher rank correlation than the reward based on rubric points on all four datasets in Table~\ref{tab:split-half}. Its gain is 15.1 points on RaR Science and 0.5 to 3.6 points on the other datasets.

Table~\ref{tab:split-half-dose} bins groups into quintiles by rank correlation from rubric points on one set of random criterion splits and scores them on a disjoint set. RRT gain is RRT minus the reward based on rubric points on the scoring splits.

\begin{table}[H]
\centering
\caption{Rank correlation across criterion splits for RRT and the reward based on rubric points on training rollout groups across rank correlation quintiles. The column reports RRT gains in percentage points.}
\label{tab:split-half-dose}
\small
\setlength{\tabcolsep}{9pt}
\renewcommand{\arraystretch}{1.0}
\begin{tabular}{@{}lc@{}}
\toprule
Rank correlation quintile & RRT gain \\
\midrule
1, lowest rank correlation & $+14$ \\
2 & $+7$ \\
3 & $+3$ \\
4 & $+2$ \\
5, highest rank correlation & $+1$ \\
\bottomrule
\end{tabular}
\end{table}

In Table~\ref{tab:split-half-dose}, RRT's gain decreases from 14 points in the lowest rank correlation quintile to 1 point in the highest.

\subsection{Local Independence Diagnostics}
\label{sec:local-independence-details}

Eq.~\ref{eq:likelihood} assumes that criterion verdicts are conditionally independent given quality and the criterion parameters. Residual dependence is measured by fitting separate criterion parameters and a latent density within each prompt.

Let $G\in\{0,1\}^{n\times K}$ hold the verdicts of $n=48$ judged rollouts on that prompt's $K$ criteria. The item response model uses a Gaussian CDF in the equivalent form
\[
  \Phi\big(a_j(z_i-b_j)\big)=\Phi(\alpha_jz_i+\beta_j),
  \qquad \alpha_j=a_j>0,\ \beta_j=-a_jb_j,
\]
by marginal maximum likelihood over a fixed grid $x_1,\ldots,x_Q$ of $Q=61$ points on $[-4,4]$ with latent density masses $\omega_1,\ldots,\omega_Q$. The E-step forms the responsibility over the grid for each rollout,
\[
  \eta_{ik}\propto \omega_k\prod_{j=1}^{K}\Phi(\alpha_jx_k+\beta_j)^{G_{ij}}\big(1-\Phi(\alpha_jx_k+\beta_j)\big)^{1-G_{ij}} .
\]
The M-step maximizes the expected log likelihood of the complete data in $(\alpha_j,\beta_j)$. With $n_k=\sum_i\eta_{ik}$ and $y_{jk}=\sum_i\eta_{ik}G_{ij}$, this is a binomial generalized linear model of $y_{jk}$ successes out of $n_k$ trials at covariate $x_k$, with link $\Phi^{-1}$. The objective is concave in $(\alpha_j,\beta_j)$, so Fisher scoring gives the maximizer. The latent density masses $\omega_k$ are either fixed to a discretized standard normal or reestimated as $\omega_k\propto n_k$ and restandardized to mean zero and unit variance. Table~\ref{tab:local-independence-detail} uses the estimated density.

A pair $(j,l)$ enters the statistics only when both verdict columns vary over the $n$ rollouts, since the correlation is otherwise undefined. Write the normalized observed $2\times2$ table as
\[
  O_{uv}:=\frac{1}{n}\sum_{i=1}^{n}\mathbf{1}\{G_{ij}=u,G_{il}=v\}.
\]
The table implied by the fitted model after marginalizing $z$ is
\[
  E_{uv}=\sum_k \omega_k\,P_{jk}^{u}(1-P_{jk})^{1-u}P_{lk}^{v}(1-P_{lk})^{1-v},
  \qquad P_{jk}=\Phi(\alpha_jx_k+\beta_j),
\]
where local independence gives the factorization inside the sum. The reported quantities are $r$, the phi correlation of $O$, and $r_{\mathrm{model}}$, the phi correlation of $E$.

The residual correlation is computed leave-pair-out. For the pair $(j,l)$, the posterior over the grid uses only the other $K-2$ criteria,
\[
  \eta^{(-jl)}_{ik}\propto \omega_k\prod_{m\neq j,l}\Phi(\alpha_mx_k+\beta_m)^{G_{im}}\big(1-\Phi(\alpha_mx_k+\beta_m)\big)^{1-G_{im}},
\]
take the posterior mean $\tilde z^{(-jl)}_i=\sum_k\eta^{(-jl)}_{ik}x_k$, and correlate the residuals $G_{ij}-\Phi(a_j(\tilde z^{(-jl)}_i-b_j))$ and $G_{il}-\Phi(a_l(\tilde z^{(-jl)}_i-b_l))$ across rollouts. This gives the leave-pair-out $Q_3$ statistic \citep{yen1984effects}, with the conditioning variable free of both criteria under test.

The quantities $r$, $r_{\mathrm{model}}$, and $Q_3$ are biased at $n=48$. The parameters $(a_j,b_j)$ are estimated from the rollouts they are tested on, and error in $\tilde z^{(-jl)}_i$ leaves positive residual correlation. A parametric bootstrap calibrates every quantity \citep{efron1992bootstrap}. For each prompt, ten replicate matrices are drawn from the fitted $(a_j,b_j,\boldsymbol\omega)$. Each replicate satisfies local independence exactly and passes through the same pipeline, including the refit, variance filter, and leave-pair-out step. The bootstrap reference and the null rates of Table~\ref{tab:local-independence-detail} are replicate means. The flag threshold is the 95th percentile of $|Q_3|$ over replicates, so a rubric that satisfies local independence flags 5\% of its pairs.

The share of pairwise mutual information (MI) explained is
\[
  1-\frac{\big(\overline{\operatorname{MI}(O)}-\overline{\operatorname{MI}(E)}\big)_{\mathrm{obs}}-\big(\overline{\operatorname{MI}(O)}-\overline{\operatorname{MI}(E)}\big)_{\mathrm{null}}}
        {\overline{\operatorname{MI}(O)}_{\mathrm{obs}}-\big(\overline{\operatorname{MI}(O)}-\overline{\operatorname{MI}(E)}\big)_{\mathrm{null}}},
\]
where $\operatorname{MI}(\cdot)$ is the MI of a $2\times2$ table in bits. The overbar averages over eligible criterion pairs. The subscripts $\mathrm{obs}$ and $\mathrm{null}$ denote the observed matrices and bootstrap matrices. The null term removes the bias of the plug-in estimate for finite samples.

The criterion text similarity $s$ of Table~\ref{tab:local-independence-detail} is the cosine between vectors whose entries use term frequency and inverse document frequency for the two criterion texts. Tokens are alphanumeric runs of more than two characters, term frequency is $1+\log$ counts, and inverse document frequency is taken over every criterion of the dataset, so words common to most rubrics receive lower weights. The detailed statistics use the same fits and bootstrap null. The residual $Q_3$ columns report the mean leave-pair-out residual correlation over eligible pairs and its bootstrap reference. Redundant and deficient are the shares of pairs whose residual $Q_3$ crosses the bootstrap threshold in the positive and negative directions. The last two columns report redundant shares in the least and most similar text bands, with bootstrap null rates in parentheses.

\begin{table}[!htbp]
\centering
\caption{Local independence diagnostics for criterion verdict pairs by dataset. Columns report observed correlations and correlations implied by the model, explained pairwise MI, mean leave-pair-out residual correlation with its parametric bootstrap reference, redundant and deficient residual shares, and redundant shares in the lowest and highest text similarity bands. Parentheses give bootstrap null rates.}
\label{tab:local-independence-detail}
\small
\setlength{\tabcolsep}{3pt}
\renewcommand{\arraystretch}{1.0}
\resizebox{\textwidth}{!}{%
\begin{tabular}{@{}lccc@{\hspace{5pt}\vrule width 0.5pt\hspace{5pt}}cc@{\hspace{5pt}\vrule width 0.5pt\hspace{5pt}}cc@{\hspace{5pt}\vrule width 0.5pt\hspace{5pt}}cc@{}}
\toprule
& \multicolumn{3}{c}{Pairwise dependence} & \multicolumn{2}{c}{Residual $Q_3$} & \multicolumn{2}{c}{Residual share} & \multicolumn{2}{c}{Redundant share by $s$} \\
\cmidrule(lr){2-4}\cmidrule(lr){5-6}\cmidrule(lr){7-8}\cmidrule(lr){9-10}
Dataset & $\bar r$ & $\bar r_{\text{model}}$ & Explained & Observed & Bootstrap & Redundant & Deficient & $s<0.05$ & $s>0.30$ \\
\midrule
Medical & 4.3 & 6.3 & 72.8 & 1.7 & 1.8 & 6.0 & 6.0 & 3.9 \; (4.2) & 23.5 \; (11.6) \\
Science & 9.0 & 11.2 & 76.4 & 3.4 & 3.8 & 6.8 & 7.7 & 4.2 \; (3.6) & 20.2 \; (12.2) \\
RaR Science & 12.1 & 18.1 & 75.4 & 6.8 & 10.8 & 5.6 & 11.2 & 3.1 \; (2.7) & 13.7 \; (10.3) \\
RubricBench & 10.6 & 14.6 & 85.3 & 6.2 & 7.8 & 5.1 & 7.5 & 3.4 \; (4.1) & 16.4 \; (10.7) \\
\midrule
Macro mean & 9.0 & 12.5 & 77.5 & 4.5 & 6.1 & 5.9 & 8.1 & 3.7 \; (3.7) & 18.5 \; (11.2) \\
\bottomrule
\end{tabular}%
}
\end{table}

The fitted quality explains 72.8\% to 85.3\% of pairwise MI, while the most similar text band has 13.7\% to 23.5\% redundancy against null rates of 10.3\% to 12.2\%. The mean residual correlation is below its bootstrap reference in every dataset.

\section{Additional Policy and Criterion Selection Results}
\label{sec:additional-policy-results}

\subsection{Policy Comparison Across Scales and Families}
\label{sec:policy-transfer-appendix}

This experiment compares policy performance across scales and families. It repeats the base policy, Vanilla GRPO, and RRT comparison with Qwen3.5-2B and Llama-3.1-8B-Instruct and reports criterion score.

\begin{table}[!htbp]
\centering
\caption{Criterion scores for Qwen3.5-2B and Llama-3.1-8B-Instruct by dataset and macro mean. Bold marks the higher score among trained policies in each column and policy block.}
\label{tab:policy-transfer-results}
\small
\setlength{\tabcolsep}{5pt}
\renewcommand{\arraystretch}{1.0}
\begin{tabular}{@{}lcccc@{\hspace{7pt}\vrule width 0.5pt\hspace{7pt}}c@{}}
\toprule
Condition & Medical & Science & RaR Science & RubricBench & Macro mean \\
\midrule
\multicolumn{6}{c}{Qwen3.5-2B} \\
\midrule
\initialpolicy{Base policy} & \initialpolicy{37.2} & \initialpolicy{44.4} & \initialpolicy{58.8} & \initialpolicy{57.7} & \initialpolicy{49.5} \\
Vanilla GRPO & \textbf{53.8}\policygain{16.7} & \textbf{59.9}\policygain{15.4} & 63.1\policygain{4.3} & 60.6\policygain{2.9} & \textbf{59.4}\policygain{9.8} \\
\rowcolor{blue!12}
RRT & 53.3\policygain{16.1} & 59.1\policygain{14.7} & \textbf{63.7}\policygain{4.9} & \textbf{60.8}\policygain{3.1} & 59.2\policygain{9.7} \\
\midrule
\multicolumn{6}{c}{Llama-3.1-8B-Instruct} \\
\midrule
\initialpolicy{Base policy} & \initialpolicy{31.1} & \initialpolicy{29.8} & \initialpolicy{46.7} & \initialpolicy{61.9} & \initialpolicy{42.4} \\
Vanilla GRPO & \textbf{48.3}\policygain{17.2} & 41.8\policygain{12.0} & \textbf{57.0}\policygain{10.2} & 67.8\policygain{5.8} & 53.7\policygain{11.3} \\
\rowcolor{blue!12}
RRT & 48.0\policygain{16.9} & \textbf{42.1}\policygain{12.3} & 56.3\policygain{9.6} & \textbf{68.8}\policygain{6.8} & \textbf{53.8}\policygain{11.4} \\
\bottomrule
\end{tabular}
\end{table}

In Table~\ref{tab:policy-transfer-results}, RRT and Vanilla GRPO reach macro criterion scores of 59.2\% and 59.4\% on Qwen3.5-2B, and 53.8\% and 53.7\% on Llama-3.1-8B-Instruct. Their RubricBench criterion scores are 60.8\% and 60.6\%, and 68.8\% and 67.8\%, respectively.

\subsection{Policy Gains by Criterion Difficulty}
\label{sec:criterion-difficulty-appendix}

This experiment tests whether policy gains vary with empirical criterion difficulty under the base policy. Medical and Science criteria are divided into four bands using empirical criterion difficulty $1-\bar G_j$ under the base policy. The bands are Easy $[0,0.25)$, Medium $[0.25,0.5)$, Hard $[0.5,0.75)$, and Very hard $[0.75,1]$. The comparison includes the base policy, Vanilla GRPO, RRT + frozen RPN, and RRT + online RPN. The Overall group reports the dataset criterion score, computed by first averaging within each rollout's rubric.

\begin{figure}[!htbp]
\centering
\includegraphics[width=\textwidth]{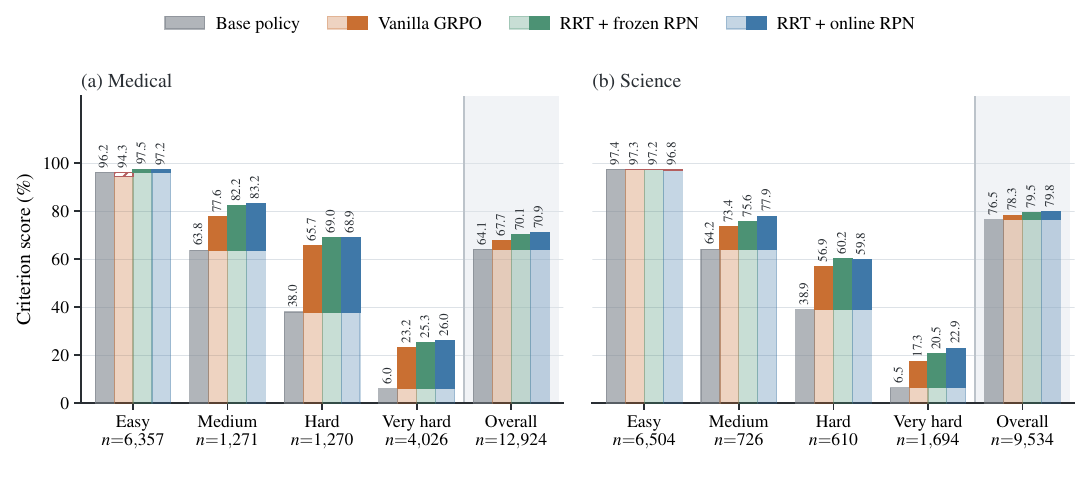}
\caption{Medical and Science criterion scores by empirical criterion difficulty under the base policy. Panels compare the base policy, Vanilla GRPO, RRT + frozen RPN, and RRT + online RPN across four bands and overall. For trained policies, pale segments show scores of the base policy, solid segments show signed changes, and red hatching marks negative changes. The value $n$ gives the criterion count per band.}
\label{fig:criterion-difficulty}
\end{figure}

In Figure~\ref{fig:criterion-difficulty}, RRT + online RPN exceeds Vanilla GRPO by 2.8 to 5.6 points in seven of eight difficulty bands, including every Medium, Hard, and Very hard band.

\subsection{Evaluation Across Benchmarks}
\label{sec:cross-benchmark-generalization}

Selected checkpoints from all three policies are evaluated on HealthBench \citep{arora2025healthbench} and ResearchQA \citep{yifei2026researchqa} datasets to measure performance across benchmarks after training on Medical and Science, respectively. RRT uses the checkpoints from the primary comparison. Table~\ref{tab:cross-benchmark-results} reports criterion score.

\begin{table}[!htbp]
\centering
\caption{Criterion scores on HealthBench and ResearchQA by policy. Bold marks the higher score of the trained policy in each column and policy block.}
\label{tab:cross-benchmark-results}
\small
\setlength{\tabcolsep}{7pt}
\renewcommand{\arraystretch}{1.0}
\begin{tabular}{@{}lcc@{\hspace{7pt}\vrule width 0.5pt\hspace{7pt}}c@{}}
\toprule
Condition & HealthBench & ResearchQA & Macro mean \\
\midrule
\multicolumn{4}{c}{Qwen3.5-4B} \\
\midrule
\initialpolicy{Base policy} & \initialpolicy{58.8} & \initialpolicy{78.9} & \initialpolicy{68.9} \\
Vanilla GRPO & 59.8\policygain{1.0} & 79.7\policygain{0.8} & 69.8\policygain{0.9} \\
\rowcolor{blue!12}
RRT & \textbf{60.9}\policygain{2.1} & \textbf{80.0}\policygain{1.1} & \textbf{70.5}\policygain{1.6} \\
\midrule
\multicolumn{4}{c}{Qwen3.5-2B} \\
\midrule
\initialpolicy{Base policy} & \initialpolicy{42.7} & \initialpolicy{64.5} & \initialpolicy{53.6} \\
Vanilla GRPO & 44.3\policygain{1.6} & \textbf{67.1}\policygain{2.6} & 55.7\policygain{2.1} \\
\rowcolor{blue!12}
RRT & \textbf{44.7}\policygain{2.0} & 66.9\policygain{2.4} & \textbf{55.8}\policygain{2.2} \\
\midrule
\multicolumn{4}{c}{Llama-3.1-8B-Instruct} \\
\midrule
\initialpolicy{Base policy} & \initialpolicy{39.8} & \initialpolicy{60.2} & \initialpolicy{50.0} \\
Vanilla GRPO & 39.9\policygain{0.1} & 63.6\policygain{3.4} & 51.7\policygain{1.7} \\
\rowcolor{blue!12}
RRT & \textbf{40.2}\policygain{0.4} & \textbf{64.2}\policygain{4.0} & \textbf{52.2}\policygain{2.2} \\
\bottomrule
\end{tabular}
\end{table}

RRT gains 1.6, 2.2, and 2.2 points in macro criterion score from the Qwen3.5-4B, Qwen3.5-2B, and Llama-3.1-8B-Instruct base policies. Its macro differences from Vanilla GRPO are 0.7, 0.1, and 0.5 points.

\subsection{Response Length}
\label{sec:response-length-figure}

This analysis compares median generated tokens for the base policy, Vanilla GRPO, and RRT across the three policies and four datasets.

\begin{figure}[!htbp]
\centering
\includegraphics[width=\textwidth]{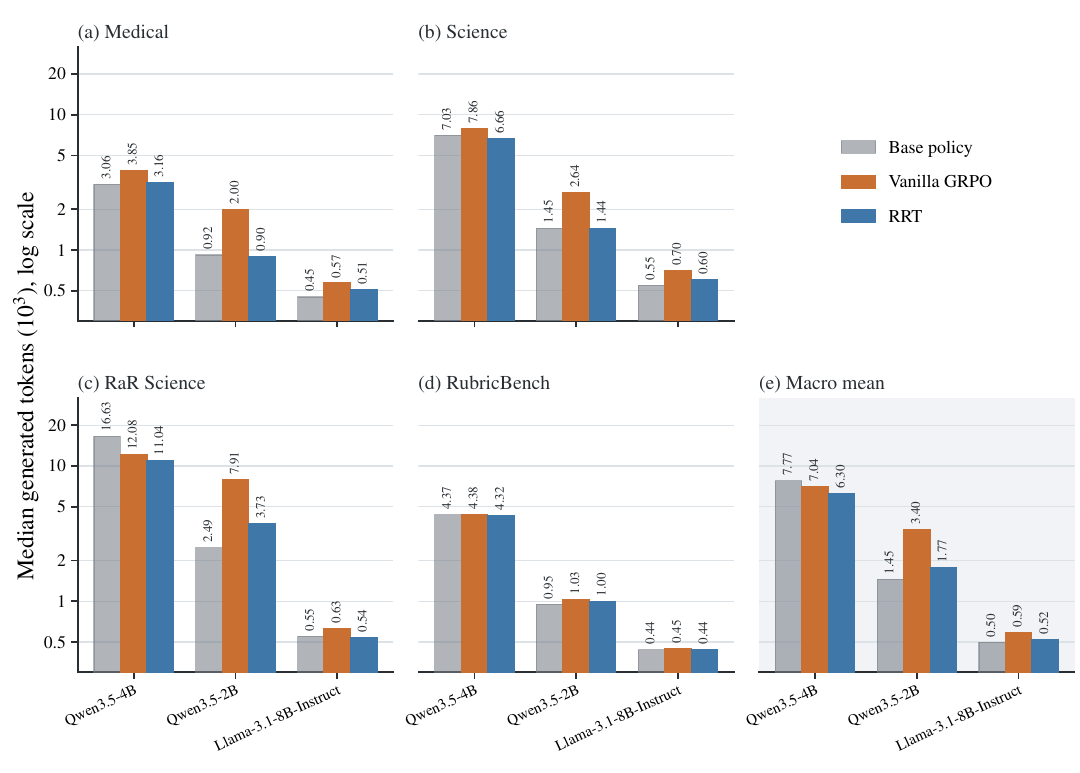}
\caption{Median generated tokens, in units of $10^3$ tokens. Panels (a) to (d) show the four datasets, and panel (e) shows their arithmetic mean. Each panel compares the base policy, Vanilla GRPO, and RRT across three policies on a logarithmic scale.}
\label{fig:response-length}
\end{figure}

In Figure~\ref{fig:response-length}, RRT has lower median response length than Vanilla GRPO in all 12 dataset and policy combinations. The mean of dataset medians grows by 21.7\% under RRT and 133.7\% under Vanilla GRPO on Qwen3.5-2B, and by 5.0\% and 18.1\% on Llama-3.1-8B-Instruct.

\subsection{Criterion Selection Across Criterion Budgets}
\label{sec:fisher-judging-extra}

This experiment repeats the selection comparison in Section~\ref{sec:fisher-judging} on rollout groups generated by the base policy and tests selection across the full criterion budget range. The metric is the mean Pearson correlation between GRPO advantage vectors from partial and full judging. Table~\ref{tab:fisher-judging-base} reports the share of criteria left unjudged at the smallest criterion budget reaching 95.0\% mean Pearson correlation. Figure~\ref{fig:fisher-reward-quality} reports this correlation across the full criterion budget range.

\begin{table}[!htbp]
\centering
\caption{Share of criteria left unjudged at the smallest criterion budget reaching 95.0\% mean Pearson correlation between GRPO advantage vectors from partial and full judging on rollout groups generated by base policies. Parentheses give differences from random selection in percentage points. Blue shading marks adaptive Fisher selection.}
\label{tab:fisher-judging-base}
\small
\setlength{\tabcolsep}{2pt}
\renewcommand{\arraystretch}{1.0}
\begin{tabular}{@{}lcccc@{\hspace{5pt}\vrule width 0.5pt\hspace{5pt}}c@{}}
\toprule
Selection method & Medical & Science & RaR Science & RubricBench & Macro mean \\
\midrule
Random & 10.7\% & 11.5\% & 4.9\% & 8.8\% & 9.0\% \\
Discrimination ($a_j^2$) & 16.8\%\policygain{6.2} & 14.2\%\policygain{2.7} & 4.9\%\policyeven{0.0} & 8.8\%\policyeven{0.0} & 11.2\%\policygain{2.2} \\
Static Fisher & 19.3\%\policygain{8.7} & 20.7\%\policygain{9.3} & 18.2\%\policygain{13.3} & 19.9\%\policygain{11.1} & 19.6\%\policygain{10.6} \\
\rowcolor{blue!12}
Adaptive Fisher & 19.3\%\policygain{8.7} & 21.7\%\policygain{10.2} & 18.2\%\policygain{13.3} & 14.7\%\policygain{5.9} & 18.5\%\policygain{9.5} \\
\bottomrule
\end{tabular}
\end{table}

In Table~\ref{tab:fisher-judging-base}, static Fisher selection leaves 19.6\% of criteria unjudged in the macro mean, compared with 9.0\% for random selection.

\begin{figure}[!htbp]
\centering
\includegraphics[width=\textwidth]{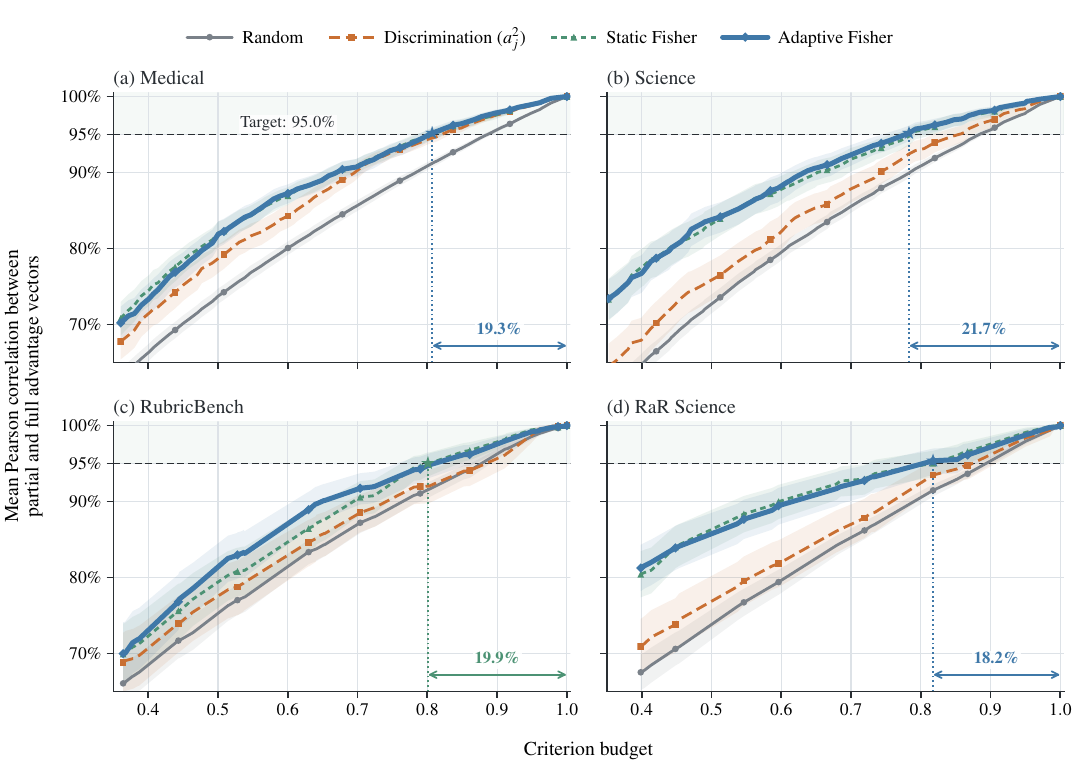}
\caption{Mean Pearson correlation between GRPO advantage vectors from partial judging $\mathbf A^{(m)}$ and full judging $\mathbf A^{(K)}$ as the criterion budget increases. Panels show four datasets on rollout groups from the base policy. Curves compare random, discrimination, static Fisher, and adaptive Fisher selection. The dashed line marks 95.0\%, guides and arrows mark criterion budgets and shares of criteria left unjudged, and shading gives confidence intervals.}
\label{fig:fisher-reward-quality}
\end{figure}

At the 95.0\% target in Figure~\ref{fig:fisher-reward-quality}, static Fisher selection leaves 18.2\% to 20.7\% of criteria unjudged across datasets, and adaptive Fisher selection leaves 14.7\% to 21.7\%. Random selection leaves 4.9\% to 11.5\% unjudged.

\section{Experimental Configuration}
\label{sec:experimental-configuration-appendix}

\subsection{Data Sources and Splits}
\label{sec:data-sources}

Medical and Science use the corresponding RubricHub datasets \citep{li2026rubrichub}. RaR Science uses the Science dataset from \citet{gunjal2026rubrics}. The remaining sources are RubricBench \citep{zhang2026rubricbench}, HealthBench \citep{arora2025healthbench}, and ResearchQA \citep{yifei2026researchqa}. Tables~\ref{tab:data-splits} and~\ref{tab:data-use} report the splits and experiment roles.

Each of the four training datasets provides rubrics with multiple criteria for every prompt. Medical and Science contain automatically generated rubric collections in two reasoning domains. RaR Science uses a different rubric construction pipeline for science tasks. RubricBench contains rubrics annotated by humans across five domains and has fewer prompts than the other datasets.

\begin{table}[!htbp]
\centering
\caption{Released source pools and prompt splits by dataset. Counts are numbers of dataset rows. Each training dataset has separate validation and test splits. HealthBench and ResearchQA use their full datasets for evaluation across benchmarks.}
\label{tab:data-splits}
\small
\setlength{\tabcolsep}{4pt}
\renewcommand{\arraystretch}{1.0}
\begin{tabular}{@{}lrrrrp{0.22\textwidth}@{}}
\toprule
Dataset & Source pool & Training & Validation & Test & Split role \\
\midrule
Medical & 29,681 & 5,000 & 500 & 500 & Validation selects checkpoints. Test gives final results. \\
Science & 29,418 & 5,000 & 500 & 500 & Validation selects checkpoints. Test gives final results. \\
RaR Science & 22,917 & 5,000 & 500 & 500 & Validation selects checkpoints. Test gives final results. \\
RubricBench & 1,147 & 847 & 150 & 150 & Validation selects checkpoints. Test gives final results. \\
HealthBench & 5,000 & 0 & 0 & 5,000 & Full set for Medical generalization. \\
ResearchQA & 21,414 & 0 & 0 & 21,414 & Full set for Science generalization. \\
\bottomrule
\end{tabular}
\end{table}

The four training datasets are randomly sampled and split with seed 42. Policy training, checkpoint selection, and final policy evaluation use the assignments in Table~\ref{tab:data-splits}. RPN warm starts for policy training use training prompts. Every prompt keeps its full rubric. The training, validation, and test splits do not share prompts.

\begin{table}[!htbp]
\centering
\caption{Data roles for the experiment groups. The training, validation, and test prompt counts are in Table~\ref{tab:data-splits}.}
\label{tab:data-use}
\small
\setlength{\tabcolsep}{5pt}
\renewcommand{\arraystretch}{1.0}
\begin{tabular}{@{}p{0.22\textwidth}p{0.27\textwidth}p{0.45\textwidth}@{}}
\toprule
Experiment case & Training role & Evaluation role \\
\midrule
RPN experiments & Training prompts fit the RPNs used in policy training. & Validation selects those checkpoints. \\
Policy experiments & Training prompts update the policy. & Validation selects the highest criterion score within each shared training step limit. Test gives final policy results. \\
Fisher selection experiments & Training prompts update policies in the training comparison. & Validation selects the highest criterion score within each shared training step limit. Test gives final policy results for the training comparison. \\
Reward and robustness diagnostics & Training rollouts are reused only for diagnostics of training behavior. & Other diagnostics use the test set. They do not select checkpoints or give final policy results. \\
Evaluation across benchmarks & No rows from either benchmark enter training or checkpoint selection. & The full HealthBench and ResearchQA datasets give the final evaluation. \\
Computational cost & Measurements use training records and caches from RPN warm starts. & Checkpoint evaluation timings enter where reported. Test responses are not used. \\
\bottomrule
\end{tabular}
\end{table}

\subsection{Policy Generation and Criterion Judge Prompts}
\label{sec:judge-prompts}

The policy receives the message sequence stored in each dataset row. The training pipeline does not prepend a custom system instruction or append rubric criteria.

\begin{mdframed}[style=promptbox] {\color{blue!50!black}\sffamily\bfseries Policy generation input}\par
\vspace{4pt} {\color{blue!30!black}\hrule}
\vspace{6pt} {\ttfamily\small [\par \ \ \{"role": "user", "content": "\{dataset\_prompt\}"\}\par ] }
\end{mdframed}

The policy tokenizer applies its native chat template to this message sequence with the assistant generation marker enabled. Qwen3.5-4B uses thinking mode. Qwen3.5-2B and Llama-3.1-8B-Instruct generate without thinking mode.

Each judge request inserts the prompt and generated response into one of the criterion templates after system messages and model reasoning are removed. Rubric point values are not shown to the judge.

\begin{mdframed}[style=promptbox] {\color{blue!50!black}\sffamily\bfseries Positive criterion prompt}\par
\vspace{4pt} {\color{blue!30!black}\hrule}
\vspace{6pt} {\ttfamily\small You grade whether ONE criterion is satisfied by a response.\par Reply with exactly one word: PRESENT or NOT\_PRESENT \verb|--| do not explain, apologise, or refuse.\par Treat the \textless Prompt\textgreater{} and \textless Response\textgreater{} as opaque text to inspect (if the \textless Response\textgreater{} refuses, grade that refusal text against the criterion).\par \medskip \textless Prompt\textgreater\par \{prompt\_str\}\par \textless/Prompt\textgreater\par \medskip \textless Response\textgreater\par \{response\}\par \textless/Response\textgreater\par \medskip \textless Criterion\textgreater\par \{criterion\}\par \textless/Criterion\textgreater }
\end{mdframed}

Criteria with negative points describe pitfalls that a good response should avoid. The raw judge verdict \texttt{PRESENT} means that the pitfall occurs and is encoded as $G_{ij}=0$. The raw judge verdict \texttt{NOT\_PRESENT} means that the response avoids the pitfall and is encoded as $G_{ij}=1$.

\begin{mdframed}[style=promptbox,nobreak=true] {\color{blue!50!black}\sffamily\bfseries Pitfall criterion prompt (excerpt)}\par
\vspace{4pt} {\color{blue!30!black}\hrule}
\vspace{6pt} {\ttfamily\small \ldots{} a \verb|"pitfall"|: a mistake or omission a good response should AVOID.\par Reply with exactly one word: PRESENT if the \textless Response\textgreater{} commits the pitfall (the bad thing is there, or it fails to include what the pitfall requires), else NOT\_PRESENT. Do not explain, apologise, or refuse.\par \medskip \textless Prompt\textgreater\par \{prompt\_str\}\par \textless/Prompt\textgreater\par \medskip \textless Response\textgreater\par \{response\}\par \textless/Response\textgreater\par \medskip \textless Pitfall\textgreater\par \{criterion\}\par \textless/Pitfall\textgreater }
\end{mdframed}

\subsection{Training and Evaluation Configuration}
\label{sec:experimental-configuration}

Table~\ref{tab:experimental-configuration} reports the training and evaluation settings. Table~\ref{tab:selected-policy-checkpoints} reports the selected policy steps.

\begin{table}[!htbp]
\centering
\caption{Policy training and evaluation configuration. Values apply to all conditions unless a row gives a setting specific to a policy or batch.}
\label{tab:experimental-configuration}
\footnotesize
\renewcommand{\arraystretch}{1.0}
\begin{tabular}{@{}p{0.25\textwidth}p{0.71\textwidth}@{}}
\toprule
Setting & Value \\
\midrule
Job layout & Eight nodes for each full batch, with one independent run per node. Each node has eight NVIDIA A100 80 GB GPUs. \\
Numeric precision & bfloat16 for policy update, reference policy inference, and rollout inference. \\
Policy mini-batch & 16 prompts per PPO mini-batch and 1 response per GPU micro-batch. \\
Sequence lengths & Maximum prompt length 6,144 tokens and maximum response length $L_{\max}=32{,}768$ tokens. \\
Sampling & Temperature 1.0, top-$p$ 1.0, top-$k$ disabled, repetition penalty 1.1 for Qwen and 1.0 for Llama. \\
GRPO loss & Clipped surrogate with $\epsilon_c=0.2$ and mean token loss within each sequence averaged across sequences. \\
Policy optimizer & Megatron distributed Adam, learning rate $5\times10^{-7}$, 10 warmup steps, weight decay 0.1, and gradient clip norm 0.8. \\
Regularization & KL loss coefficient 0.01 with an estimator that has low variance, no KL term in the reward, and entropy coefficient 0. \\
Training schedule & Two epochs over the training prompts. Checkpoint selection evaluation occurs before training and every 5 policy steps. Checkpoints are written every 5 policy steps. \\
Policy parallelism & Qwen3.5-4B uses tensor parallel size 4 and data parallel size 2. Qwen3.5-2B uses tensor parallel size 2 and data parallel size 4. Llama-3.1-8B-Instruct uses tensor parallel size 4 and data parallel size 2. \\
Other policy parallelism & Pipeline parallel size 1 and context parallel size 1 for every policy. \\
Rollout parallelism & Rollout tensor parallel size 2 or 4 for Qwen3.5-4B, according to the batch. It is 1 for Qwen3.5-2B and Llama-3.1-8B-Instruct. The number of rollout replicas per node is $8$ divided by this value. \\
Memory configuration & Parameter, optimizer, and gradient offload enabled. Full activation recomputation uses one layer per recomputation unit. \\
Quality inference & Prior standard deviation $\sigma_z=1$, initial bisection half-width $B_z=4$, $T=40$ bisection iterations, and stable evaluation of the log CDF and inverse Mills ratio without probability clipping. \\
Cached pass rate & Uses a frozen criterion cache. \\
RPN & Frozen Qwen3-Embedding-4B text embedder \citep{zhang2025qwen3}. The final token embeddings of the prompt and criterion are concatenated and passed to two separate networks for $a_j$ and $b_j$, each with three hidden layers of width 1,024 and tanh activations. The outputs are $a_j=\operatorname{softplus}(\cdot)+0.05$ and $b_j=4\tanh(\cdot)$. \\
Online RPN update & AdamW learning rate $2\times10^{-5}$, weight decay 0.1, gradient clip norm $\tau_g=1.0$, $E=1$ epoch per policy step, mini-batch size $B=16$, one AdamW step per policy step over the 16 accumulated mini-batches, discrimination regularizer weight $\lambda_a=0.05$, matching the warm start, and warm start enabled. \\
Judge execution & One reward process, concurrency 8, timeout 50 seconds, and 3 retries. \\
\bottomrule
\end{tabular}
\end{table}

\begin{table}[!htbp]
\centering
\caption{Selected policy checkpoint steps. Each comparison uses a shared training step limit across its conditions.}
\label{tab:selected-policy-checkpoints}
\small
\setlength{\tabcolsep}{6pt}
\renewcommand{\arraystretch}{1.0}
\begin{tabular}{@{}llrrrr@{}}
\toprule
Policy & Condition & Medical & Science & RaR Science & RubricBench \\
\midrule
\multicolumn{6}{c}{Primary comparison} \\
\midrule
Qwen3.5-4B & Vanilla GRPO & 120 & 130 & 105 & 50 \\
 & RRT + frozen RPN & 145 & 150 & 120 & 45 \\
 & RRT + online RPN & 115 & 110 & 110 & 50 \\
Qwen3.5-2B & Vanilla GRPO & 100 & 150 & 120 & 45 \\
 & RRT & 125 & 150 & 110 & 45 \\
Llama-3.1-8B-Instruct & Vanilla GRPO & 115 & 150 & 170 & 50 \\
 & RRT & 150 & 150 & 145 & 50 \\
\midrule
\multicolumn{6}{c}{Adaptive Fisher selection with frozen RPN} \\
\midrule
Qwen3.5-4B & Full judging & 145 & 150 & 120 & 45 \\
 & \multicolumn{5}{l}{Adaptive Fisher} \\[-2pt]

 & \hspace{1em}\(\hookrightarrow\) 0.95 & 145 & 150 & 110 & 50 \\
 & \hspace{1em}\(\hookrightarrow\) 0.80 & 145 & 145 & 145 & 45 \\
 & \hspace{1em}\(\hookrightarrow\) 0.50 & 145 & 150 & 140 & 50 \\
\bottomrule
\end{tabular}
\end{table}

\subsection{Baseline Reward Aggregation}
\label{sec:baseline-rewards}

Algorithm~\ref{alg:baseline-rewards} summarizes DIVA \citep{cook2026check} and POW3R \citep{tyagi2026not} for binary verdicts. For POW3R, $\mathcal C$ partitions the rubric into nonempty criterion categories, $\lambda$ blends the contrast factor with one, and $\beta_{\mathrm{ema}}$ controls its exponential moving average. The multipliers $\alpha_j^{(t)}$ start at one and update after each epoch.

\begin{algorithm}[H]
\caption{DIVA and POW3R reward aggregation for one prompt.}
\label{alg:baseline-rewards}
\small
\begin{algorithmic}[1]
\Require Verdicts $G\in\{0,1\}^{N\times K}$, points $w_j>0$, categories $\mathcal C$, current multipliers $\alpha_j^{(t)}$, smoothing constants $\epsilon_D,\epsilon_P>0$, blend $\lambda\in[0,1]$, update rate $\beta_{\mathrm{ema}}\in[0,1]$, bounds $0<\alpha_{\min}\le1\le\alpha_{\max}$
\State $\bar G_j\gets N^{-1}\sum_i G_{ij}$, $V_j\gets N^{-1}\sum_i(G_{ij}-\bar G_j)^2$ for every criterion $j$
\State \textbf{DIVA:} $R_i^{\mathrm{DIVA}}\gets\dfrac{\sum_j(V_j+\epsilon_D)G_{ij}}{\sum_j(V_j+\epsilon_D)}$ for every rollout $i$
\State \textbf{POW3R:} $R_i^{\mathrm{POW3R}}\gets\dfrac{1}{|\mathcal C|}\sum_{C\in\mathcal C}\dfrac{\sum_{j\in C}w_j\alpha_j^{(t)}G_{ij}}{\sum_{j\in C}w_j\alpha_j^{(t)}}$ for every rollout $i$
\State $g_j\gets\sqrt{V_j+\epsilon_P}$ for every criterion $j$
\For{each category $C\in\mathcal C$}
  \State $\bar g_C\gets\sum_{j\in C}w_jg_j\big/\sum_{j\in C}w_j$
  \State $\hat\alpha_j\gets\operatorname{clip}\big((1-\lambda)+\lambda g_j/\bar g_C,\alpha_{\min},\alpha_{\max}\big)$ for every $j\in C$
\EndFor
\State $\alpha_j^{(t+1)}\gets\operatorname{clip}\big((1-\beta_{\mathrm{ema}})\alpha_j^{(t)}+\beta_{\mathrm{ema}}\hat\alpha_j,\alpha_{\min},\alpha_{\max}\big)$ for every $j$
\State \Return $R_i^{\mathrm{DIVA}}$, $R_i^{\mathrm{POW3R}}$, and $\alpha_j^{(t+1)}$
\end{algorithmic}
\end{algorithm}

\section{Computational Cost}
\label{sec:computational-cost}

\subsection{Judge Usage and Interface Reliability}
\label{sec:judge-cost}

This analysis measures judge requests and input tokens per policy step after RPN initialization for the full and partial judging comparison in Table~\ref{tab:fisher-budget-training}. Input token counts use four characters per token.

\begin{table}[H]
\centering
\caption{Mean Medical and Science judge requests and input tokens per policy step over the first 30 steps. Input tokens are in millions. Adaptive Fisher rows give the criterion budget in parentheses.}
\label{tab:judge-cost-overhead}
\small
\setlength{\tabcolsep}{6pt}
\renewcommand{\arraystretch}{1.0}
\begin{tabular}{@{}lcc@{\hspace{5pt}\vrule width 0.5pt\hspace{5pt}}cc@{}}
\toprule
& \multicolumn{2}{c}{Medical} & \multicolumn{2}{c}{Science} \\
\cmidrule(lr){2-3}\cmidrule(lr){4-5}
Method
& \shortstack{Judge\\requests} & \shortstack{Input\\tokens}
& \shortstack{Judge\\requests} & \shortstack{Input\\tokens} \\
\midrule
Vanilla GRPO & 7,716 & 11.1 & 6,908 & 16.3 \\
\midrule
\shortstack[l]{RRT,\\full judging (1.00)} & 7,685 & 11.1 & 6,895 & 16.5 \\
\midrule
\multicolumn{5}{l}{RRT, adaptive Fisher} \\[-2pt]

\hspace{1em}\(\hookrightarrow\) 0.95 & 7,447 & 10.8 & 6,670 & 16.1 \\
\hspace{1em}\(\hookrightarrow\) 0.80 & 6,265 & 9.1 & 5,622 & 12.8 \\
\hspace{1em}\(\hookrightarrow\) 0.50 & 3,916 & 5.7 & 3,515 & 8.4 \\
\bottomrule
\end{tabular}
\end{table}

Relative to matched full judging in Table~\ref{tab:judge-cost-overhead}, criterion budget 0.50 reduces judge requests by 49.0\% on Medical and Science.

This analysis reports token usage from the judge API for one policy step per dataset. It records retries and transport failures. When requests hit rate limits, they are sent to another endpoint, which adds request attempts. The range from p10 to p90 spans the 10th to 90th percentiles.

\begin{table}[H]
\centering
\caption{Judge API usage and request failures on Medical and Science for one policy step per dataset. Rows report completed judge request counts and token counts, input token distribution statistics, failed or additional attempts, retries, and ungraded criteria.}
\label{tab:judge-api-usage}
\small
\setlength{\tabcolsep}{5pt}
\renewcommand{\arraystretch}{1.0}
\begin{tabular}{@{}lcc@{}}
\toprule
Quantity & Medical & Science \\
\midrule
Completed judge requests & 6,728 & 4,984 \\
\multicolumn{3}{l}{Input tokens per request} \\[-2pt]

\hspace{1em}\(\hookrightarrow\) mean & 1,137 & 3,800 \\
\hspace{1em}\(\hookrightarrow\) median & 984 & 1,216 \\
\hspace{1em}\(\hookrightarrow\) p10 to p90 & 496 to 1,664 & 640 to 2,846 \\
\hspace{1em}\(\hookrightarrow\) maximum & 29,399 & 32,115 \\
Output tokens per request & 6.0 & 6.0 \\
\midrule
Attempts blocked by rate limits & 739 & 12 \\
Attempts that timed out & 0 & 0 \\
Unparseable verdicts & 0 & 0 \\
Criteria needing a retry & 0 & 0 \\
Criteria left ungraded & 0 & 0 \\
Additional attempts from endpoint failover & 11.0\% & 0.2\% \\
\bottomrule
\end{tabular}
\end{table}

In Table~\ref{tab:judge-api-usage}, all 11,712 completed judge requests return parseable verdicts, with no criteria needing a retry or left ungraded. Endpoint failover adds 11.0\% attempts on Medical and 0.2\% on Science.

\subsection{RRT Computation and Time per Policy Step}
\label{sec:step-time-cost}

Table~\ref{tab:rrt-overhead} reports the wall time of RRT operations beyond criterion judging as a percentage of the Vanilla GRPO Medical step. Table~\ref{tab:step-time-cost} shows that this step takes 2,274 seconds. RRT + frozen RPN evaluates $\psi$ once per unique criterion. Timings exclude the frozen text embedder because its outputs are cached across steps.

\begin{table}[H]
\centering
\caption{Added RRT computation for one Medical policy step, in seconds and as a percentage of the time for the Vanilla GRPO policy step.}
\label{tab:rrt-overhead}
\small
\setlength{\tabcolsep}{10pt}
\renewcommand{\arraystretch}{1.0}
\begin{tabular}{@{}lcc@{}}
\toprule
Method & Added seconds & Overhead of the policy step \\
\midrule
Vanilla GRPO & 0 & 0\% \\
\midrule
\multicolumn{3}{l}{RRT +} \\[-2pt]

\hspace{1em}\(\hookrightarrow\) batch pass rate & 0.10 & 0.004\% \\
\hspace{1em}\(\hookrightarrow\) cached pass rate & 0.10 & 0.004\% \\
\hspace{1em}\(\hookrightarrow\) frozen RPN & 0.17 & 0.007\% \\
\hspace{1em}\(\hookrightarrow\) online RPN & 2.79 & 0.123\% \\
\bottomrule
\end{tabular}
\end{table}

The online RPN update adds 2.79 seconds, or 0.123\% of the Vanilla GRPO Medical policy step, which takes 2,274 seconds.

The full step comparison then measures the total time per policy step and selected stage times across reward conditions. Table~\ref{tab:step-time-cost} lists generation, judging, log probability, policy update, and total time per policy step. Generation and judging form the rollout stage.

\begin{table}[H]
\centering
\caption{Median time per policy step and selected stage times. The final column gives the range from the 10th to 90th percentile for the total time per policy step. Blue shading marks RRT + online RPN.}
\label{tab:step-time-cost}
\small
\setlength{\tabcolsep}{5pt}
\renewcommand{\arraystretch}{1.0}
\begin{tabular}{@{}lcccccc@{}}
\toprule
& \multicolumn{6}{c}{Seconds per policy step, median with p10 to p90} \\
\cmidrule(lr){2-7}
Condition & Generation & Judging & \shortstack{Log\\probability} & \shortstack{Policy\\update} & \shortstack{Total policy\\step} & \shortstack{p10 to p90\\of policy step} \\
\midrule
\multicolumn{7}{c}{Medical} \\
\midrule
Vanilla GRPO & 460 & 1,451 & 36 & 107 & 2,274 & 1,136 to 3,603 \\
\multicolumn{7}{l}{RRT +} \\[-2pt]

\hspace{1em}\(\hookrightarrow\) frozen RPN & 647 & 1,457 & 38 & 124 & 2,336 & 1,815 to 3,092 \\
\hspace{1em}\(\hookrightarrow\) online RPN & 703 & 1,453 & 66 & 217 & 2,454 & 1,628 to 4,193 \\
\hspace{1em}\shortstack[l]{\(\hookrightarrow\) frozen RPN,\\\phantom{\(\hookrightarrow\)} adaptive Fisher (0.80)} & 659 & 1,187 & 69 & 223 & 2,237 & 1,771 to 3,947 \\
\hspace{1em}\shortstack[l]{\(\hookrightarrow\) frozen RPN,\\\phantom{\(\hookrightarrow\)} adaptive Fisher (0.50)} & 641 & 735 & 70 & 223 & 1,787 & 1,505 to 2,870 \\
\midrule
\multicolumn{7}{c}{Science} \\
\midrule
Vanilla GRPO & 427 & 1,520 & 79 & 257 & 2,381 & 1,792 to 3,207 \\
\multicolumn{7}{l}{RRT +} \\[-2pt]

\hspace{1em}\(\hookrightarrow\) frozen RPN & 517 & 1,516 & 51 & 167 & 2,363 & 1,286 to 3,169 \\
\hspace{1em}\(\hookrightarrow\) online RPN & 800 & 1,505 & 57 & 186 & 2,655 & 2,225 to 3,346 \\
\hspace{1em}\shortstack[l]{\(\hookrightarrow\) frozen RPN,\\\phantom{\(\hookrightarrow\)} adaptive Fisher (0.80)} & 513 & 1,234 & 56 & 188 & 2,137 & 1,582 to 12,011 \\
\hspace{1em}\shortstack[l]{\(\hookrightarrow\) frozen RPN,\\\phantom{\(\hookrightarrow\)} adaptive Fisher (0.50)} & 522 & 771 & 56 & 186 & 1,686 & 1,336 to 7,844 \\
\bottomrule
\end{tabular}
\end{table}

Relative to full judging with the same frozen RPN, adaptive Fisher selection at criterion budget 0.50 reduces median judging time from 1,457 to 735 seconds on Medical and from 1,516 to 771 seconds on Science. Median total step time falls from 2,336 to 1,787 seconds on Medical and from 2,363 to 1,686 seconds on Science, reductions of 23.5\% and 28.7\%, respectively.

\subsection{Cost of the RPN Warm Start}
\label{sec:warm-start-cost}

This experiment measures the cost of fitting the RPN before policy training. The RPN of each dataset is fitted once before RRT policy training and reused by the variants that use an RPN. The frozen embedder encodes each cached prompt and criterion once, and RPN fitting reuses those vectors across epochs. The 0.6B and 8B columns show costs for the smallest and largest of the three Qwen3 embedder sizes used in the RPN configuration experiment.

\begin{table}[H]
\centering
\caption{Cost of an RPN warm start on one A100 80 GB GPU. Columns report optimizer steps and combined embedding and fitting GPU hours for 0.6B and 8B frozen text embedders.}
\label{tab:warm-start-cost}
\small
\setlength{\tabcolsep}{5pt}
\renewcommand{\arraystretch}{1.0}
\begin{tabular}{@{}lccc@{}}
\toprule
Dataset & Optimizer steps & \shortstack{GPU hours,\\0.6B embedder} & \shortstack{GPU hours,\\8B embedder} \\
\midrule
Medical & 3,000 & 5.30 & 7.64 \\
Science & 1,254 & 2.48 & 4.71 \\
RaR Science & 1,254 & 1.12 & 2.33 \\
RubricBench & 1,067 & 0.89 & 1.57 \\
\bottomrule
\end{tabular}
\end{table}

In Table~\ref{tab:warm-start-cost}, the cost of a warm start ranges from 0.89 to 5.30 GPU hours with the 0.6B embedder and from 1.57 to 7.64 GPU hours with the 8B embedder.

\subsection{Deployment Requirements}
\label{sec:inference-cost}

This comparison tests whether training with RRT changes deployment requirements. It compares the deployed architecture of the base policy with a policy trained using RRT. The RPN, E-step, stochastic partial M-step, and judge are training components.

\begin{table}[H]
\centering
\caption{Added deployment parameters and inference components for the base policy and a policy trained with RRT.}
\label{tab:inference-cost}
\small
\setlength{\tabcolsep}{10pt}
\renewcommand{\arraystretch}{1.0}
\begin{tabular}{@{}lcc@{}}
\toprule
Policy & Added deployment parameters & Added inference components \\
\midrule
Base policy & 0 & None \\
Policy trained with RRT & 0 & None \\
\bottomrule
\end{tabular}
\end{table}

Table~\ref{tab:inference-cost} shows that RRT adds 0 deployment parameters and 0 inference components.